\def\eqref#1{Equation~\ref{#1}}
\def\ceil#1{\left\lceil #1 \right\rceil}
\def\1{\bm{1}}
\DeclareMathAlphabet{\mathsfit}{\encodingdefault}{\sfdefault}{m}{sl}
\SetMathAlphabet{\mathsfit}{bold}{\encodingdefault}{\sfdefault}{bx}{n}
\newcommand{\R}{\mathbb{R}}
\newcommand{\KL}{D_{\mathrm{KL}}}
\setlist[itemize]{leftmargin=5.5mm}
\definecolor{mydarkblue}{rgb}{0,0.08,0.45}
\newif\ifdoublecolumn
\theoremstyle{plain}
\newtheorem{theorem}{Theorem}[section]
\newtheorem{lemma}[theorem]{Lemma}
\theoremstyle{definition}
\newtheorem{definition}[theorem]{Definition}
\theoremstyle{remark}
\newtheorem{remark}[theorem]{Remark}
\DeclareMathOperator{\proj}{proj}
\newtheorem*{statement*}{Statement}
\newcommand{\calL}{\mathcal{L}}
\newcommand{\calS}{\mathcal{S}}
\newcommand{\calR}{\mathcal{R}}
\newcommand{\calV}{\mathcal{V}}
\newcommand{\calA}{\mathcal{A}}
\newcommand{\calQ}{\mathcal{Q}}
\newcommand{\expp}{\mathbb{E}}
\newcommand{\norm}[1]{\left\lVert#1\right\rVert}
\newcommand{\cleandata}{\overline{D}}
\newcommand{\br}[1]{\left({#1}\right)}
\newcommand{\bs}[1]{\left[{#1}\right]}
\newcommand{\abs}[1]{\left\vert#1\right\vert}
\newcommand{\expectover}[2]{\mathbb{E}_{#1}\!\bs{#2}}
\begin{document}
% If your paper is accepted and the title of your paper is very long,
% the style will print as headings an error message. Use the following
% command to supply a shorter title of your paper so that it can be
% used as headings.
%
\runningtitle{Policy Teaching in Learning from Human Preferences}

% If your paper is accepted and the number of authors is large, the
% style will print as headings an error message. Use the following
% command to supply a shorter version of the authors names so that
% they can be used as headings (for example, use only the surnames)
%
\runningauthor{Nika, Nöther, Mandal, Kamalaruban, Singla and Radanović}

\twocolumn[
\aistatstitle{Policy Teaching via Data Poisoning \\ in Learning from Human Preferences}
\aistatsauthor{ Andi Nika\\MPI-SWS \And Jonathan Nöther\\MPI-SWS \And  Debmalya Mandal\\University of Warwick \AND Parameswaran Kamalaruban\\Featurespace \And Adish Singla\\MPI-SWS \And Goran Radanović\\MPI-SWS }
\vspace{0.5cm}]

% \doparttoc % Tell to minitoc to generate a toc for the parts
% \faketableofcontents % Run a fake tableofcontents command for the partocs

% \part{} % Start the document part
% \parttoc

\begin{abstract}
     \looseness-1We study data poisoning attacks in learning from human preferences. More specifically, we consider the problem of teaching/enforcing a target policy $\pi^\dagger$ by synthesizing preference data. We seek to understand the susceptibility of different preference-based learning paradigms to poisoned preference data by analyzing the number of samples required by the attacker to enforce $\pi^\dagger$. We first propose a general data poisoning formulation in learning from human preferences and then study it for two popular paradigms, namely: (a) reinforcement learning from human feedback (RLHF) that operates by learning a reward model using preferences; (b) direct preference optimization (DPO) that directly optimizes policy using preferences. We conduct a theoretical analysis of the effectiveness of data poisoning in a setting where the attacker is allowed to augment a pre-existing dataset and also study its special case where the attacker can synthesize the entire preference dataset from scratch. As our main results, we provide lower/upper bounds on the number of samples required to enforce $\pi^\dagger$. Finally, we discuss the implications of our results in terms of the susceptibility of these learning paradigms under such data poisoning attacks. 
\end{abstract}

\doparttoc
\faketableofcontents

\section{Introduction}\label{sec:introduction}

\looseness-1Learning from human preferences has recently attracted considerable attention, largely due to its effectiveness in fine-tuning large language models (LLMs). Unlike the traditional approach, which uses training data labeled with absolute scores, this method relies on pairs of examples marked with binary signals indicating preference—essentially assigning a relative score to each example. As such, it has proven to be practically beneficial since comparative feedback between two examples is more easily accessible than their individual absolute scores.

Despite its practical advantages, learning from human preferences is susceptible to data poisoning attacks \citep{biggio2012poisoning}, due to the potential presence of malicious human feedback in the training data. Malignant third parties could easily alter preference datasets to steer LLMs toward generating biased or harmful content which can lead to undesired model behaviours. This vulnerability is particularly concerning, given the rapid integration of LLMs into critical applications. It is thus essential to understand these attacks in order to design models with robust guarantees against them. 

These concerns have motivated a lot of recent work, all of which has focused on empirical investigations of poisoning attacks. For example, \cite{wang2023exploitability} demonstrated the effectiveness of ranking poisoning attacks, where attackers manipulate preference labels without altering the underlying data. \cite{shi2023badgpt} showed how an attacker could inject trigger words into training prompts, influencing the LLM's sentiment analysis. \cite{rando2023universal} proposed universal backdoor attacks, embedding hidden functionalities within LLMs, and \cite{baumgartner2024best} investigated data augmentation attacks that inject entirely new preference pairs. 

Despite these attempts, a strong theoretical foundation for understanding the robustness of learning from human preferences against data poisoning attacks remains elusive. Motivated by this, we initiate a theoretical study of data poisoning attacks in learning from human preferences. We seek to analyze these attacks from the attacker’s viewpoint which, in turn, would allow us to identify more robust settings and design effective defenses, ultimately ensuring the security and reliability of preference-based learning in real-world applications.

\looseness-1In particular, we focus on two of the most prominent techniques of learning from human preferences, namely, reinforcement learning from human feedback (RLHF) \citep{stiennon2020learning, ouyang2022training, ziegler2019fine, gao2023scaling, menick2022teaching, glaese2022improving, bai2022training, brown2019learning, shin2023benchmarks} and direct preference optimization (DPO) \citep{DBLP:conf/nips/RafailovSMMEF23}. We consider an attacker that aims to enforce a target policy $\pi^\dagger$ by generating new
preference data $\widehat{D}$. The aim of the attacker is to ensure an RLHF/DPO learner trained on $\widehat{D}$ converges to a policy that is $\epsilon$-close to $\pi^\dagger$. We analyze the number of samples the attacker requires to enforce $\pi^\dagger$ in settings where the attacker can synthesize the entire preference dataset from scratch and where it has to augment a pre-existing dataset $\overline{D}$. Our contributions are summarized below:

%%%%%%%%%%%%%%%%%%%%%%%%%%%%%%%%%%%%%%%%%%%%%%%%%%%%%%%%%%%
\begin{itemize}
% [noitemsep,nolistsep]  
\looseness-1\item \textbf{Attack problem formulation:} We propose a general data poisoning formulation in learning from human preferences and instantiate it for two popular paradigms of RLHF and DPO. In particular, we propose an attack problem formulation using the $\ell_1$-norm as a constraint, motivated by previous formulations suited for reward-based RL with deterministic target policies. We consider two types of RLHF learners: unregularized RLHF and regularized RLHF, depending on whether the learner is restricted to remain close to a given reference policy $\mu$ or not. 
\item \looseness-1\textbf{Attacks on RLHF:} We analyze the sample complexity of the attack in unregularized and regularized RLHF settings. In the unregularized setting, our bounds depend on the state-action space cardinality, the attack granularity parameter $\epsilon$, the covariance matrix of the pre-existing data, and its size $\overline{n}$. In the regularized RLHF setting, the dependence on the state-action space size is replaced with dependence on the regularization temperature $\beta$ and the gap between $\pi^\dagger$ and $\mu$. 
\item\textbf{Attacks on DPO:} Moreover, we provide lower and upper bounds on the attack sample complexity in the DPO setting, by showing that feasible regions for the surrogate problem act both as relaxations and restrictions of the original feasible region, for different values of attack granularity parameter. In this setting, the sample complexity depends on the squared norms of the parameters of $\pi^\dagger$ and $\mu$ and the pre-existing dataset size $\overline{n}$.
\item \textbf{Comparison}: Finally, we derive conclusions on the susceptibility of DPO to attack relative to RLHF, both in the data augmentation and data synthesis setting. Our results suggest that, the farther away the target policy $\pi^\dagger$ is from the reference policy $\mu$ in the parameter space, the stronger the tendency of DPO to remain closer to $\mu$ under attacks, relative to RLHF.
\end{itemize}

\section{Preliminaries and Background on Learning from Human Preferences}\label{sec:setting}

In this section, we provide the necessary technical background that will be used throughout the paper. 

\vspace{-1.5mm}
\subsection{Preliminaries}\label{sec:preliminaries}
\vspace{-1mm}

\looseness-1\paragraph{Environment.} Let $\mathcal{M}=\langle \calS,\calA,P,r,\gamma,\rho\rangle$ be an infinite-horizon discounted Markov decision process (MDP), where $\calS$ denotes the state space and $\calA$ denotes the action space, with cardinalities $S$ and $A$, respectively; $P:\calS\times \calA\times \calS \rightarrow [0,1]$ denotes the transition function, where $P(s,a,s')$ denotes the probability of transitioning to state $s'$ when taking action $a$ in state $s$; the reward is denoted by $r:\calS\times \calA\rightarrow\mathbb{R}$ and the discount factor by $\gamma\in[0,1)$. Finally, we let $\rho$ be the initial state distribution. A contextual bandit can be viewed as a special case of this MDP formalism where the state transitions are independent of the actions taken, and the discount factor $\gamma$ is set to $0$.

\looseness-1\paragraph{Policies and value functions.} Stochastic policies are mappings from states to action simplices, $\pi:\calS\rightarrow\Delta(\calA)$, where $\Delta(\calA)$ is the probability simplex with support in $\calA$. A deterministic policy is a mapping $\pi:\calS\rightarrow \calA$, a special case of stochastic policies. Let $\Pi$ and $\Pi^\textnormal{det}$ denote the set of all stochastic and deterministic policies defined over $\calS$ and $\calA$, respectively. For a policy $\pi$ and state-action pair $(s,a)$, we define $d^\pi_{s,a}(s',a') = (1-\gamma)\sum^\infty_{t=1}\gamma^t \mathbb{P}\left( s_t=s',a_t=a'|s_0=s,a_0=a,\pi\right)$, and $d^\pi_{s}(s',a') = (1-\gamma)\sum^\infty_{t=1}\gamma^t \mathbb{P}\left( s_t=s',a_t=a'|s_0=s,\pi\right)$. Furthermore, we define $d^\pi_{s,a}(s')=\sum_{a'}d^\pi_{s,a}(s',a')\pi(a'|s')$, $d^\pi_s(s')=\sum_{a'}d^\pi_{s}(s',a')\pi(a'|s')$, and $d^\pi_\rho(s',a') = \expp_{s\sim\rho} [d^\pi_s(s',a')]$. We focus on \textit{ergodic} MDPs where every state is reachable under any policy and initial distribution. The value function of a policy $\pi\in\Pi$ with respect to a given reward function $r$ is given as $$V^\pi_r(s)=\expp_{\pi,P}\left[ \sum^\infty_{t=0}\gamma^t r(s_t,a_t) | s_0=s \right]~.$$ Similarly, its action-value function is given as $$ Q^\pi_r(s,a)=\expp_{\pi,P}\left[ \sum^\infty_{t=0}\gamma^t r(s_t,a_t)| s_0=s,a_0=a\right]~.$$
We also define $V^\pi_r(\rho)=\expp_{s\sim\rho}[V^\pi_r(s)]$. A policy $\pi^\star\in\Pi$ is said to be optimal in $\Pi$ with respect to $r$ if $V^{\pi^\star}_r(\rho)\geq V^\pi_r(\rho)$, for all $\pi\in\Pi$. Moreover, $\pi^\star$ is said to be $\epsilon$-robust optimal in $\Pi$ with respect to $r$, for a given $\epsilon >0$, if $V^{\pi^\star}_r(\rho)\geq V^\pi_r(\rho)+\epsilon$, for all $\pi\in\Pi\setminus \{\pi^\star\}$. We define the KL-divergence between two policies $\pi$ and $\pi'$ as $$D_\textnormal{KL}(\pi||\pi')=\sum_{s,a}\rho(s)\pi(a|s)\left(\log\pi(s|a) - \log\pi'(a|s)\right)~.$$ When $\pi,\pi'\in\Pi^\textnormal{det}$, we define $D_\textnormal{KL}(\pi||\pi')=0$ iff $\pi=\pi'$, and $\infty$ otherwise. Later in our analysis, we consider the following classes.
\begin{definition}[\textit{Linear rewards}]\label{def:linear_rewards}
    Let $\phi$ be a $d$-dimensional feature mapping $\phi:\calS\times \calA\rightarrow \mathbb{R}^d$ with $\max_{s,a}\norm{\phi(s,a)}_2\leq 1$. We consider the following class of linear reward functions:
    \begin{equation*}
        \resizebox{\hsize}{!}{$\calR = \{r_\omega : r_\omega(s,a)=\omega^\top\phi(s,a),\forall (s,a)\in \calS\times \calA \textnormal{ for } \omega \in \mathbb{R}^d\}~.$}
    \end{equation*}
\end{definition} 
\begin{definition}[\textit{Loglinear policies}]\label{def:loglinear_policies} Let $\psi$ be a $d'$-dimensional feature mapping $\psi:\calS\times \calA\rightarrow \mathbb{R}^{d'}$ with $\max_{s,a}\norm{\psi(s,a)} \leq 1$. We consider the following class of loglinear policies:\vspace{-1mm}
\begin{align*}
    \Pi^\textnormal{log} & = \Bigg\{\pi_\theta : \pi_\theta(a|s)=\frac{\exp\br{\psi(s,a)^\top\theta}}{\sum_{a'}\exp\br{\psi(s,a')^\top\theta}}, \vspace{-2mm}\\ & \hspace{2cm}  \forall (s,a)\in \calS\times \calA \textnormal{ where } \theta\in\mathbb{R}^{d'}\Bigg\}~.
\end{align*}
\end{definition}\vspace{-3mm}
By overloading the notation, for a trajectory $\tau = (s_0, a_0, s_1, \ldots)$, we define $\phi(\tau) = \sum_{t=0}^\infty \gamma^t \phi(s_t, a_t)$ and $\psi(\tau) = \sum_{t=0}^\infty \gamma^t \psi(s_t, a_t)$. Furthermore, for a policy $\pi$ and state-action pair $(s,a)$, we define $\phi^\pi(s,a) = \sum_{s',a'} d^\pi_{s,a}(s',a') \phi(s',a')$ and $\psi^\pi(s,a) = \sum_{s',a'} d^\pi_{s,a}(s',a') \psi(s',a')$.

% \vspace{-1.5mm}
\subsection{Background on Learning from Human Preferences}\label{sec:learning_from_human_pref}
\vspace{-1mm}

%We formalize this process mathematically. 
\looseness-1In learning from human preferences, a learner refines behavior by iteratively learning from human feedback. Given a preference dataset $D = \{(\tau, \tau', o)\}$, where $o=1$ indicates that trajectory $\tau$ is preferred over $\tau'$, and $o=-1$ indicates the opposite, the learner $L$ outputs a policy $\pi = L(D, \mu)$ that aligns better with human preferences. Here, $\mu$ represents a reference policy, which could be a pre-trained model. Examples of such learners include agents that use reinforcement learning from human feedback (RLHF) $L_\textnormal{RLHF}$ \citep{ziegler2019fine} or direct preference optimization (DPO) $L_\textnormal{DPO}$ \citep{DBLP:conf/nips/RafailovSMMEF23}. Before introducing these methods, we first define the preference model.
\begin{definition}[\textit{Bradley-Terry preference model} \citep{bradley1952rank}]\label{def:bt_model}
The Bradley-Terry preference model w.r.t. a reward function $r$ is defined as follows: for every tuple $(\tau, \tau', o)$, we have $\mathbb{P}\left( o=1 | \tau,\tau'\right) = \sigma\left(\sum^\infty_{t=0}\gamma^tr(s_t,a_t)-\sum^\infty_{t=0}\gamma^tr(s'_t,a'_t)\right)$, where $\sigma(z)=1/(1+\exp(-z))$.% denotes the sigmoid function.
\end{definition}

\paragraph{Reinforcement learning from human feedback.} With access to the dataset $D$ and reference policy $\mu$, RLHF~\citep{ziegler2019fine} proceeds in two phases. In the first phase, a reward function is learned from $D$ using maximum likelihood estimation (MLE) based on the Bradley-Terry preference model. This involves solving the following regularized MLE problem (with regularization parameter $\lambda > 0$):
\begin{align}\label{eq:reward_mle}
    & \min_\omega \; \ell^\omega_\textnormal{RLHF}(D):= -\sum_{(\tau,\tau',o)\in D}\log \nonumber\\ & \sigma\br{o\cdot\sum_{t\geq 0}\gamma^t\br{r_\omega(s_t,a_t)-r_\omega(s'_t,a'_t)}} + \frac{\lambda}{2}\norm{\omega}^2~. \tag{P:RLHF.Reward}
\end{align}
Let $\widehat{\omega}$ denote the solution of Problem~\eqref{eq:reward_mle}. In the second phase, RLHF solves the following regularized policy optimization based on the learned reward $r_{\widehat{\omega}}$ (with regularization parameter $\beta >0$) to obtain optimal solution ${\pi}^\textnormal{reg}_{r_{\widehat{\omega}}}$ in:
\begin{align}\label{eq:regularized_objective}
    \arg\max_{\pi\in\Pi}  \expectover{\pi,P}{\sum_{t\geq 0}\gamma^t\br{ r_{\widehat{\omega}}(s_t,a_t)-\beta\log\frac{\pi(a_t|s_t)}{\mu(a_t|s_t)}}}~. \tag{P:RLHF.Policy}
\end{align} 
We denote by $\calV^\pi_{r_{\widehat{\omega}}}(\rho)$ the regularized objective above.
\vspace{-2mm}
\paragraph{Direct preference optimization.} DPO~\citep{DBLP:conf/nips/RafailovSMMEF23} leverages the relationship between a reward function $r$ and its corresponding regularized optimal policy ${\pi}^\textnormal{reg}_r$ to bypass the reward-learning phase, and directly optimize the policy. However, DPO is currently limited to contextual bandit settings, where we have $\tau = (s, a)$ and $\tau' = (s, a')$. Given $D$ and $\mu$, DPO solves the following optimization problem (with regularization parameter $\lambda > 0$) to obtain optimal solution $\widehat{\theta}$ as:
\begin{align}\label{eq:dpo_loss}
    & \min_\theta \; \ell^\theta_\textnormal{DPO}\left( D\right) :=- \sum_{(\tau,\tau',o)\in D} \log  \nonumber \\ & \sigma \br{o\cdot\br{\beta \log \frac{\pi_\theta(a|s)}{\mu(a|s)} - \beta\log\frac{\pi_\theta(a'|s)}{\mu(a'|s)}}} + \frac{\lambda}{2}\norm{\theta-\theta_\mu}^2\tag{P:DPO}.
\end{align}

\vspace{-5mm}
\section{Preference Poisoning Attack Setup}
\label{sec:pref-poisoning-attack}
\vspace{-1mm}

In this section, we formulate the problem of data poisoning attacks on learning from human preferences. We consider an attacker aiming to impose a target policy $\pi^\dagger$ onto a learner. To achieve this, the attacker modifies the preference dataset so that the learner trained on the altered data produces a policy close to $\pi^\dagger$. Ideally, the attacker should make minimal modifications to achieve this outcome. 

\paragraph{Attack goal and poisoned data.} Let $\overline{D}$ represent the clean preference dataset originating from the environment $\mathcal{M}$, and let $\mu$ denote the reference policy. The learner, denoted by $L$, is trained on the preference dataset. The attacker is modeled as a mapping $\mathcal{P}$, which takes the clean dataset $\overline{D}$, environment $\mathcal{M}$, reference policy $\mu$, and learner $L$ as inputs, and outputs a poisoned dataset $\widehat{D} = \mathcal{P}(\overline{D}, \mu, \mathcal{M}, L)$. We assume the attacker has full knowledge of the environment. Given a margin parameter $\epsilon$, the attacker poisons the dataset $\overline{D}$ so that the learner $L$ trained on the poisoned dataset $\widehat{D}$ converges to a policy within an $\epsilon$ distance of $\pi^\dagger$, i.e., $\norm{\pi^\dagger-\pi^L}_1 \leq \epsilon$, where $\pi^L = L(\widehat{D}, \mu)$ and $\norm{\pi-\pi'}_1=\sum_{s,a}\rho(s)\abs{\pi(a|s)-\pi'(a|a)}$ is the $\ell_1$ norm between given policies $\pi$ and $\pi'$.

\paragraph{Attack cost and formulation.} We consider a setting where the attacker can \textit{augment} a pre-existing preference dataset $\cleandata$. Moreover, we instantiate this setting to the case when $\cleandata$ is empty.
Since adding samples incurs a cost, the attacker aims to enforce $\pi^\dagger$ by minimally adding additional samples. The attacker's optimization problem can be formalized as
\begin{align}\label{op:original-attack-problem}
    \min_D & |D| \;\; \textnormal{such that}\;\; \norm{\pi^\dagger - \pi^L}_1\leq\epsilon~,  \nonumber\\ &  \textnormal{where}\;\; \pi^L = L(\cleandata \cup D, \mu)~.\tag{P:Attack}
\end{align}

\paragraph{Attack feasibility and synthesis.} We consider poisoning attacks on two learning paradigms in learning from human preferences, RLHF and DPO, as introduced in Section~\ref{sec:learning_from_human_pref}. We identify the specific conditions within both learning paradigms that make such attacks feasible:\vspace{-3mm}
\begin{itemize}
\item We consider an attacker that has the synthesis capability to find a trajectory pair $(\tau, \tau')$ for any given $z \in \mathbb{R}^d$ such that $\phi(\tau) - \phi(\tau') = z$ or $\psi(\tau)-\psi(\tau')=z$. In practice, this assumption translates to the following requirement. Given a prompt-response pair $(x,y):=\tau$ and a vector $z\in\mathbb{R}^d$, the attacker can find a response $y'$ such that the difference between $(x,y)$ and $(x,y'):=\tau'$ in the embedding space is approximately equal to $z$, i.e., $\phi(\tau)-\phi(\tau')\approx z$ or $\psi(\tau)-\psi(\tau')\approx z$.\vspace{-2mm}
\item For an unregularized RLHF attack to be feasible, we must have $\pi^\dagger\in \Pi^\textnormal{det}$. For a regularized RLHF and DPO attack to be feasible, we must have $\pi^\dagger,\mu\in\Pi^\textnormal{log}$. 
\end{itemize}\vspace{-3mm}

\looseness-1\paragraph{Notation.} Next we introduce some notation that will be useful in the following sections. As usual, $[n]=\{1,2,\ldots,n\}$ denotes the set of first $n$ natural numbers. $\langle v,z \rangle = v^\top z$ denotes the inner product of two compatible vectors $v$ and $z$. Further, we denote by $\Phi$ and $\Psi$ the matrices with columns $\phi(s,a)^\top$ and $\psi(s,a)^\top$, respectively, for every $(s,a)\in\mathcal{S}\times\mathcal{A}$. We assume throughout that $\Phi$ and $\Psi$ are full rank.
Unless otherwise specified, $\norm{v}$ denotes the Euclidean norm, $\norm{M}$ denotes the spectral norm for matrix $M$, and $M^+$ denotes its pseudoinverse. $\mathbf{0}$ and $\mathbf{1}$ denote the vectors of zeroes and ones, respectively, and $I$ denotes the identity matrix. We denote by $\sigma_{\max}(M)$ and $\sigma_{\min}(M)$ the maximum and minimum eigenvalues (or singular values) of a given square (rectangular) matrix $M$, respectively. Let $\xi_{\max}:=\max_x x/(1+\exp(x))$ and denote by $x^\star=\arg\max_x x/(1+\exp(x))$. Let $\xi^{-1}(a)$ denote the solution to $a=x/(1+\exp(x))$, for any $a< \xi_{\max}$, where the domain of $x/(1+\exp(x))$ is $(-\infty,x^\star]$. Moreover, we define $\xi_1(a)=\xi^{-1}(a\ceil{a/\xi_{\max}}^{-1})$ and $\xi_2(a)=\xi^{-1}(2a\ceil{a/(2\xi_{\max})}^{-1})$.

\section{Poisoning Attacks on RLHF}\label{sec:attack-to-rlhf}

\begin{figure*}
    \centering
    \includegraphics[scale=0.35]{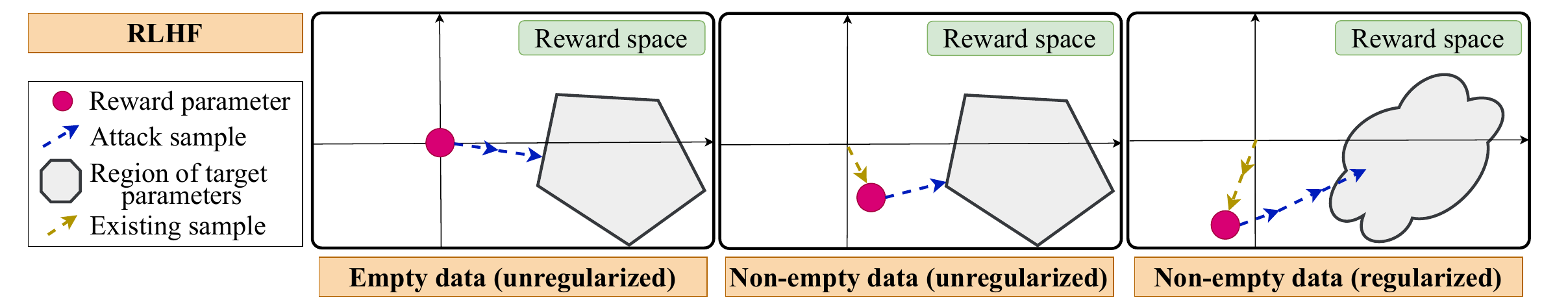}
    \caption{\small A geometric illustration of our attack model for RLHF. The shaded regions represent the reward parameter spaces where optimal policies are $\epsilon$-close to $\pi^\dagger$. The blue arrows represent attack samples, while the yellow arrows represent the pre-existing data samples from $\overline{D}$. Finally, the red shape represents the optimal reward parameters with respect to the generated dataset $\widehat{D}$. Each added attack sample moves the optimal parameter closer to the shaded region. For unregularized RLHF with empty $\cleandata$ (left), the attack problem is solved in the reward parameter space, and the target space is a polytope. For unregularized RLHF with non-empty $\cleandata$ (middle), the required samples depend on the alignment of $\pi^\dagger$ with $\cleandata$. For regularized RLHF (right), since the optimal policy is not necessarily deterministic, the geometry of the target space becomes non-linear.}
    \label{fig:rlhf-diagram}
\end{figure*}

In this section, we study data poisoning attacks on RLHF, where the attacker synthesizes $\widehat{D}$ from $\cleandata$. We start our discussion by formulating the general attack problem for RLHF. We do this by instantiating Problem \ref{op:original-attack-problem} for this setting as:
\begin{align}\label{op:rlhf-attack}
    & \min_{D}\; |D|\;\; \textnormal{such that}\;\; \widehat{\omega}=\arg\min_\omega\ell^\omega_\textnormal{RLHF}\left(\cleandata\cup D\right) \nonumber\\ & \quad \textnormal{and} \;\; \norm{\pi^\dagger -\pi^\textnormal{reg}_{r_{\widehat{\omega}}}}^2_1 \leq \epsilon~.\tag{P:Attack:RLHF.1}
\end{align}
In Appendix \ref{sec:rlhf_feasibility}, we show that Problem \ref{op:rlhf-attack} is feasible whenever $\pi^\dagger\in\Pi^\textnormal{det}$ or $\pi^\dagger,\mu\in\Pi^\textnormal{log}$. We study the following scenarios: (i) the general unregularized RLHF ($\beta=0$) setting with data augmentation ($\cleandata\neq\emptyset$) and data synthesis ($\cleandata=\emptyset$), for deterministic $\pi^\dagger$; (ii) the regularized RLHF setting ($\beta>0$) with data augmentation and data generation for general policies $\pi^\dagger$.

\subsection{Unregularized RLHF} 

For $\beta=0$, Problem~\ref{eq:regularized_objective} reduces to a standard value maximization problem, for which optimal policies $\pi^L$ are known to be deterministic. Thus, if we are given a pre-existing dataset $\overline{D}$ and can augment it into $\widehat{D}$ such that, when used to solve Problem~\ref{eq:reward_mle}, yields a reward function making $\pi^\dagger$ $\epsilon'$-robust optimal, for some $\epsilon'>0$, then $\pi^L$ is guaranteed to converge to $\pi^\dagger$ as the unique optimal policy.
\cite{DBLP:journals/jmlr/RakhshaRDZS21} show that checking the $\epsilon'$-robust optimality condition for neighboring policies $\pi^\dagger\{s,a\}$ of $\pi^\dagger$ for all $(s,a)$ pairs is sufficient. Here, $\pi^\dagger\{s,a\}(s')=\pi^\dagger(s)$, if $s\neq s'$, and $\pi^\dagger\{s,a\}(s')=a$, otherwise.
In the linear reward setting, these constraints are represented compactly by the polytope $M_{\pi^\dagger}^\top\omega \geq \boldsymbol{\epsilon}'$ (see Figure~\ref{fig:rlhf-diagram} for a geometric illustration), where $M_{\pi^\dagger}$ is a $d\times S(A-1)$-dimensional matrix.\footnote{Note that, since the vector $\boldsymbol{\epsilon}'$ has all entries $\epsilon$, the inequality condition can be satisfied even when the rank of $M_{\pi^\dagger}$ is less than $SA$. The minimal requirement is that all inequalities of the system are consistent, i.e., they yield intersecting regions.} Its columns are defined as $\sum_{s'}\phi(s',\pi^\dagger(s')) - \phi(s',\pi^\dagger\{s,a\}(s'))$, for all $(s,a)$, and $\boldsymbol{\epsilon}'$ is the $S(A-1)$-dimensional vector with entries $\epsilon'$. The columns of matrix $M_{\pi^\dagger}$ represent the differences in the average feature distribution between $\pi^\dagger$ and its neighbors. 

Leveraging this polytope constraint, we instantiate the general preference poisoning attack problem~\ref{op:rlhf-attack} for the RLHF paradigm in the setting where the clean preference dataset $\cleandata$ is non-empty (with size $\abs{\cleandata} = \overline{n}$), as follows:
\begin{align}\label{op:syn-rlhf-augment}
    & \min_{D}\; |D|\;\;
    \textnormal{such that}\;\; \widehat{\omega} = \arg\min_\omega \ell^\omega_\textnormal{RLHF}\left(\cleandata\cup D\right)\nonumber\\ & \quad  \textnormal{and}\;\; M_{\pi^\dagger}^\top\widehat{\omega} \geq \boldsymbol{\epsilon}'~.\tag{P:Attack:RLHF.2}
\end{align}
Note that the above is a surrogate of Problem \ref{op:rlhf-attack}, since, in unregularized RLHF $\pi^\textnormal{reg}_{r_{\widehat{\omega}}}$ is the optimal policy with respect to $r_{\widehat{\omega}}$ which, as we explained above, is enforced by the constraint $M_{\pi^\dagger}^\top\widehat{\omega} \geq \boldsymbol{\epsilon}'$.  The solution to the above data augmentation attack problem depends on data-dependent quantities due to the pre-existing data. Let us define the covariance matrix with respect to $\cleandata$ as $$\Sigma^\phi_{\cleandata}=(1/\overline{n})\sum_{(\tau,\tau')\in\overline{D}}(\phi(\tau)-\phi(\tau'))(\phi(\tau)-\phi(\tau'))^\top~.$$
We are now ready to state our first result.
\begin{restatable}{retheorem}{rlhfsynthesisabs}\label{thm:warmup-rlhf-augment} Let $\cleandata$ be a given preference dataset of $\overline{n}$ samples, let $\beta =0$, $\epsilon'>0$ and $\pi^\dagger\in\Pi^\textnormal{det}$. Furthermore, let $\overline{\omega}$ be optimal for $\ell^\omega_\textnormal{RLHF}(\cleandata)$, define $\omega^\dagger$ as
% \resizedequation{}
\begin{align*}
    \proj_{\omega:M_{\pi^\dagger}\omega\geq\boldsymbol{\epsilon}'}\left(\overline{\omega}\right) = \overline{\omega} + M_{\pi^\dagger}\left(M_{\pi^\dagger}^\top M_{\pi^\dagger}\right)^+\left(\boldsymbol{\epsilon}' - M_{\pi^\dagger}\overline{\omega}\right)
\end{align*}
and let $\gamma\geq 1-2\norm{\omega^\dagger}/(\xi_{\max}+1)$. Then, the dataset of $\ceil{\abs{(\omega^\dagger)^\top\nabla_\omega\ell^{\omega^\dagger}_\textnormal{RLHF}(\cleandata)}/\xi_{\max}}$ identical samples satisfying $ o=1$ and
\begin{align*}
    \phi(\tau)-\phi(\tau')=\xi_1\left( \left(\omega^\dagger\right)^\top\nabla_\omega \ell^{\omega^\dagger}_\textnormal{RLHF}(\cleandata)\right)\frac{\omega^\dagger}{\norm{\omega^\dagger}^2},
\end{align*}
is a feasible solution to Problem \ref{op:rlhf-attack}. Furthermore, there exists an optimal solution $\widehat{D}$ for Problem \ref{op:rlhf-attack} with $\widehat{n}_\textnormal{RLHF}$ identical samples such that 
\begin{align*}
     \widehat{n}_\textnormal{RLHF} \leq \left\lceil\frac{2\overline{n}+\lambda}{\xi_{\max}} \left(\frac{(\epsilon')^2SA}{\sigma^2_{\min}\left(M_{\pi^\dagger}\right)} +\norm{\overline{\omega}}  \frac{\epsilon'\sqrt{SA}}{\sigma_{\min}(M_{\pi^\dagger})}\right)\right\rceil~.
\end{align*}
\end{restatable}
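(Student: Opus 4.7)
The plan is to leverage the strong convexity of the regularized RLHF MLE objective together with a careful first-order analysis of how adding identical preference samples shifts its unique minimizer. The structural facts I would exploit are: $\ell^\omega_\textnormal{RLHF}$ is $\lambda$-strongly convex (from the $\frac{\lambda}{2}\|\omega\|^2$ term), convex in $\omega$, and smooth with a constant controlled by $2\overline{n}+\lambda$, via the logistic Hessian together with the feature-norm bound $\|\phi(s,a)\|_2 \le 1$ and the discount factor.

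I would start by writing the first-order optimality condition at $\widehat\omega = \arg\min_\omega \ell^\omega_\textnormal{RLHF}(\overline D \cup D)$. When $D$ consists of $n$ identical samples with $o=1$ and $\phi(\tau)-\phi(\tau')=z$, the condition reads $\nabla_\omega \ell^{\widehat\omega}_\textnormal{RLHF}(\overline D) = n\,\sigma(-\widehat\omega^\top z)\,z$. Plugging in $z = \xi_1(a)\,\omega^\dagger/\|\omega^\dagger\|^2$ and $n = \lceil |a|/\xi_{\max}\rceil$ with $a = (\omega^\dagger)^\top \nabla_\omega \ell^{\omega^\dagger}_\textnormal{RLHF}(\overline D)$, the defining identity $c\,\sigma(-c)=a/n$ for $c=\xi_1(a)$ is exactly calibrated so that the added samples, evaluated at $\omega^\dagger$, cancel the $\omega^\dagger$-component of $\nabla_\omega \ell^{\omega^\dagger}_\textnormal{RLHF}(\overline D)$; the residual gradient at $\omega^\dagger$ lies orthogonal to $\omega^\dagger$.

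For feasibility I must show $\widehat\omega$ belongs to the polytope $\{\omega : M_{\pi^\dagger}^\top \omega \ge \boldsymbol{\epsilon}'\}$. Since $\omega^\dagger$ is by construction the projection of $\overline\omega$ onto this polytope, we have $M_{\pi^\dagger}^\top\omega^\dagger \ge \boldsymbol{\epsilon}'$ (with equality on the active face). Using the gradient identity above together with $\lambda$-strong convexity I would bound $\|\widehat\omega-\omega^\dagger\|$, and combine it with the orthogonality of the residual gradient to the active face of the polytope to conclude $M_{\pi^\dagger}^\top \widehat\omega \ge \boldsymbol{\epsilon}'$. The hypothesis $\gamma \ge 1 - 2\|\omega^\dagger\|/(\xi_{\max}+1)$ enters here to guarantee that $|\widehat\omega^\top z|$ stays bounded (through a discount-factor-dependent bound on $\|z\|$), so that the sigmoid does not saturate and the strong-convexity bounds remain quantitatively usable.

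For the explicit sample-count upper bound I would estimate $|a|$ via Cauchy--Schwarz and smoothness: using $\nabla_\omega \ell^{\overline\omega}_\textnormal{RLHF}(\overline D)=0$ and $(2\overline n + \lambda)$-smoothness, $|a| \le \|\omega^\dagger\|\cdot(2\overline n + \lambda)\|\omega^\dagger-\overline\omega\|$. The explicit projection formula gives $\|\omega^\dagger - \overline\omega\|\le \epsilon'\sqrt{SA}/\sigma_\textnormal{min}(M_{\pi^\dagger})$, and the triangle inequality $\|\omega^\dagger\|\le\|\overline\omega\|+\|\omega^\dagger-\overline\omega\|$ yields the two terms in the claimed bound after dividing by $\xi_{\max}$ and taking the ceiling. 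The main obstacle I expect is the feasibility step: the construction only kills the gradient component along $\omega^\dagger$, so $\widehat\omega\ne\omega^\dagger$ in general, and tying the orthogonality of the residual gradient to the local geometry of the active face of the polytope---together with invoking the $\gamma$-condition to keep the sigmoid's curvature from degenerating---is the technical heart of the argument; the remainder reduces to convex-analysis bookkeeping.
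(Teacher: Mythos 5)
Your sample-count bookkeeping (Cauchy--Schwarz, the $(2\overline{n}+\lambda)$-Lipschitz gradient together with $\nabla_\omega\ell^{\overline{\omega}}_\textnormal{RLHF}(\cleandata)=\mathbf{0}$, and the projection-norm bounds $\norm{\omega^\dagger}\le \epsilon'\sqrt{SA}/\sigma_{\min}(M_{\pi^\dagger})$ and $\norm{\omega^\dagger-\overline{\omega}}\le\norm{\omega^\dagger}+\norm{\overline{\omega}}$) matches the paper's derivation. The gap is in the feasibility step, precisely where you place the ``technical heart'': your construction cancels only the component of $\nabla_\omega\ell^{\omega^\dagger}_\textnormal{RLHF}(\cleandata)$ along $\omega^\dagger$, leaving a residual orthogonal to $\omega^\dagger$, and you then hope to combine a strong-convexity bound on $\norm{\widehat{\omega}-\omega^\dagger}$ with ``orthogonality of the residual to the active face'' to conclude $M_{\pi^\dagger}^\top\widehat{\omega}\ge\boldsymbol{\epsilon}'$. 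This does not go through. The residual is orthogonal to $\omega^\dagger$, not to the columns of $M_{\pi^\dagger}$ that are active at $\omega^\dagger$; and since $\omega^\dagger$ is the projection of $\overline{\omega}$ onto the polytope, it generically sits on the boundary with some constraints tight. A displacement of $\widehat{\omega}$ away from $\omega^\dagger$ in a direction orthogonal to $\omega^\dagger$ can have strictly negative inner product with an active constraint normal and exit the feasible region, no matter how small $\norm{\widehat{\omega}-\omega^\dagger}$ is; strong convexity controls a distance, not a direction.

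The paper avoids this entirely: in its attack-subproblem lemma the planted feature difference is taken proportional to the \emph{full} clean-data gradient, $\phi(\tau)-\phi(\tau')=\xi^{-1}(\cdot)\,\nabla_\omega\ell^{\omega^\dagger}_\textnormal{RLHF}(\cleandata)/\bigl((\omega^\dagger)^\top\nabla_\omega\ell^{\omega^\dagger}_\textnormal{RLHF}(\cleandata)\bigr)$, which coincides with $\omega^\dagger/\norm{\omega^\dagger}^2$ only in the special case $\cleandata=\emptyset$ where the gradient equals $\lambda\omega^\dagger$. With that direction the poisoned first-order condition holds \emph{exactly} at $\omega^\dagger$, so by strong convexity $\widehat{\omega}=\omega^\dagger$ and feasibility is immediate from $M_{\pi^\dagger}^\top\omega^\dagger\ge\boldsymbol{\epsilon}'$; the sample count is unchanged because the calibration still runs through $\xi^{-1}$ of $(\omega^\dagger)^\top\nabla_\omega\ell^{\omega^\dagger}_\textnormal{RLHF}(\cleandata)/n$. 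To repair your argument you should adopt this full-gradient direction rather than trying to control where the residual pushes the minimizer. Separately, the hypothesis on $\gamma$ is not about keeping the sigmoid from saturating; the paper invokes it (via a dedicated lemma) to guarantee the prescribed feature difference has small enough norm to be realizable under $\norm{\phi(\tau)}\le 1/(1-\gamma)$.
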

\textit{Sketch of proof.} We start by considering the reward learning subproblem for the surrogate problem \ref{op:syn-rlhf-augment}. We utilize the solution of the problem of machine teaching to logistic regression learners \citep{DBLP:journals/jmlr/LiuZ16} and relate it to our reward subproblem. Next, we proceed to solving the modified optimization problem with respect to the final constraint. In our setting, the samples from $\cleandata$ are fixed, and thus cannot be treated as variable. Therefore, we need to design a new attack dataset $\widehat{D}$ with samples depending on the gradient of the loss with respect to $\cleandata$, which captures how aligned $\cleandata$ is with $\pi^\dagger$. In the best case, the optimal parameter with respect to $\cleandata$ is already in the target polytope, meaning that the number of samples in this case is $0$.  In order to obtain closed-form bounds, we construct a feasible solution using the projection of the optimal solution with respect to $\cleandata$ onto the polytope and make use of strong convexity and Lipschitzness of $\ell^\omega_\textnormal{RLHF}(\cleandata)$ to get our results. We also show in Appendix \ref{appendix:technical_lemmas} that $M_{\pi^\dagger}$ is full rank under mild assumptions. This guarantees that our bounds are finite.\qed

Before going to the next section, we instantiate Problem \ref{op:syn-rlhf-augment} to the case when the pre-existing dataset is empty. The following result is a corollary of Theorem \ref{thm:warmup-rlhf-augment}.
\begin{restatable}{recorollary}{warmupresultrlhf}\label{thm:warm_up_rlhf}
    Let $\cleandata=\emptyset$, $\beta =0$ and $\epsilon'>0$, and let $\pi^\dagger\in\Pi^\textnormal{det}$. Define $
        \omega^\dagger = M_{\pi^\dagger}\left(M_{\pi^\dagger}^\top M_{\pi^\dagger}\right)^+\boldsymbol{\epsilon}'$.
    Then, the dataset of $ \ceil{\frac{\lambda\norm{\omega^\dagger}^2}{\xi_{\max}}}$
    identical samples satisfying 
    \begin{align*}
        \phi(\tau)-\phi(\tau') = \xi_1\left(\lambda\norm{\omega^\dagger}\right)\cdot\frac{\omega^\dagger}{\norm{\omega^\dagger}^2},\;\;\; o = 1
    \end{align*}
    is a feasible solution for Problem \ref{op:rlhf-attack}.
    Furthermore, there exists an optimal solution $\widehat{D}$ for Problem \ref{op:rlhf-attack} with $\widehat{n}_\textnormal{RLHF}$ samples such that
    \begin{align*}
        \widehat{n}_\textnormal{RLHF} \leq  \ceil{\frac{(\epsilon')^2\lambda SA}{\xi_{\max}\sigma^2_{\min}\br{M_{\pi^\dagger}}}}~.
    \end{align*}
\end{restatable}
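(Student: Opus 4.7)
The plan is to derive this corollary as a direct specialization of Theorem \ref{thm:warmup-rlhf-augment} by setting $\cleandata = \emptyset$, so almost all the work is already contained in the theorem. First, I would observe that when the pre-existing dataset is empty, the reward-learning objective in Problem \ref{eq:reward_mle} reduces to $\ell^\omega_\textnormal{RLHF}(\emptyset) = (\lambda/2)\norm{\omega}^2$, whose unique minimizer is $\overline{\omega} = \mathbf{0}$. Substituting this into the projection formula from Theorem \ref{thm:warmup-rlhf-augment} gives
\begin{align*}
\omega^\dagger \;=\; \mathbf{0} + M_{\pi^\dagger}\br{M_{\pi^\dagger}^\top M_{\pi^\dagger}}^+\br{\boldsymbol{\epsilon}' - M_{\pi^\dagger}^\top \mathbf{0}} \;=\; M_{\pi^\dagger}\br{M_{\pi^\dagger}^\top M_{\pi^\dagger}}^+\boldsymbol{\epsilon}',
\end{align*}
matching the corollary's definition of $\omega^\dagger$.

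Next, I would compute the gradient of the regularization-only loss at $\omega^\dagger$, which is simply $\nabla_\omega \ell^{\omega^\dagger}_\textnormal{RLHF}(\emptyset) = \lambda \omega^\dagger$, so that $(\omega^\dagger)^\top \nabla_\omega \ell^{\omega^\dagger}_\textnormal{RLHF}(\emptyset) = \lambda\norm{\omega^\dagger}^2 \geq 0$. Plugging this scalar into the explicit feasible construction from Theorem \ref{thm:warmup-rlhf-augment} produces exactly the sample count $\lceil \lambda\norm{\omega^\dagger}^2/\xi_{\max}\rceil$, with each attack sample satisfying $o=1$ and $\phi(\tau)-\phi(\tau') = \xi_1(\lambda\norm{\omega^\dagger})\,\omega^\dagger/\norm{\omega^\dagger}^2$, which is the feasible solution claimed in the corollary.

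Finally, to obtain the upper bound on the optimal attack size, I would substitute $\overline{n} = 0$ and $\norm{\overline{\omega}} = 0$ into the general upper bound of Theorem \ref{thm:warmup-rlhf-augment}. The second summand inside the parenthesis (the one scaling with $\norm{\overline{\omega}}$) vanishes, and the prefactor $(2\overline{n}+\lambda)/\xi_{\max}$ collapses to $\lambda/\xi_{\max}$, yielding
\begin{align*}
\widehat{n}_\textnormal{RLHF} \;\leq\; \ceil{\frac{(\epsilon')^2\lambda SA}{\xi_{\max}\sigma^2_{\min}(M_{\pi^\dagger})}},
\end{align*}
which is the stated bound. The only non-mechanical step is to confirm that the hypothesis $\gamma \geq 1 - 2\norm{\omega^\dagger}/(\xi_{\max}+1)$ from the parent theorem is inherited into the empty-dataset case; since $\omega^\dagger$ is unchanged in form, this inherits verbatim and requires no new argument. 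In short, no genuine obstacle arises: the corollary is obtained by mechanical substitution into Theorem \ref{thm:warmup-rlhf-augment}.
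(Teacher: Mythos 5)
Your proposal is correct, and it is in substance the route the paper intends (the text explicitly presents this as a corollary of Theorem~\ref{thm:warmup-rlhf-augment}). The appendix, however, proves it from scratch: it invokes Lemma~\ref{lem:attack_subproblem_rlhf} for the empty-data teaching subproblem and then solves the convex program $\min_\omega \tfrac{1}{2}\norm{\omega}^2$ subject to $M_{\pi^\dagger}^\top\omega\geq\boldsymbol{\epsilon}'$ via KKT, obtaining $\omega^\dagger=M_{\pi^\dagger}(M_{\pi^\dagger}^\top M_{\pi^\dagger})^+\boldsymbol{\epsilon}'$ as the \emph{exact} minimizer and then bounding $\norm{\omega^\dagger}\leq \epsilon'\sqrt{SA}/\sigma_{\min}(M_{\pi^\dagger})$ from complementary slackness. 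Your substitution and the paper's KKT derivation coincide because projecting $\overline{\omega}=\mathbf{0}$ onto the polytope is exactly that minimum-norm program; the only thing the standalone derivation buys is the observation that, with $\cleandata=\emptyset$, the sample count $\lceil\lambda\norm{\omega}^2/\xi_{\max}\rceil$ is monotone in $\norm{\omega}$, so $\omega^\dagger$ is the \emph{optimal} target parameter rather than merely a feasible one — which is immaterial for the stated upper bound. Your checks that the Lipschitz constant degenerates to $\lambda$ and that the $\norm{\overline{\omega}}$ term vanishes are exactly right. One small point you glossed over: your own computation gives $(\omega^\dagger)^\top\nabla_\omega\ell^{\omega^\dagger}_\textnormal{RLHF}(\emptyset)=\lambda\norm{\omega^\dagger}^2$, so the feature construction should read $\xi_1(\lambda\norm{\omega^\dagger}^2)$ (as in Lemma~\ref{lem:attack_subproblem_rlhf}), not $\xi_1(\lambda\norm{\omega^\dagger})$; the exponent in the corollary's statement appears to be a typo, and it is worth saying so explicitly rather than silently reproducing it.
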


\subsection{Regularized RLHF.} 

Now, we consider Problem \ref{op:rlhf-attack} under a general setting, where the regularization parameter $\beta > 0$ and the preference dataset $\cleandata \neq \emptyset$. In this setting, we are dealing with loglinear policies (see Section \ref{sec:pref-poisoning-attack}) which, since they are stochastic, lead to infinitely many constraints -- hence, it is challenging to apply the polytope constraint idea from the above unregularized RLHF setting. Therefore, we consider a surrogate problem with constraints that are suited to stochastic policies. We will make use of the KL divergence for that purpose. We write the surrogate attack problem as
\begin{align}\label{op:reg-rlhf-attack}
    & \min_{D}\; |D|\;\; \textnormal{such that}\;\; \widehat{\omega}=\arg\min_\omega\ell^\omega_\textnormal{RLHF}\left(\cleandata\cup D\right) \nonumber\\ & \quad \textnormal{and} \;\; \KL\left(\pi^\dagger||\pi^\textnormal{reg}_{r_{\widehat{\omega}}}\right) \leq \epsilon'~,\tag{P:Attack:RLHF.3}
\end{align}
where $\pi^\textnormal{reg}_{r_{\widehat{\omega}}}$ is the optimal policy for the regularized objective \ref{eq:regularized_objective}. We obtain the following bounds for this setting:
\begin{restatable}{retheorem}{synrlhfwarmupreg}\label{thm:rlhf-reg-gen}
    Let $\cleandata$ be a given preference dataset of $\overline{n}$ samples, $\beta >0$ and $0<\epsilon'\leq\epsilon$. Moreover, define
    \begin{align*}
        \Gamma^\omega_\phi(\pi^\dagger||\pi^\textnormal{reg}_{r_\omega}) = \sum_{s,a}\rho(s)(\pi^\dagger(a|s)-\pi^\textnormal{reg}_{r_\omega}(a|s))\phi^{\pi^\textnormal{reg}_{r_\omega}}(s,a)~,
    \end{align*}
    for any given $\omega$. 
    Then, there exists a feasible solution $\widehat{D}$ for Problem \ref{op:rlhf-attack} with $\widehat{n}_\textnormal{RLHF}$ samples which yields $\omega^\dagger$  when solving Problem \ref{eq:reward_mle} on dataset $\widehat{D}$, such that $\Gamma^\omega_\phi(\pi^\dagger||\pi^\textnormal{reg}_{r_{\omega^\dagger}})\neq 0$ and
    \begin{align*}
        \widehat{n}_\textnormal{RLHF} & \leq O\left(\frac{\beta^2\left(D_\textnormal{KL}\left(\pi^\dagger||\mu\right)-\epsilon'\right)^2}{(1-\gamma)^2\sigma^2_{\min}(\Sigma^\phi_{\cleandata})\norm{\Gamma^{\omega^\dagger}_\phi(\pi^\dagger||\pi^\textnormal{reg}_{r_{\omega^\dagger}})}^2}\right. \\ & \quad\quad + \left.\frac{\overline{n}}{(1-\gamma)^4\sigma^4_{\min}(\Sigma^\phi_{\cleandata})}\right)~.
    \end{align*}
\end{restatable}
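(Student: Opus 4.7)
The plan is to reduce the surrogate KL constraint $\KL(\pi^\dagger||\pi^\textnormal{reg}_{r_{\widehat{\omega}}}) \leq \epsilon'$ to a magnitude bound on a target reward parameter $\omega^\dagger$ in closed form, and then invoke the teaching-to-MLE machinery from the proof of Theorem~\ref{thm:warmup-rlhf-augment} to enforce $\widehat{\omega} = \omega^\dagger$. Set $f(\omega) := \KL(\pi^\dagger||\pi^\textnormal{reg}_{r_\omega})$; at $\omega=0$ the regularized optimum collapses to the reference, so $f(0) = \KL(\pi^\dagger||\mu)$. Differentiating $f$ through the soft Bellman characterization of $\pi^\textnormal{reg}_{r_\omega}$ and applying the soft policy-gradient identity yields
\begin{align*}
\nabla_\omega f(\omega) \;=\; -\tfrac{1}{\beta}\,\Gamma^\omega_\phi,
\end{align*}
up to a $(1-\gamma)^{-1}$ scaling coming from the discounted occupancies implicit in $\phi^{\pi^\textnormal{reg}_{r_\omega}}$. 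This identifies $\Gamma^\omega_\phi$ as precisely the reward-space sensitivity of the KL constraint, and explains why the non-degeneracy condition $\Gamma^{\omega^\dagger}_\phi \neq 0$ is required as a hypothesis.

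Next I would construct $\omega^\dagger$ by a continuity / fixed-point argument. Integrating the gradient along the segment $t \mapsto t\omega^\dagger$ gives
\begin{align*}
f(\omega^\dagger) \;=\; \KL(\pi^\dagger||\mu) - \tfrac{1}{\beta}\int_0^1 \langle \Gamma^{t\omega^\dagger}_\phi,\, \omega^\dagger \rangle\,dt,
\end{align*}
and picking $\omega^\dagger$ aligned with $\Gamma^{\omega^\dagger}_\phi$ reduces the integrand to $\norm{\Gamma^{\omega^\dagger}_\phi}\,\norm{\omega^\dagger}$ modulo smoothness corrections. Enforcing $f(\omega^\dagger)\leq \epsilon'$ then delivers
\begin{align*}
\norm{\omega^\dagger}^2 \;\leq\; O\!\left(\frac{\beta^2\,(\KL(\pi^\dagger||\mu)-\epsilon')^2}{(1-\gamma)^2\,\norm{\Gamma^{\omega^\dagger}_\phi}^2}\right).
\end{align*}
With $\omega^\dagger$ in hand, the remaining step parallels Theorem~\ref{thm:warmup-rlhf-augment}: I would plant identical synthetic samples whose feature differences are aligned with $\omega^\dagger$ so as to cancel the gradient of $\ell^{\omega^\dagger}_\textnormal{RLHF}(\cleandata)$ at $\omega^\dagger$. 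Strong convexity and Lipschitzness of $\ell^\omega_\textnormal{RLHF}(\cleandata \cup D)$, with Hessian curvature controlled by the augmented feature covariance and hence by $\sigma_{\min}(\Sigma^\phi_{\cleandata})$ up to $(1-\gamma)^{-1}$ factors from $\phi(\tau)-\phi(\tau')$, then split the sample count into (i) a $\norm{\omega^\dagger}^2/\sigma^2_{\min}(\Sigma^\phi_{\cleandata})$ contribution, which, after substituting the bound on $\norm{\omega^\dagger}^2$, produces the first summand; and (ii) a cross term controlled by $|\langle \omega^\dagger, \nabla_\omega \ell^{\omega^\dagger}_\textnormal{RLHF}(\cleandata)\rangle|$, which scales linearly in $\overline{n}$ and, after absorbing an additional projection-error $1/\sigma^2_{\min}$ factor, produces the $\overline{n}/((1-\gamma)^4 \sigma^4_{\min})$ summand.

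The main technical obstacle is the fixed-point construction of $\omega^\dagger$: because $\Gamma^\omega_\phi$ itself depends on $\omega$, the integral identity cannot be closed by a naive linearization, and uniform control on $\norm{\Gamma^{t\omega^\dagger}_\phi}$ along the segment $[0,\omega^\dagger]$ has to be established through smoothness of the soft-optimal policy map together with a compactness argument guaranteeing the existence of a self-consistent $\omega^\dagger$. A secondary difficulty is the bookkeeping of the several $(1-\gamma)$ factors that propagate through the discounted occupancy measures $d^\pi_{s,a}$ when translating the bound on $\norm{\omega^\dagger}$ into the teaching-to-MLE sample count; these are ultimately what yield the $(1-\gamma)^{-2}$ and $(1-\gamma)^{-4}$ denominators in the final expression.
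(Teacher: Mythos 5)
Your skeleton matches the paper's: pass to the surrogate KL-constrained problem, use the teaching lemma to reduce the attack to minimizing $\abs{\omega^\top\nabla_\omega\ell^\omega_\textnormal{RLHF}(\cleandata)}$ over the feasible set, identify $\Gamma^\omega_\phi$ as the reward-space sensitivity of the KL constraint, and evaluate at a feasible $\omega^\dagger$. But the central step is left open. The paper does not integrate $\nabla_\omega \KL(\pi^\dagger||\pi^\textnormal{reg}_{r_\omega})$ along a segment; it proves the \emph{exact algebraic identity} $\expp_{s\sim\rho,a\sim\pi^\dagger(\cdot|s)}[\calA^\omega(s,a)] = \omega^\top\Gamma^\omega_\phi(\pi^\dagger||\pi^\textnormal{reg}_{r_\omega})$ (Lemma \ref{lem:advantage_gradient}), obtained by writing $\calQ^\omega(s,a)=\omega^\top\Gamma^\omega_\phi(s,a)-(\text{KL-cost terms})$ and killing the cost terms via $\sum_a\pi(a|s)\calA^\pi(s,a)=0$. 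This makes the constraint read $\omega^\top\Gamma^\omega_\phi \geq \beta(\KL(\pi^\dagger||\mu)-\epsilon')$ \emph{exactly}, so complementary slackness is an equality and no path-dependence arises. Your replacement, $\KL(\pi^\dagger||\mu)-\tfrac{1}{\beta}\int_0^1\langle\Gamma^{t\omega^\dagger}_\phi,\omega^\dagger\rangle\,dt$, requires uniform two-sided control of $\Gamma^{t\omega^\dagger}_\phi$ along $[0,\omega^\dagger]$ to certify either feasibility of your candidate or the claimed bound $\norm{\omega^\dagger}\lesssim \beta(\KL(\pi^\dagger||\mu)-\epsilon')/\norm{\Gamma^{\omega^\dagger}_\phi}$; you name this as the main obstacle but do not supply the control, and there is no a priori reason $\langle\Gamma^{t\omega^\dagger}_\phi,\omega^\dagger\rangle$ stays comparable to $\norm{\Gamma^{\omega^\dagger}_\phi}\norm{\omega^\dagger}$ (or even nonnegative) along the segment. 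This is the missing idea, and the exact identity is how the paper avoids it.

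A second, concrete error is your placement of the $\sigma_{\min}(\Sigma^\phi_{\cleandata})$ factors. The teaching subproblem (Lemma \ref{lem:attack_subproblem_rlhf}) yields the \emph{exact} count $\lceil\abs{\nabla_\omega\ell^{\omega^\dagger}_\textnormal{RLHF}(\cleandata)^\top\omega^\dagger}/\xi_{\max}\rceil$, with no covariance dependence; strong convexity of the augmented loss contributes nothing here. In the paper the covariance enters only through the KKT fixed point $\omega=(Y^\omega_{\cleandata})^{-1}\bigl(\tfrac{e\alpha}{\beta}\Gamma^\omega_\phi - X^\omega_{\cleandata}\bigr)$, where $Y^\omega_{\cleandata}$ is the Hessian of the clean-data loss and the effective condition number $\sigma_{\max}((Y^\omega_{\cleandata})^{-1})/\sigma_{\min}((Y^\omega_{\cleandata})^{-1})$ is bounded via $\sigma_{\min}(\Sigma^\phi_{\cleandata})$ (Lemma \ref{lem:spectrum_of_X_and_Y}); squaring the resulting bound on $\norm{\widehat{\omega}}$ produces both displayed terms. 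Your bound on $\norm{\omega^\dagger}^2$ carries no such factor, and the mechanism you invoke to insert $1/\sigma^2_{\min}$ afterwards would not produce it, so the assembly of the final expression does not go through as written. You would also need to handle the degenerate case $\Gamma^{\omega}_\phi=\mathbf{0}$ separately, as the paper does.
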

\textit{Sketch of proof.} We start by showing that solving the surrogate subproblem is enough to obtain upper bounds on the sample size for suitable $\epsilon'$. Then, we  solve the reward learning subproblem and obtain a dataset $\widehat{D}$ with $\widehat{n}_\textnormal{RLHF}$ identical samples that satisfy $\phi(\tau)-\phi(\tau') = \xi^{-1}\left(\lambda\left|\nabla_\omega \ell^\omega_\textnormal{RLHF}(\overline{D})^\top\omega\right| / \widehat{n}_\textnormal{RLHF} \right)\omega/\norm{\omega}^2$ with $o = 1$, where $\widehat{n}_\textnormal{RLHF}$ is given in terms of $\left|\nabla_\omega \ell^\omega_\textnormal{RLHF}(\overline{D})^\top\omega\right|$. Then, we solve the equivalent problem of minimizing $\left|\nabla_\omega \ell^\omega_\textnormal{RLHF}(\overline{D})^\top\omega\right|$, subject to $\omega$ yielding a regularized optimal policy $\pi^\textnormal{reg}_{r_{\omega}}$ that satisfies $D_\textnormal{KL}(\pi^\dagger||\pi^\textnormal{reg}_{r_{\omega}})\leq \epsilon$. 

Using $\omega^\dagger$ as a feasible solution in the expression, we finalize the bounds. 

\begin{remark}
    Note that the term $\Gamma^{\omega^\dagger}_\phi(\pi^\dagger||\pi^\textnormal{reg}_{r_{\omega^\dagger}})$ represents the average trajectory feature difference between $\pi^\dagger$ and $\pi^\textnormal{reg}_{r_\omega}$ when rolling out trajectories using $\pi^\textnormal{reg}_{r_{\omega^\dagger}}$. If this term is uniformly $\boldsymbol{0}$ at the problem solution $\omega^\dagger$, then this means that, either $\pi^\dagger=\pi^\textnormal{reg}_{r_{\omega^\dagger}}$, or at least the two policies are identical in the average trajectory feature space. Therefore, less samples are needed to satisfy our objective. The following result provides bounds that depend on such a solution $\omega^\dagger$. Its proof follows immediately from the proof of Theorem \ref{thm:rlhf-reg-gen}.
\end{remark}
\begin{restatable}{recorollary}{rlhfcorollary1}\label{cor:rlhf-cor-1}
    There exists $\omega^\dagger$ such that $\pi^\dagger=\pi^\textnormal{reg}_{r_{\omega^\dagger}}$. Moreover, we have that $\omega^\dagger$ is a feasible solution for Problem \ref{op:rlhf-attack} and 
    \begin{align*}
        \widehat{n}_\textnormal{RLHF} \leq \ceil{\frac{1}{\xi_{\max}}\left(\lambda\norm{\omega^\dagger}^2 + \norm{\omega^\dagger}\cdot\frac{2\overline{n}}{(1-\gamma)^2}\right)}~.
    \end{align*}
\end{restatable}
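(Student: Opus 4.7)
The plan is to derive Corollary~\ref{cor:rlhf-cor-1} as a direct specialization of Theorem~\ref{thm:rlhf-reg-gen} to the reward parameter $\omega^\dagger$ that realizes $\pi^\dagger$ exactly as the regularized optimal policy. First I would establish existence of such an $\omega^\dagger$: since $\pi^\dagger,\mu\in\Pi^\textnormal{log}$, the Gibbs-form characterization of regularized optimal policies together with the linear reward class of Definition~\ref{def:linear_rewards} yields a linear system in $\omega$ whose solution produces an $\omega^\dagger$ with $\pi^\textnormal{reg}_{r_{\omega^\dagger}}=\pi^\dagger$; this is what the feasibility analysis in Appendix~\ref{sec:rlhf_feasibility} is used to justify. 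With $\pi^\textnormal{reg}_{r_{\omega^\dagger}}=\pi^\dagger$ we have $D_\textnormal{KL}(\pi^\dagger\Vert\pi^\textnormal{reg}_{r_{\omega^\dagger}})=0\leq\epsilon'$, so $\omega^\dagger$ satisfies the KL constraint of the surrogate Problem~\ref{op:reg-rlhf-attack} and, via Pinsker, the $\ell_1$ constraint of Problem~\ref{op:rlhf-attack}.

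Next I would follow the construction described in the sketch of Theorem~\ref{thm:rlhf-reg-gen}, instantiated at $\omega=\omega^\dagger$. The construction takes $\widehat{D}$ to consist of $\widehat{n}_\textnormal{RLHF}$ identical tuples $(\tau,\tau',1)$ with $\phi(\tau)-\phi(\tau')=c\cdot\omega^\dagger/\norm{\omega^\dagger}^2$, where $c=\xi^{-1}(\cdot)$ is selected so that the projection of the first-order condition $\nabla_\omega\ell^\omega_\textnormal{RLHF}(\cleandata\cup\widehat{D})|_{\omega=\omega^\dagger}=0$ onto the direction of $\omega^\dagger$ holds. Since $\xi(c)\leq\xi_{\max}$ on the domain of $\xi^{-1}$, the resulting scalar identity forces $\widehat{n}_\textnormal{RLHF}\geq|(\omega^\dagger)^\top\nabla_\omega\ell^{\omega^\dagger}_\textnormal{RLHF}(\cleandata)|/\xi_{\max}$, so the task reduces to an upper bound on the inner product $|(\omega^\dagger)^\top\nabla_\omega\ell^{\omega^\dagger}_\textnormal{RLHF}(\cleandata)|$.

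For this I would expand $\nabla_\omega\ell^{\omega^\dagger}_\textnormal{RLHF}(\cleandata)=\lambda\omega^\dagger-\sum_{(\tau_i,\tau'_i,o_i)\in\cleandata}o_i\sigma(-o_i(\omega^\dagger)^\top z_i)z_i$ with $z_i=\phi(\tau_i)-\phi(\tau'_i)$, contract with $\omega^\dagger$, and apply the triangle inequality. The regularization term contributes $\lambda\norm{\omega^\dagger}^2$, while the data sum contributes at most $2\overline{n}\norm{\omega^\dagger}/(1-\gamma)^2$ by using $|\sigma(\cdot)|\leq 1$ along with the trajectory-level bounds derived from $\max_{s,a}\norm{\phi(s,a)}_2\leq 1$ and the discounted geometric sum (the same refinement that feeds the $(1-\gamma)^{-2}$ factor in Theorem~\ref{thm:rlhf-reg-gen}). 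Adding the two contributions, dividing by $\xi_{\max}$, and ceiling then produces exactly the stated bound.

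The step I expect to be the main obstacle is verifying that the single-direction identical-sample construction actually makes $\omega^\dagger$ the \emph{unique} minimizer of $\ell^\omega_\textnormal{RLHF}(\cleandata\cup\widehat{D})$, rather than merely zeroing the $\omega^\dagger$-component of its gradient. I would address this by appealing to the corresponding argument in the full proof of Theorem~\ref{thm:rlhf-reg-gen}, where the alignment of the injected sample direction with $\omega^\dagger$ is combined with the $\lambda$-strong convexity of $\ell^\omega_\textnormal{RLHF}$ to pin down the minimizer; if that is insufficient for an arbitrary $\cleandata$, the construction can be enlarged by a constant number of auxiliary samples that cancel the orthogonal gradient components without inflating the stated leading-order bound.
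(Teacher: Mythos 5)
Your proposal follows essentially the same route as the paper: the corollary is obtained by instantiating the machinery of Theorem~\ref{thm:rlhf-reg-gen} at the $\omega^\dagger$ produced by the feasibility analysis (Theorem~\ref{thm:rlhf_feasibility}, solving $\Phi\omega=\beta(\log\pi^\dagger-\log\mu)$, so the KL constraint holds with value $0$), invoking the identical-sample subproblem lemma (Lemma~\ref{lem:attack_subproblem_rlhf}) to get the count $\ceil{\abs{(\omega^\dagger)^\top\nabla_\omega\ell^{\omega^\dagger}_\textnormal{RLHF}(\cleandata)}/\xi_{\max}}$, and bounding that inner product by $\lambda\norm{\omega^\dagger}^2+\norm{\omega^\dagger}\cdot 2\overline{n}/(1-\gamma)$, which is even slightly tighter than the stated $(1-\gamma)^{-2}$ factor. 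The obstacle you flag is resolved in the paper not by auxiliary samples but by aligning the injected feature differences with the full gradient $\nabla_\omega\ell^{\omega^\dagger}_\textnormal{RLHF}(\cleandata)$ (normalized by its inner product with $\omega^\dagger$) rather than with $\omega^\dagger$ itself, so the first-order condition is cancelled as a vector and strong convexity yields uniqueness; when $\cleandata=\emptyset$ the two directions coincide since the gradient is then $\lambda\omega^\dagger$.
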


Finally, we instantiate the above result in the case when the pre-existing data is empty. 
\begin{restatable}{recorollary}{synrlhfwarmupregcor}
    Let $\cleandata=\emptyset$, $\beta >0$ and $0<\epsilon'\leq\epsilon$. There exists a feasible solution $\widehat{D}$ for Problem \ref{op:rlhf-attack} with $\widehat{n}_\textnormal{RLHF}$ samples such that 
    \begin{align*}
        \widehat{n}_\textnormal{RLHF} \leq \ceil{ \frac{\lambda\norm{\omega^\dagger}^2}{\xi_{\max}}}~,
    \end{align*}
    where $\omega^\dagger$ is defined as in Corollary \ref{cor:rlhf-cor-1}.
\end{restatable}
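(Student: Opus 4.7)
The plan is to obtain this result as a direct specialization of Corollary \ref{cor:rlhf-cor-1} to the empty pre-existing dataset case, i.e.\ $\overline{n}=0$. Corollary \ref{cor:rlhf-cor-1} already establishes, under the feasibility assumption $\pi^\dagger,\mu\in\Pi^\textnormal{log}$, the existence of a reward parameter $\omega^\dagger$ satisfying $\pi^\dagger=\pi^\textnormal{reg}_{r_{\omega^\dagger}}$, together with the bound
\begin{align*}
\widehat{n}_\textnormal{RLHF} \leq \left\lceil\frac{1}{\xi_{\max}}\left(\lambda\norm{\omega^\dagger}^2 + \norm{\omega^\dagger}\cdot\frac{2\overline{n}}{(1-\gamma)^2}\right)\right\rceil.
\end{align*}
Plugging $\overline{n}=0$ makes the second summand vanish and immediately yields the stated bound. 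The only remaining task is to verify that the attack-dataset construction underlying Corollary \ref{cor:rlhf-cor-1} still produces a feasible $\widehat{D}$ when the clean dataset is empty.

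To confirm this, I would follow the logistic-regression teaching template from the proofs of Theorem \ref{thm:warmup-rlhf-augment} and Theorem \ref{thm:rlhf-reg-gen}: take $\widehat{D}$ to consist of $n$ identical preference samples with $o=1$ and feature difference $\phi(\tau)-\phi(\tau')=c\,\omega^\dagger/\norm{\omega^\dagger}^2$ for a scalar $c$ to be chosen. With $\cleandata=\emptyset$, the loss $\ell^\omega_\textnormal{RLHF}(\widehat{D})$ is just the sigmoid log-likelihood of these $n$ samples plus the regularizer $\frac{\lambda}{2}\norm{\omega}^2$, so the stationarity condition $\nabla_\omega\ell^\omega_\textnormal{RLHF}(\widehat{D})=\mathbf{0}$ at $\omega=\omega^\dagger$ collapses, after projection onto $\omega^\dagger/\norm{\omega^\dagger}^2$, to the scalar identity $n\cdot c/(1+e^c)=\lambda\norm{\omega^\dagger}^2$.

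Since $\sup_{c}\, c/(1+e^c)=\xi_{\max}$, choosing $n=\ceil{\lambda\norm{\omega^\dagger}^2/\xi_{\max}}$ and $c=\xi^{-1}(\lambda\norm{\omega^\dagger}^2/n)$ solves this equation, and by $\lambda$-strong convexity of $\ell^\omega_\textnormal{RLHF}(\widehat{D})$ this makes $\omega^\dagger$ the unique MLE on $\widehat{D}$. Consequently $\widehat{\omega}=\omega^\dagger$ and $\pi^\textnormal{reg}_{r_{\widehat{\omega}}}=\pi^\dagger$, so the feasibility constraint $\norm{\pi^\dagger-\pi^\textnormal{reg}_{r_{\widehat{\omega}}}}_1^2\leq\epsilon$ of Problem \ref{op:rlhf-attack} holds with zero slack. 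The one non-routine ingredient is the existence of such an $\omega^\dagger$, which I would inherit directly from Corollary \ref{cor:rlhf-cor-1}; there it follows from the fact that, for loglinear $\mu$ and linear $r_\omega$, the regularized optimal policy of Problem \ref{eq:regularized_objective} is itself loglinear in a parameter that is affine in $\omega$, so inverting this map on any $\pi^\dagger\in\Pi^\textnormal{log}$ recovers the required $\omega^\dagger$.
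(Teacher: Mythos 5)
Your proposal is correct and follows essentially the same route as the paper: the corollary is obtained by setting $\overline{n}=0$ in Corollary \ref{cor:rlhf-cor-1}, with feasibility of the construction resting on the logistic-regression teaching set of Theorem \ref{thm:logistic_regression_teaching} (via Lemma \ref{lem:attack_subproblem_rlhf}) and the existence of $\omega^\dagger$ with $\pi^\dagger=\pi^\textnormal{reg}_{r_{\omega^\dagger}}$ from the feasibility result. Your scalar stationarity identity $n\cdot c/(1+e^c)=\lambda\norm{\omega^\dagger}^2$ is exactly the computation underlying the paper's construction.
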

\begin{remark}
    Note that, when $\cleandata=\emptyset$, the number of samples required for an efficient attack on regularized RLHF depends on the norm of the reward parameter that makes $\pi^\dagger$ nearly-optimal.
\end{remark}

\section{Poisoning Attacks on DPO}\label{sec:dpo-attack}

As we have mentioned, the DPO objective is formulated only for the contextual bandit setting. Recall that trajectories here are defined in terms of context-action pairs $\tau=(s,a)$. We instantiate the general poisoning attack problem~\ref{op:original-attack-problem} for the DPO paradigm as follows:\vspace{1cm}
\begin{align}\label{op:attack-dpo}
    & \min_{D}\;  |D|\;\; \textnormal{such that}\;\; \widehat{\theta}=\arg\min_\theta \ell^\theta_\textnormal{DPO}(D\cup\overline{D})\nonumber\\ & \quad \textnormal{and}\;\;\norm{\pi^\dagger - \pi_{\widehat{\theta}}}^2_1 \leq \epsilon~.\tag{P:Attack:DPO.1}
\end{align}
\begin{figure*}
    \centering
    \includegraphics[scale=0.35]{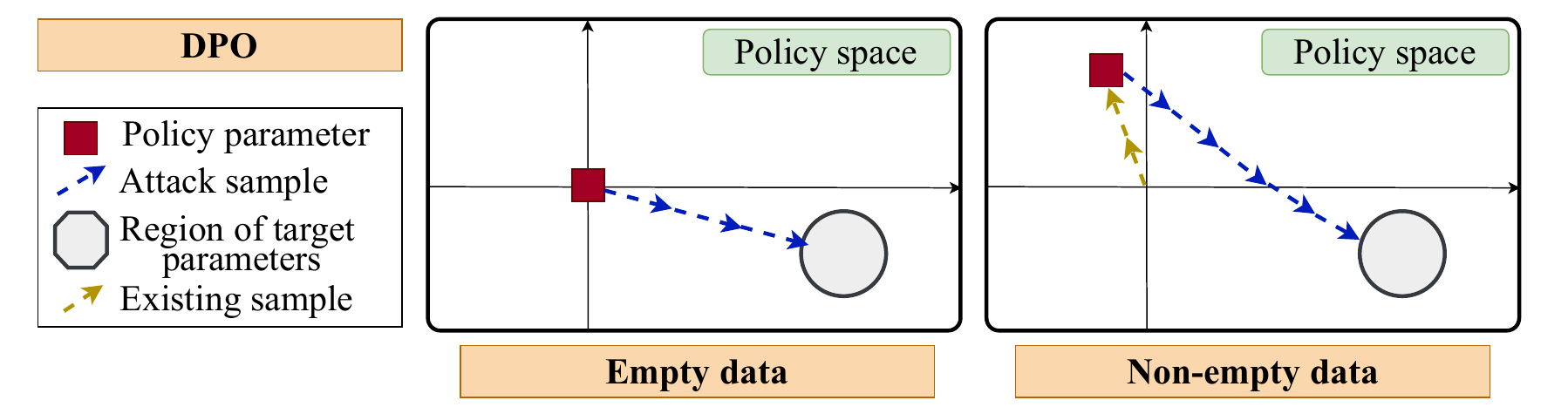}
    \caption{\small A geometric illustration of our attack model for DPO. Here, the distinction between empty and non-empty $\cleandata$ is similar to Figure~\ref{fig:rlhf-diagram}. In contrast to the RLHF setting, here the attacker operates directly in the policy parameter space and the target feasible region is a ball centered around $\theta^\dagger$ with radius $\epsilon$ as outlined in the formulation of Problem~\ref{op:dpo-aug}.}
    \label{fig:dpo-diagram}
\end{figure*}\\
In general, obtaining an intuitive form of attack construction for the above problem, as presented in Theorem~\ref{thm:warm_up_rlhf}, is challenging. To facilitate a more intuitive and efficient attack construction, we consider the following surrogate problem: 
\begin{align}\label{op:dpo-aug}
    & \min_{D}\; |D|\;\;\textnormal{such that}\;\; \widehat{\theta} =\arg\min_\theta\ell^\theta_\textnormal{DPO}\left(\cleandata\cup D\right)\nonumber\\ & \quad \textnormal{and}\;\; \norm{\widehat{\theta}-\theta^\dagger}^2\leq \epsilon'~,\tag{P:Attack.DPO.2}
\end{align}
where $\theta^\dagger \in \mathbb{R}^{d'}$ such that $\pi^\dagger = \pi_{\theta^\dagger}$ (see Section~\ref{sec:pref-poisoning-attack}). Specifically, we have replaced the $\ell_1$-norm-based constraint with an $\ell_2$-norm-based constraint on the policy parameter space. The attack is successful if the learner's policy parameter converges closer to the target parameter $\pi_{\theta^\dagger}$. We derive the following result for this setting:
\begin{restatable}{retheorem}{dpoaugment}\label{thm:dpo-augment}
    Let $\cleandata$ be a given preference dataset of $\overline{n}$ samples, let $\beta >0$ and $0<\epsilon'\leq\epsilon/2$. Furthermore, let $\overline{\theta}$ be the optimal point for $\ell^\theta_\textnormal{DPO}(\cleandata)$ and define
    \begin{align*}
        \widetilde{\theta} = \proj_{\theta:\norm{\theta-\theta^\dagger}\leq\epsilon'}\left(\overline{\theta}\right) = \theta^\dagger + \frac{(\epsilon')^2}{\norm{\overline{\theta}-\theta^\dagger}^2}\left(\overline{\theta}-\theta^\dagger\right)~.
    \end{align*}
    Then, the dataset $\widehat{D}$ containing $$2\ceil{|(\nabla_\theta\ell^{\widetilde{\theta}}_\textnormal{DPO}(\cleandata))^\top(\widetilde{\theta}-\theta_\mu)|/(2\xi_{\max})}$$ identical samples satisfying
    \begin{align*}
        & \beta\br{\widetilde{\theta}-\theta_\mu}^\top\br{\psi (s,a)-\psi(s,a')} \\ & \quad\quad= o \cdot \xi_2\left(\nabla_\theta\ell^{\widetilde{\theta}}_\textnormal{DPO}(\cleandata))^\top\left(\widetilde{\theta}-\theta_\mu\right)\right),
    \end{align*}
    with $o=1$ for half the samples and $o=-1$ for the remaining, is a feasible solution to Problem \ref{op:attack-dpo}. Furthermore, there exists an optimal solution $\widehat{D}$ to Problem \ref{op:attack-dpo} with $\widehat{n}_\textnormal{DPO}$ identical samples such that  
    \begin{align*}
        & \widehat{n}_\textnormal{DPO}\leq 2\cdot \\ & \ceil{(\overline{n}\beta+\lambda)\frac{\abs{\norm{\overline{\theta}-\theta^\dagger}^2-(\epsilon')^2}}{2\xi_{\max}\norm{\theta^\dagger-\overline{\theta}}}\left( 3\norm{\theta^\dagger} + \norm{\theta_\mu} + \sqrt{\epsilon'}\right)}.
    \end{align*}
\end{restatable}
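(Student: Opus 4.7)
The plan is to follow the same three-step strategy as in the proof of Theorem~\ref{thm:warmup-rlhf-augment}, adapted to the DPO loss and to the Euclidean ball constraint $\norm{\theta - \theta^\dagger}^2 \le \epsilon'$ of Problem~\ref{op:dpo-aug}. The first step is to reduce the original attack problem~\ref{op:attack-dpo} to the surrogate~\ref{op:dpo-aug}. Since $\pi^\dagger = \pi_{\theta^\dagger}$ and $\pi_{\widehat{\theta}}$ are both loglinear over the same features $\psi$ with $\max_{s,a}\norm{\psi(s,a)}\le 1$, the softmax map $\theta \mapsto \pi_\theta$ is Lipschitz in the Euclidean-to-weighted-$\ell_1$ sense with an absolute constant; combining this with $\norm{\widehat{\theta}-\theta^\dagger}^2 \le \epsilon'$ and the choice $\epsilon' \le \epsilon/2$ yields $\norm{\pi^\dagger - \pi_{\widehat{\theta}}}_1^2 \le \epsilon$. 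Hence any dataset that is feasible for the surrogate is feasible for~\ref{op:attack-dpo}, and it suffices to minimize $|D|$ subject to the surrogate constraint.

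The second step is to construct $\widehat{D}$ by forcing $\widetilde{\theta}$ to be the unique minimizer of $\ell^\theta_\textnormal{DPO}(\cleandata \cup \widehat{D})$. Because the loss is $\lambda$-strongly convex, this reduces to the first-order condition $\nabla\ell^{\widetilde{\theta}}_\textnormal{DPO}(\cleandata \cup \widehat{D}) = 0$. The per-sample gradient at $\widetilde{\theta}$ has the form $-o\beta\,\sigma\!\bigl(-o\beta(\widetilde{\theta}-\theta_\mu)^\top(\psi(s,a)-\psi(s,a'))\bigr)\,(\psi(s,a)-\psi(s,a'))$, so, invoking the synthesis capability, I would pick all samples identical up to a sign flip in $o$: for the $o=+1$ half I would impose $\beta(\widetilde{\theta}-\theta_\mu)^\top(\psi(s,a)-\psi(s,a')) = +\xi_2(c)$, and for the $o=-1$ half the opposite sign, where $c = \nabla\ell^{\widetilde{\theta}}_\textnormal{DPO}(\cleandata)^\top(\widetilde{\theta}-\theta_\mu)$. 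Aligning the shared direction of $\psi(s,a)-\psi(s,a')$ with $\nabla\ell^{\widetilde{\theta}}_\textnormal{DPO}(\cleandata)$ makes the contributions of the two halves collinear with that gradient, so the full vector first-order condition collapses to a single scalar equation. The function $\xi_2$ is tuned precisely so that this scalar equation is solved by exactly $2\lceil |c|/(2\xi_{\max})\rceil$ identical samples, matching the feasibility claim.

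The third step is to turn this into a closed-form upper bound on $\widehat{n}_\textnormal{DPO}$. The explicit projection formula yields $\norm{\widetilde{\theta}-\overline{\theta}} = \abs{\norm{\overline{\theta}-\theta^\dagger}^2 - (\epsilon')^2}/\norm{\overline{\theta}-\theta^\dagger}$, which is the data-dependent factor appearing in the theorem. Since $\overline{\theta}$ is the unconstrained minimizer of $\ell^\theta_\textnormal{DPO}(\cleandata)$, we have $\nabla\ell^{\overline{\theta}}_\textnormal{DPO}(\cleandata)=0$; $(\overline{n}\beta+\lambda)$-smoothness of the DPO loss (which follows from $\sigma' \le 1/4$ together with $\norm{\psi(s,a)-\psi(s,a')}\le 2$) then gives $\norm{\nabla\ell^{\widetilde{\theta}}_\textnormal{DPO}(\cleandata)} \le (\overline{n}\beta+\lambda)\norm{\widetilde{\theta}-\overline{\theta}}$. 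Combining with Cauchy--Schwarz, $|c| \le \norm{\nabla\ell^{\widetilde{\theta}}_\textnormal{DPO}(\cleandata)}\,\norm{\widetilde{\theta}-\theta_\mu}$, and bounding $\norm{\widetilde{\theta}-\theta_\mu}$ by $3\norm{\theta^\dagger} + \norm{\theta_\mu} + \sqrt{\epsilon'}$ via the triangle inequality (routing through $\widetilde{\theta}-\theta^\dagger$, with $\norm{\widetilde{\theta}-\theta^\dagger}\le \sqrt{\epsilon'}$ from the ball, plus a standard comparison between $\overline{\theta}$ and $\theta^\dagger$) yields the claimed upper bound.

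The main obstacle is the symmetric sample construction in the second step: it is precisely what allows a one-dimensional equation to imply the full vector first-order condition, and the slightly unusual function $\xi_2$ (with its ceiling inside $\xi^{-1}$) is what turns that scalar equation into an integer sample count without leaving the domain of $\xi^{-1}$. A secondary technical care is needed in the degenerate case where $\overline{\theta}$ already lies inside the ball $\norm{\theta-\theta^\dagger}^2\le \epsilon'$, so that $\widetilde{\theta}=\overline{\theta}$ and $c=0$; here the feasibility statement is vacuous (take $\widehat{D}=\emptyset$) and the upper bound holds trivially since $\abs{\norm{\overline{\theta}-\theta^\dagger}^2-(\epsilon')^2}$ is controlled by the ball radius.
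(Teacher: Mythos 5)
Your proposal is correct and follows the same overall skeleton as the paper's proof: reduce Problem \ref{op:attack-dpo} to the surrogate \ref{op:dpo-aug} via the Lipschitzness of loglinear policies in their parameters (the paper's Lemma \ref{lem:surrogate_connection_dpo}), force $\widetilde{\theta}$ to be the minimizer via the symmetric $o=\pm 1$ sample construction that collapses the vector first-order condition to a scalar equation (Lemma \ref{lem:attack_subproblem_dpo}), and then bound the resulting sample count using the explicit projection of $\overline{\theta}$ onto the ball together with the $(\overline{n}\beta+\lambda)$-Lipschitzness of the gradient and Cauchy--Schwarz. The one genuine difference is in the middle: the paper reformulates the attack as $\min_\theta \abs{\nabla_\theta\ell^\theta_\textnormal{DPO}(\cleandata)^\top(\theta-\theta_\mu)}$ subject to the ball constraint, runs a full KKT analysis with the Woodbury inversion formula and a quadratic fixed-point equation in the multiplier to characterize the optimizer $\theta^*$ and derive $\norm{\theta^*}\leq 3\norm{\theta^\dagger}+\sqrt{\epsilon'}$, and only then evaluates the objective at the feasible point $\widetilde{\theta}$. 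You skip that machinery entirely and bound $\norm{\widetilde{\theta}-\theta_\mu}$ directly by the triangle inequality through $\theta^\dagger$, using $\norm{\widetilde{\theta}-\theta^\dagger}\leq\sqrt{\epsilon'}$. Since the theorem only requires an upper bound on the optimal sample size, exhibiting the feasible solution built from $\widetilde{\theta}$ and computing $\norm{\widetilde{\theta}-\overline{\theta}} = \abs{\norm{\overline{\theta}-\theta^\dagger}^2-(\epsilon')^2}/\norm{\overline{\theta}-\theta^\dagger}$ already yields the claimed bound; your route is therefore both valid and cleaner, at the cost of not characterizing the actual optimizer of the reformulated problem (which the paper's final inequality does not use anyway). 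Your handling of the degenerate case $\widetilde{\theta}=\overline{\theta}$ is a sensible addition the paper leaves implicit.
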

\textit{Sketch of proof.} First, we show that, for suitable $\epsilon'$, any feasible solution to the surrogate problem is feasible for the original problem. The first main challenge in this setting for constructing $\widehat{D}$ is the presence of both $\cleandata$ and $\theta_\mu$ in the DPO objective. We first show that, given parameter $\theta$, the sample size that makes $\pi_\theta$ optimal for $\ell^\theta_\textnormal{DPO}(\cleandata\cup D)$ is a factor of $\abs{\nabla_\theta\ell^\theta_\textnormal{DPO}(\cleandata)^\top(\theta-\theta_\mu)}$. Using this, we redefine the objective of our problem and use the $\ell_2$-ball centered at $\theta^\dagger$ with radius $\sqrt{\epsilon'}$ as constraint. The second main challenge consists of dealing with inverses of sums of matrices, due to the effect of the pre-existing data, for which we use the Woodbury inversion formula, and then proceed to solve a quadratic equation to obtain a fixed-point solution of our problem. Using the properties of that solution, we obtain the bounds on the norm of the optimal parameter $\widetilde{\theta}$. To obtain bounds on the norm of the gradient, we utilize Lipschitzness of $\ell^\theta_\textnormal{DPO}(\cleandata)$, and the geometrical relationship between $\overline{\theta}$ and $\theta^\dagger$. \qed

Next, we establish a lower bound on the attack sample complexity for the DPO setting. 
\begin{restatable}{retheorem}{dpolowerbound}\label{thm:dpo-lower-bound}
    Let $\cleandata$ be a given preference dataset of $\overline{n}$ samples and let $\beta >0$. Then, there exists $\eta_{\min}>0$, such that for any $\epsilon'\geq\epsilon/\eta_{\min}$, we have
    \begin{align*}
        \widehat{n}_\textnormal{DPO} \geq 2\ceil{\frac{\lambda}{2\xi_{\max}}\left(\norm{\theta^\dagger-\theta_\mu}-\sqrt{\epsilon'}\right)^2} -\overline{n}~.
    \end{align*}
\end{restatable}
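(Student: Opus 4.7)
My plan is to derive the lower bound via the first-order (KKT) condition for the DPO minimizer combined with a triangle inequality. Intuitively, the quadratic regularizer $\tfrac{\lambda}{2}\norm{\theta-\theta_\mu}^2$ in \ref{eq:dpo_loss} continually pulls $\widehat{\theta}$ toward $\theta_\mu$, while each preference sample can contribute at most $\xi_{\max}$ of ``pull'' in any opposing direction; so forcing $\widehat{\theta}$ to be $\sqrt{\epsilon'}$-close to a far-away $\theta^\dagger$ requires on the order of $\lambda\norm{\theta^\dagger-\theta_\mu}^2/\xi_{\max}$ total samples.

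The first step is to translate the original $\ell_1$-policy constraint $\norm{\pi^\dagger-\pi_{\widehat{\theta}}}_1^2\leq\epsilon$ into a parameter-space condition $\norm{\widehat{\theta}-\theta^\dagger}\leq\sqrt{\epsilon'}$. Because $\pi^\dagger,\pi_{\widehat{\theta}}\in\Pi^\textnormal{log}$ with $\Psi$ full-rank and $\max_{s,a}\norm{\psi(s,a)}\leq 1$, the softmax map $\theta\mapsto\pi_\theta$ admits a reverse-Lipschitz estimate on any bounded parameter region, giving some $\eta>0$ such that $\norm{\pi_{\theta_1}-\pi_{\theta_2}}_1\geq\sqrt{\eta}\,\norm{\theta_1-\theta_2}$. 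Setting $\eta_{\min}:=\eta$, the hypothesis $\epsilon'\geq\epsilon/\eta_{\min}$ then ensures that any attack feasible for Problem~\ref{op:attack-dpo} automatically satisfies $\norm{\widehat{\theta}-\theta^\dagger}\leq\sqrt{\epsilon'}$.

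With that reduction in hand, I write the stationarity condition for $\widehat{\theta}$. Using the loglinear identity $\beta\log(\pi_\theta(a|s)/\mu(a|s))-\beta\log(\pi_\theta(a'|s)/\mu(a'|s))=\beta(\theta-\theta_\mu)^\top\Delta_\psi$ with $\Delta_\psi=\psi(s,a)-\psi(s,a')$, setting $\nabla_\theta\ell^\theta_\textnormal{DPO}(\cleandata\cup\widehat{D})=0$ at $\widehat{\theta}$ yields
\begin{align*}
\lambda(\widehat{\theta}-\theta_\mu)=\sum_{i\in\cleandata\cup\widehat{D}}\sigma(-u_i)\,o_i\beta\,\Delta_{\psi,i},
\end{align*}
where $u_i:=o_i\beta\,\Delta_{\psi,i}^{\top}(\widehat{\theta}-\theta_\mu)$. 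Taking the inner product with $(\widehat{\theta}-\theta_\mu)$ on both sides collapses the right-hand side into $\sum_i u_i/(1+\exp(u_i))$. Since $x/(1+\exp(x))\leq\xi_{\max}$ for every $x\in\mathbb{R}$, this gives $\lambda\norm{\widehat{\theta}-\theta_\mu}^2\leq(|\widehat{D}|+\overline{n})\xi_{\max}$.

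Finally, the reverse triangle inequality $\norm{\widehat{\theta}-\theta_\mu}\geq\norm{\theta^\dagger-\theta_\mu}-\norm{\widehat{\theta}-\theta^\dagger}\geq\norm{\theta^\dagger-\theta_\mu}-\sqrt{\epsilon'}$ (trivial unless $\norm{\theta^\dagger-\theta_\mu}>\sqrt{\epsilon'}$) combined with integrality of $|\widehat{D}|$ yields the stated $2\lceil\,\cdot\,/(2\xi_{\max})\rceil-\overline{n}$ form, the paired $\lceil\cdot/2\rceil$ shape mirroring the $(o=\pm 1)$ construction in Theorem~\ref{thm:dpo-augment}. The main obstacle is the first step: identifying $\eta_{\min}$ together with its associated domain. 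The subtlety is that the Jacobian of the softmax parameterization has no uniform lower spectral bound on all of $\mathbb{R}^{d'}$, so the order of operations is crucial — one must first extract a uniform $\norm{\widehat{\theta}}$ bound from the KKT identity itself (a small fixed-point argument using only $\norm{\theta_\mu}$, $|\widehat{D}|+\overline{n}$, and $\xi_{\max}/\lambda$), and only then invoke full-rank $\Psi$ and the bounded-feature assumption to secure a uniform reverse-Lipschitz constant on the resulting compact region.
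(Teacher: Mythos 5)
Your proposal is correct in substance but reaches the bound by a different route than the paper for the key counting step. The paper first shows (its Lemma~\ref{lem:surrogate_connection_dpo}) that the feasible set of the original problem is contained in the ball $\norm{\theta-\theta^\dagger}^2\le\epsilon'$ — exactly your first step — and then lower-bounds the sample count by reducing to the surrogate convex program $\min_\theta\norm{\theta-\theta_\mu}^2$ subject to $\norm{\theta-\theta^\dagger}^2\le\epsilon'$, solving it by KKT to get $\norm{\widetilde\theta-\theta_\mu}=\norm{\theta^\dagger-\theta_\mu}-\sqrt{\epsilon'}$, and invoking its ``teaching-set'' optimality claim (Lemma~\ref{lem:attack_subproblem_dpo}) together with a best-case argument that $\cleandata$ could be part of an optimal attack set, whence the $-\overline{n}$. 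You instead take the inner product of the stationarity condition of the \emph{full} DPO loss with $\widehat\theta-\theta_\mu$ and use $x/(1+e^x)\le\xi_{\max}$ pointwise to get $\lambda\norm{\widehat\theta-\theta_\mu}^2\le(|\widehat D|+\overline n)\xi_{\max}$, then apply the reverse triangle inequality. This is arguably cleaner: it does not rely on the optimality of the paired $o=\pm1$ construction, and it produces the $-\overline n$ term as an honest consequence of each clean sample contributing at most $\xi_{\max}$, rather than via the paper's informal ``best case for the attacker'' argument. Two caveats. First, your argument yields $|\widehat D|\ge\ceil{\lambda(\norm{\theta^\dagger-\theta_\mu}-\sqrt{\epsilon'})^2/\xi_{\max}}-\overline n$, whereas the stated bound is $2\ceil{\lambda(\cdot)^2/(2\xi_{\max})}-\overline n$; these can differ by one sample (e.g.\ when $\lambda(\cdot)^2/\xi_{\max}$ is an odd integer), so integrality alone does not give you the exact paired-ceiling form — this is an off-by-one matter, not a conceptual gap. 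Second, your observation that the reverse-Lipschitz constant $\eta_{\min}$ has no uniform lower bound over all of $\mathbb{R}^{d'}$ is well taken: the paper's own lemma simply writes $\min_\theta\eta_{\min}(\theta)>0$ without restricting to a compact set, so your proposed fix (first bounding $\norm{\widehat\theta}$ from the stationarity identity, then taking the minimum over the resulting compact region) is more careful than the source; just note that the a priori bound on $\norm{\widehat\theta}$ depends on $|\widehat D|$, so you must split into the case where $|\widehat D|$ already exceeds the claimed bound (done) and the case where it does not (so $\widehat\theta$ is confined to a compact set and the reverse-Lipschitz estimate applies).
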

\textit{Sketch of proof}. Let $\Theta_1 =\{\theta:\norm{\pi_\theta-\pi_{\theta^\dagger}}_1^2\leq\epsilon$ and $\Theta_2 = \{ \theta: \norm{\theta-\theta^\dagger}^2\leq\epsilon'\} $. First, we prove that there exists a positive constant $\eta_{\min}$ such that $\Theta_1\subseteq\Theta_2$, for any $\epsilon'\geq\epsilon/\eta_{\min}$. This means that the feasible region of the surrogate problem is larger, which implies that a lower bound on the solution of Problem \ref{op:dpo-aug} is also a lower bound on the solution of Problem \ref{op:attack-dpo}. Next, we focus on Problem \ref{op:dpo-aug} and show that we can reduce it to a convex program. Using KKT conditions, we obtain an exact solution to the problem. We then use this solution as a lower bound for Problem \ref{op:attack-dpo}.\qed

\begin{remark}
    Note that the upper bounds from Theorem \ref{thm:dpo-augment} hold for different value of $\epsilon'$ than the one required for the lower bounds of Theorem \ref{thm:dpo-lower-bound}. This is because we are essentially tuning the radius of the feasible region of Problem \ref{op:dpo-aug} so that it is either contained in the feasible region of Problem \ref{op:attack-dpo} (which is what we need for the upper bounds), or it contains the feasible region of  Problem \ref{op:attack-dpo} (which is what we need for lower bounds). 
\end{remark}

When there is no pre-existing data, the attacker can synthesise any poisoning dataset from scratch. An immediate solution to this problem is the instantiation of Theorem \ref{thm:dpo-augment} when $\cleandata=\emptyset$. However, below we also provide tight upper bounds that match the lower bounds of Theorem \ref{thm:dpo-lower-bound} for the empty data setting.

\begin{restatable}{retheorem}{dpogenerationdagger}\label{thm:dpo-gen-dagger}
    Let $\cleandata=\emptyset$, let $\beta >0$ and $0<\epsilon'\leq\epsilon/2$. Furthermore, let $\pi^\dagger, \mu\in\Pi^\textnormal{log}$ be loglinear with parameters $\theta^\dagger$ and $\theta_\mu$, respectively. Define $$\widetilde{\theta} = \theta^\dagger + e\sqrt{\epsilon'}(\theta_\mu - 2\theta^\dagger)/\norm{\theta_\mu - 2\theta^\dagger},$$ where $e=1$, if ${\theta^\dagger}^\top\left(\theta^\dagger-\theta_\mu\right)\geq \sqrt{\epsilon'}-\epsilon'$, and $e=-1$, otherwise. Then, the dataset of $ 2\ceil{\frac{\lambda \abs{\lambda\widetilde{\theta}^\top\br{\widetilde{\theta}-\theta_\mu}}}{2\xi_{\max}}}$ samples satisfying 
    \begin{align*}
        \beta\br{\widetilde{\theta}-\theta_\mu}^\top\br{\psi (s,a)-\psi(s,a')} = o \cdot \xi_2\left( \lambda\norm{\widetilde{\theta}-\theta_\mu}^2\right)
    \end{align*}
    with $o = 1$ for half of the samples, and $o = -1$ for the remaining is a feasible solution to Problem \ref{op:attack-dpo}. Furthermore, there exists an optimal solution $\widehat{D}$ to Problem \ref{op:attack-dpo} with $\widehat{n}_\textnormal{DPO}$ identical samples such that
    \begin{align*}
        \widehat{n}_\textnormal{DPO} \leq   2\ceil{\frac{\lambda}{2\xi_{\max}}\left(\norm{\theta^\dagger-\theta_\mu}-\sqrt{\epsilon'}\right)^2}~.
    \end{align*}
    Finally, there exists $\eta_{\min}>0$, such that, for any $\epsilon'\geq \epsilon/\eta_{\min}$, we have
    \begin{align*}
        \widehat{n}_\textnormal{DPO} \geq   2\ceil{\frac{\lambda}{2\xi_{\max}}\left(\norm{\theta^\dagger-\theta_\mu}-\sqrt{\epsilon'}\right)^2}~.
    \end{align*}
\end{restatable}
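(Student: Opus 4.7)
The proof decomposes into three components: constructing a feasible poisoning dataset, bounding its size from above, and establishing the matching lower bound. With $\cleandata=\emptyset$, the DPO loss simplifies to $\ell^\theta_\textnormal{DPO}(D) = -\sum_{(\tau,\tau',o)\in D}\log\sigma(o\delta_\beta) + \tfrac{\lambda}{2}\norm{\theta - \theta_\mu}^2$, and for loglinear $\pi_\theta,\mu \in \Pi^\textnormal{log}$ the DPO preference margin reduces to $\delta_\beta = \beta(\theta - \theta_\mu)^\top(\psi(s,a) - \psi(s,a'))$. These two simplifications drive the entire argument and eliminate the partition functions from the KKT analysis.

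The first step is to restrict attention to the surrogate Problem~\ref{op:dpo-aug}: for $\epsilon' \leq \epsilon/2$ the $\ell_2$-ball $\{\theta : \norm{\theta - \theta^\dagger}^2 \leq \epsilon'\}$ is contained in the $\ell_1$-feasible region of Problem~\ref{op:attack-dpo}, so any feasible solution of the surrogate is feasible for the original — this containment is exactly the one used at the start of Theorem~\ref{thm:dpo-augment} and I would invoke it verbatim. I would then produce identical samples split evenly between $o=+1$ and $o=-1$ so that $\widetilde{\theta}$ is the unique minimiser of $\ell^\theta_\textnormal{DPO}(\widehat{D})$. Writing the stationarity condition $\sum o\,\sigma(-o\delta_\beta(\widetilde{\theta}))\beta(\psi(s,a)-\psi(s,a')) = \lambda(\widetilde{\theta}-\theta_\mu)$, the symmetric $o=\pm 1$ split forces the two sub-groups' feature-difference vectors to be antiparallel and aligned with $\widetilde{\theta} - \theta_\mu$, while the magnitude constraint $\beta(\widetilde{\theta}-\theta_\mu)^\top(\psi(s,a)-\psi(s,a')) = o\cdot\xi_2(\lambda\norm{\widetilde{\theta}-\theta_\mu}^2)$ pins down the number of identical samples per sub-group through the defining identity $\xi_2(a)\sigma(-\xi_2(a)) = 2a/\lceil a/(2\xi_{\max})\rceil$.

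The second step is to pick $\widetilde{\theta}$ on the sphere $\norm{\widetilde{\theta} - \theta^\dagger} = \sqrt{\epsilon'}$ so that the resulting sample count is as small as possible. Parametrising $\widetilde{\theta} = \theta^\dagger + \sqrt{\epsilon'}\,u$ with $\norm{u}=1$, a direct expansion gives $\widetilde{\theta}^\top(\widetilde{\theta} - \theta_\mu) = (\theta^\dagger)^\top(\theta^\dagger - \theta_\mu) + \epsilon' + \sqrt{\epsilon'}\,u^\top(2\theta^\dagger - \theta_\mu)$. This is an affine function of $u$; its absolute value is minimised on the unit sphere by $u = \pm(\theta_\mu - 2\theta^\dagger)/\norm{\theta_\mu - 2\theta^\dagger}$, and the sign $e\in\{\pm 1\}$ is selected to make the resulting value closest to zero — which is exactly the case split $(\theta^\dagger)^\top(\theta^\dagger - \theta_\mu) \gtrless \sqrt{\epsilon'} - \epsilon'$ stated in the theorem. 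A triangle-inequality argument then gives $\norm{\widetilde{\theta}-\theta_\mu}^2 \leq (\norm{\theta^\dagger - \theta_\mu} - \sqrt{\epsilon'})^2$, and substituting into the sample-count expression produced in the first step yields the upper bound $\widehat{n}_\textnormal{DPO} \leq 2\lceil \lambda(\norm{\theta^\dagger-\theta_\mu} - \sqrt{\epsilon'})^2/(2\xi_{\max})\rceil$.

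For the lower bound I would invoke Theorem~\ref{thm:dpo-lower-bound} directly with $\overline{n}=0$ and $\cleandata=\emptyset$: in the regime $\epsilon' \geq \epsilon/\eta_{\min}$, the $\ell_2$-ball feasible region of Problem~\ref{op:dpo-aug} \emph{contains} the $\ell_1$-feasible region of Problem~\ref{op:attack-dpo}, so any lower bound on the optimum of the surrogate is a valid lower bound on the optimum of the original. The KKT analysis sketched in Theorem~\ref{thm:dpo-lower-bound} then reduces the surrogate to a tractable convex program whose exact solution is $2\lceil \lambda(\norm{\theta^\dagger-\theta_\mu} - \sqrt{\epsilon'})^2/(2\xi_{\max})\rceil$, matching the upper bound. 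The main obstacle of the whole argument will be the tailored choice of $\widetilde{\theta}$: one must simultaneously verify that (i) $\widetilde{\theta}$ lies in the surrogate's feasible region, (ii) the proposed identical samples genuinely realise $\widetilde{\theta}$ as the minimiser of $\ell^\theta_\textnormal{DPO}(\widehat{D})$ (a somewhat fiddly KKT/Bradley–Terry log-loss calculation), and (iii) the resulting sample count collapses to the lower bound — which is precisely where the otherwise-mysterious direction $(\theta_\mu - 2\theta^\dagger)/\norm{\theta_\mu - 2\theta^\dagger}$ and the sign rule on $e$ earn their keep.
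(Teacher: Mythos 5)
Your overall architecture is right — reduce to the surrogate via Lemma~\ref{lem:surrogate_connection_dpo}, realize a chosen $\widetilde{\theta}$ with a balanced $o=\pm 1$ construction à la Lemma~\ref{lem:attack_subproblem_dpo}, and get the lower bound from the reverse containment — and your first and third steps match the paper. The gap is in your second step, the choice of $\widetilde{\theta}$ and the resulting upper bound.

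When $\cleandata=\emptyset$ the gradient of the loss at $\widetilde{\theta}$ is just the regularizer's gradient, $\nabla_\theta\ell^{\widetilde{\theta}}_\textnormal{DPO}(\emptyset)=\lambda(\widetilde{\theta}-\theta_\mu)$, so the quantity that controls the sample count is $\abs{\nabla_\theta\ell^{\widetilde{\theta}}_\textnormal{DPO}(\emptyset)^\top(\widetilde{\theta}-\theta_\mu)}=\lambda\norm{\widetilde{\theta}-\theta_\mu}^2$, not $\abs{\widetilde{\theta}^\top(\widetilde{\theta}-\theta_\mu)}$. (The count printed in the theorem statement is a typo; it is inconsistent with the feature construction in the same statement and with Lemma~\ref{lem:attack_subproblem_dpo}.) The correct surrogate is therefore $\min_\theta\norm{\theta-\theta_\mu}^2$ subject to $\norm{\theta-\theta^\dagger}^2\leq\epsilon'$, which is the paper's route: KKT gives $\widetilde{\theta}=\frac{1}{1+\alpha}(\alpha\theta^\dagger+\theta_\mu)$ with $\alpha=\norm{\theta^\dagger-\theta_\mu}/\sqrt{\epsilon'}-1$, i.e.\ $\widetilde{\theta}=\theta^\dagger+\sqrt{\epsilon'}(\theta_\mu-\theta^\dagger)/\norm{\theta_\mu-\theta^\dagger}$, and the optimal value is \emph{exactly} $(\norm{\theta^\dagger-\theta_\mu}-\sqrt{\epsilon'})^2$, which yields both the upper and (via the containment for $\epsilon'\geq\epsilon/\eta_{\min}$) the lower bound. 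Your expansion of $\widetilde{\theta}^\top(\widetilde{\theta}-\theta_\mu)$ along the sphere and the resulting direction $(\theta_\mu-2\theta^\dagger)$ correctly reverse-engineer where the statement's formula for $\widetilde{\theta}$ comes from, but it optimizes the wrong functional.

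The step that actually breaks is the patch you use to land on the claimed bound: ``a triangle-inequality argument then gives $\norm{\widetilde{\theta}-\theta_\mu}^2\leq(\norm{\theta^\dagger-\theta_\mu}-\sqrt{\epsilon'})^2$.'' For any $\widetilde{\theta}$ on the sphere $\norm{\widetilde{\theta}-\theta^\dagger}=\sqrt{\epsilon'}$, the reverse triangle inequality gives $\norm{\widetilde{\theta}-\theta_\mu}\geq\norm{\theta^\dagger-\theta_\mu}-\sqrt{\epsilon'}$, so the inequality you need goes the \emph{wrong way}; equality holds only for the direction $(\theta_\mu-\theta^\dagger)/\norm{\theta_\mu-\theta^\dagger}$, which is generally not $(\theta_\mu-2\theta^\dagger)/\norm{\theta_\mu-2\theta^\dagger}$. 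So with your choice of $\widetilde{\theta}$ the sample count can strictly exceed $2\ceil{\lambda(\norm{\theta^\dagger-\theta_\mu}-\sqrt{\epsilon'})^2/(2\xi_{\max})}$ and the upper bound is not established. Replacing your minimization target by $\norm{\theta-\theta_\mu}^2$ and taking the projection point repairs the argument and simultaneously delivers the matching lower bound.
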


\section{Comparison between RLHF and DPO for Attack Susceptibility}\label{sec:comparison-of-paradigms}

In this section, we present some interesting takeaways from the analysis of previous sections. We aim to provide a comparative analysis of the attack sample complexities between the RLHF and DPO paradigms. Specifically, we focus on the contextual bandit setting.

The following result establishes an explicit relationship between the sample complexities of data augmentation attacks on the RLHF and DPO paradigms. Specifically, we compare the sample complexities, $\widehat{n}_\textnormal{RLHF}$ and $\widehat{n}_\textnormal{DPO}$, required by the optimal solutions to Problems~\ref{op:rlhf-attack} and~\ref{op:attack-dpo}, respectively (with $\abs{\cleandata} = \overline{n}$). 
\begin{restatable}{retheorem}{comparisonwithdata}\label{thm:comparison-augmentation}
    Let $\pi^\dagger, \mu\in\Pi^\textnormal{log}$ be loglinear with parameters $\theta^\dagger$ and $\theta_\mu$, respectively. Furthermore, let 
    $\epsilon$ be such that $\epsilon\leq 1/(2\xi_{\max})$, $\epsilon'\geq \epsilon/\eta_{\min}$, where $\eta_{\min}>0$ is an absolute constant, and let $\omega^\dagger$ be a feasible solution to Problem \ref{op:rlhf-attack}. Define $\kappa_1$ as
    \begin{align*}
         & \left(\frac{\lambda}{\xi_{\max}}\left(\norm{\theta^\dagger-\theta_\mu}-\sqrt{\epsilon'}\right)^2 -\overline{n}\right) \\ & \quad\quad\quad \cdot  \ceil{\frac{1}{\xi_{\max}}\left(\lambda\norm{\omega^\dagger}^2 + \norm{\omega^\dagger}\cdot\frac{2\overline{n}}{(1-\gamma)^2}\right)}^{-1}
    \end{align*}
    Then, we have $\widehat{n}_\textnormal{DPO} \geq \kappa_1 \cdot\widehat{n}_\textnormal{RLHF}$.
\end{restatable}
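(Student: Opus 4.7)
\textbf{Proof plan for Theorem \ref{thm:comparison-augmentation}.} The statement is naturally set up as a direct combination of a lower bound on $\widehat{n}_\textnormal{DPO}$ with an upper bound on $\widehat{n}_\textnormal{RLHF}$, both of which are already available as earlier results in the excerpt. The plan is therefore to (i) invoke Theorem~\ref{thm:dpo-lower-bound} to bound $\widehat{n}_\textnormal{DPO}$ from below, (ii) invoke Corollary~\ref{cor:rlhf-cor-1} to bound $\widehat{n}_\textnormal{RLHF}$ from above for any feasible $\omega^\dagger$, and (iii) divide the two to identify the ratio with $\kappa_1$.

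First, since $\pi^\dagger,\mu\in\Pi^\textnormal{log}$ and $\epsilon'\geq\epsilon/\eta_{\min}$ by hypothesis, Theorem~\ref{thm:dpo-lower-bound} applies with the same $\eta_{\min}$ and gives
\begin{align*}
\widehat{n}_\textnormal{DPO}\;\geq\;2\left\lceil\frac{\lambda}{2\xi_{\max}}\bigl(\norm{\theta^\dagger-\theta_\mu}-\sqrt{\epsilon'}\bigr)^2\right\rceil-\overline{n}\;\geq\;\frac{\lambda}{\xi_{\max}}\bigl(\norm{\theta^\dagger-\theta_\mu}-\sqrt{\epsilon'}\bigr)^2-\overline{n},
\end{align*}
where the last inequality uses $2\lceil x\rceil\geq 2x$.

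Second, since $\omega^\dagger$ is a feasible solution to Problem~\ref{op:rlhf-attack} in the regularized RLHF setting with $\cleandata\neq\emptyset$ and $\epsilon\leq 1/(2\xi_{\max})$ (which ensures the KL-surrogate~\ref{op:reg-rlhf-attack} is applicable and its feasible region is contained in that of Problem~\ref{op:rlhf-attack}, so that Corollary~\ref{cor:rlhf-cor-1} applies), Corollary~\ref{cor:rlhf-cor-1} provides a feasible construction whose size satisfies
\begin{align*}
\widehat{n}_\textnormal{RLHF}\;\leq\;\left\lceil\frac{1}{\xi_{\max}}\left(\lambda\norm{\omega^\dagger}^2+\norm{\omega^\dagger}\cdot\frac{2\overline{n}}{(1-\gamma)^2}\right)\right\rceil.
\end{align*}
Since the optimal attack sample complexity is at most this constructive bound, the displayed inequality also upper-bounds the optimum $\widehat{n}_\textnormal{RLHF}$.

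Third, since the right-hand side of the $\widehat{n}_\textnormal{RLHF}$ bound is strictly positive, dividing the two displays yields
\begin{align*}
\frac{\widehat{n}_\textnormal{DPO}}{\widehat{n}_\textnormal{RLHF}}\;\geq\;\frac{\frac{\lambda}{\xi_{\max}}\bigl(\norm{\theta^\dagger-\theta_\mu}-\sqrt{\epsilon'}\bigr)^2-\overline{n}}{\left\lceil\frac{1}{\xi_{\max}}\bigl(\lambda\norm{\omega^\dagger}^2+\norm{\omega^\dagger}\cdot\frac{2\overline{n}}{(1-\gamma)^2}\bigr)\right\rceil}\;=\;\kappa_1,
\end{align*}
which rearranges to the claimed inequality $\widehat{n}_\textnormal{DPO}\geq\kappa_1\cdot\widehat{n}_\textnormal{RLHF}$.

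The main subtlety (rather than obstacle) lies in verifying that the hypotheses on $\epsilon$ and $\epsilon'$ simultaneously license both earlier results: the condition $\epsilon'\geq\epsilon/\eta_{\min}$ is exactly what Theorem~\ref{thm:dpo-lower-bound} needs so that the DPO surrogate ball of radius $\sqrt{\epsilon'}$ contains the $\ell_1$-ball of radius $\sqrt{\epsilon}$, while $\epsilon\leq 1/(2\xi_{\max})$ keeps $\epsilon$ in the regime where the KL-surrogate used to derive the RLHF upper bound is a valid tightening of the $\ell_1$ constraint. Once both are in force, the result is a one-line division; no further computation is required.
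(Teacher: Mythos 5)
Your proposal is correct and follows essentially the same route as the paper: lower-bound $\widehat{n}_\textnormal{DPO}$ via Theorem~\ref{thm:dpo-lower-bound}, upper-bound $\widehat{n}_\textnormal{RLHF}$ via the feasible construction of Corollary~\ref{cor:rlhf-cor-1}, and take the ratio to identify $\kappa_1$. The only (shared, harmless) implicit point is that the division step assumes the numerator of $\kappa_1$ is nonnegative; when it is negative the claim holds trivially since $\widehat{n}_\textnormal{DPO}\geq 0$.
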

The value of $\kappa_1$ is proportional to 
the distance between $\theta^\dagger$ and $\theta_\mu$, which captures how far $\pi^\dagger$ is from the reference policy $\mu$. We observe that, the greater the distance between $\pi^\dagger$ and $\mu$, the less susceptible DPO becomes relative to RLHF. 
Lower susceptibility implies DPO has a stronger tendency to remain close to $\mu$. Furthermore, note that if $\frac{\lambda}{\xi_{\max}}\left(\norm{\theta^\dagger-\theta_\mu}-\sqrt{\epsilon'}\right)^2 \leq \overline{n}$, the lower bound in Theorem~\ref{thm:dpo-lower-bound} becomes vacuous. Therefore, we should assume that $\pi^\dagger$ and $\mu$ are far enough, or that the size of the $\cleandata$ is small, for this bound to be meaningful. This assumption is not restrictive, as in practice, the nature of $\pi^\dagger$ often differs significantly from $\mu$, leading to large divergence terms.

\section{Related Work}

\paragraph{Adversarial attacks in machine learning (ML).} The problem of adversarial attacks in ML has a long history \citep{szegedy2013intriguing, biggio2013evasion, nguyen2015deep, papernot2017practical, biggio2012poisoning, li2016data, xiao2012adversarial}, where various types of attacks   have been considered, including training-time attacks, test-time attacks, and backdoor attacks. The  focus of the present study is on training-time attacks. The closest to our work in this domain is that of \cite{DBLP:journals/jmlr/LiuZ16}, who consider the teaching problem (via data synthesis) to various types of learners, including logistic regression learners. Similar to \citep{DBLP:journals/jmlr/LiuZ16}, we also consider logistic regression in our optimization problems. However, our attack problems include additional constraints, which necessitate the usage of additional technical machinery. 

\looseness-1\paragraph{Data poisoning attacks and defenses in  (multi-agent) reinforcement learning ((MA)RL).} Adversarial attacks in RL have been explored extensively in the literature~\citep{huang2017adversarial, gleave2019adversarial, lin2017tactics, sun2020stealthy, rangi2022understanding, ma2019policy,DBLP:conf/icml/RakhshaRD0S20,DBLP:journals/jmlr/RakhshaRDZS21}, including training-time attacks \citep{rakhsha2020policy, xu2021transferable}, test-time attacks \citep{behzadan2017whatever, huang2017adversarial, kos2017delving, sun2020stealthy}, backdoor attacks \citep{kiourti2020trojdrl, wang2021stop, yang2019design} and attacks to MARL systems \citep{wu2024data, nikadefending, mohammadi2023implicit, nika2024corruption}.  Our research focuses on training-time poisoning attacks against single agents, where adversaries manipulate training data within certain constraints~\citep{mei2015using,xiao2015feature,DBLP:conf/icml/RakhshaRD0S20,DBLP:journals/jmlr/RakhshaRDZS21}. In the unregularized RLHF setting, our attack  problem utilizes constraints that determine the target policy's strict optimality as in \citep{DBLP:journals/jmlr/RakhshaRDZS21}. Different from these works, our focus is on the studying attacks in RLHF.

Complementing this, significant research has also been conducted on robust RL methods designed to defend against poisoning attacks~\citep{zhang2021robusta, lykouris2021corruption, kumar2021policy, rangi2022saving, wu2022copa, zhang2022corruption, mcmahan2024optimal, banihashemdefense, nika2023online}. Our work diverges from these studies by being the first to theoretically investigate the inherent robustness of RLHF and DPO paradigms against poisoning attacks.

\looseness-1\paragraph{Data poisoning attacks and defenses in learning from human preferences.} Recent studies have empirically explored the vulnerability of the RLHF paradigm to poisoning attacks~\citep{wang2023exploitability,shi2023badgpt,rando2023universal,baumgartner2024best}, addressing various attack types such as label flipping, backdoor, and data augmentation. Despite these empirical explorations, prior work has primarily focused on evaluating the effectiveness of these attacks, whereas our research aims to provide a theoretical understanding of these vulnerabilities. On the defense side, \citet{mandal2024corruption} and \citet{chowdhury2024provably} have proposed robust RLHF and DPO algorithms to handle data corruption. However, our work aims to understand the natural robustness of these algorithms against structured data poisoning attacks and studies the problem from an attacker's perspective.

\looseness-1\paragraph{Theoretical analysis of learning from human preferences.} Significant research has focused on theoretically understanding and improving the performance of RLHF and DPO~\citep{zhu2023principled,zhan2023provable,an2023direct,gheshlaghi2023general,wang2023beyond,hejna2023contrastive,nika2024reward}. While in \citep{zhu2023principled, zhan2023provable} the focus is on unregularized RLHF, \cite{gheshlaghi2023general, hejna2023contrastive, nika2024reward} study regularized RLHF and the effect of regularization. Our work also tries to further theoretical understanding of learning from human preferences, by undertaking a rigorous analysis of data poisoning attacks on these methods.

\section{Concluding Discussion}\label{sec:conclusion}
%\vspace{-2mm}
\looseness-1We considered data poisoning attacks in learning from human preferences. Specifically, we studied data augmentation attacks on RLHF and DPO. Based on our findings, we compared these paradigms in terms of susceptibility to attacks. There are several directions for future work. First, it would be interesting to relax the (log)linearity assumptions and solve the problem for general parametrizations. It is currently not clear whether RLHF and DPO attacks are feasible for general formulations. Second, as it is not possible to obtain closed-form solutions to our problems whenever KL constraints are present, it would be useful to study this attack framework with alternate constraints, such as total variation distance. Third, it would also be important to study other forms of attacks, e.g., \textit{label-flipping attacks} where the attacker is only allowed to flip a fraction of the preference labels in the dataset. Finally, it would be interesting to study attacks against more recent preference-based RL methods and understand the effectiveness of these attacks when a learner uses robust variants of these methods.

\subsection*{Acknowledgements}
The work of Andi Nika and Goran Radanovic was funded by the Deutsche Forschungsgemeinschaft (DFG, German Research Foundation) – project number 467367360.

\bibliography{bibliography}

\begin{thebibliography}{64}
\providecommand{\natexlab}[1]{#1}
\providecommand{\url}[1]{\texttt{#1}}
\expandafter\ifx\csname urlstyle\endcsname\relax
  \providecommand{\doi}[1]{doi: #1}\else
  \providecommand{\doi}{doi: \begingroup \urlstyle{rm}\Url}\fi

\bibitem[An et~al.(2023)An, Lee, Zuo, Kosaka, Kim, and Song]{an2023direct}
Gaon An, Junhyeok Lee, Xingdong Zuo, Norio Kosaka, Kyung-Min Kim, and Hyun~Oh Song.
\newblock Direct {P}reference-based {P}olicy {O}ptimization without {R}eward {M}odeling.
\newblock In \emph{{NeurIPS}}, 2023.

\bibitem[Azar et~al.(2023)Azar, Rowland, Piot, Guo, Calandriello, Valko, and Munos]{gheshlaghi2023general}
Mohammad~Gheshlaghi Azar, Mark Rowland, Bilal Piot, Daniel Guo, Daniele Calandriello, Michal Valko, and R{\'{e}}mi Munos.
\newblock A {G}eneral {T}heoretical {P}aradigm to {U}nderstand {L}earning from {H}uman {P}references.
\newblock \emph{CoRR}, abs/2310.12036, 2023.

\bibitem[Bai et~al.(2022)]{bai2022training}
Yuntao Bai et~al.
\newblock Training a {H}elpful and {H}armless {A}ssistant with {R}einforcement {L}earning from {H}uman {F}eedback.
\newblock \emph{CoRR}, abs/2204.05862, 2022.

\bibitem[Banihashem et~al.(2023)Banihashem, Singla, and Radanovic]{banihashemdefense}
Kiarash Banihashem, Adish Singla, and Goran Radanovic.
\newblock Defense {A}gainst {R}eward {P}oisoning {A}ttacks in {R}einforcement {L}earning.
\newblock \emph{{TMLR}}, 2023.

\bibitem[Baumg{\"a}rtner et~al.(2024)Baumg{\"a}rtner, Gao, Alon, and Metzler]{baumgartner2024best}
Tim Baumg{\"a}rtner, Yang Gao, Dana Alon, and Donald Metzler.
\newblock {B}est-of-{V}enom: {A}ttacking {RLHF} by {I}njecting {P}oisoned {P}reference {D}ata.
\newblock \emph{CoRR}, abs/2404.05530, 2024.

\bibitem[Behzadan and Munir(2017)]{behzadan2017whatever}
Vahid Behzadan and Arslan Munir.
\newblock Whatever does not {K}ill {D}eep {R}einforcement {L}earning, {M}akes it {S}tronger.
\newblock \emph{CoRR}, abs/1712.09344, 2017.

\bibitem[Biggio et~al.(2012)Biggio, Nelson, and Laskov]{biggio2012poisoning}
Battista Biggio, Blaine Nelson, and Pavel Laskov.
\newblock {P}oisoning {A}ttacks against {S}upport {V}ector {M}achines.
\newblock In \emph{ICML}, 2012.

\bibitem[Biggio et~al.(2013)]{biggio2013evasion}
Battista Biggio et~al.
\newblock {E}vasion {A}ttacks {A}gainst {M}achine {L}earning at {T}est {T}ime.
\newblock In \emph{ECML PKDD}, 2013.

\bibitem[Bradley and Terry(1952)]{bradley1952rank}
Ralph~Allan Bradley and Milton~E Terry.
\newblock {R}ank {A}nalysis of {I}ncomplete {B}lock {D}esigns: I. {T}he {M}ethod of {P}aired {C}omparisons.
\newblock \emph{Biometrika}, 39\penalty0 (3/4), 1952.

\bibitem[Brown et~al.(2019)Brown, Goo, Nagarajan, and Niekum]{brown2019learning}
Daniel~S. Brown, Wonjoon Goo, Prabhat Nagarajan, and Scott Niekum.
\newblock Extrapolating {B}eyond {S}uboptimal {D}emonstrations via {I}nverse {R}einforcement {L}earning from {O}bservations.
\newblock In \emph{{ICML}}, 2019.

\bibitem[Chowdhury et~al.(2024)Chowdhury, Kini, and Natarajan]{chowdhury2024provably}
Sayak~Ray Chowdhury, Anush Kini, and Nagarajan Natarajan.
\newblock Provably {R}obust {DPO}: Aligning {L}anguage {M}odels with {N}oisy {F}eedback.
\newblock \emph{CoRR}, abs/2403.00409, 2024.

\bibitem[Gao et~al.(2023)Gao, Schulman, and Hilton]{gao2023scaling}
Leo Gao, John Schulman, and Jacob Hilton.
\newblock Scaling {L}aws for {R}eward {M}odel {O}veroptimization.
\newblock In \emph{{ICML}}, 2023.

\bibitem[Glaese et~al.(2022)Glaese, McAleese, Tr{k{e}}bacz, Aslanides, Firoiu, Ewalds, Rauh, Weidinger, Chadwick, Thacker, et~al.]{glaese2022improving}
Amelia Glaese, Nat McAleese, Maja Tr{k{e}}bacz, John Aslanides, Vlad Firoiu, Timo Ewalds, Maribeth Rauh, Laura Weidinger, Martin Chadwick, Phoebe Thacker, et~al.
\newblock Improving {A}lignment of {D}ialogue {A}gents via {T}argeted {H}uman {J}udgements.
\newblock \emph{CoRR}, abs/2209.14375, 2022.

\bibitem[Gleave et~al.(2020)Gleave, Dennis, Wild, Kant, Levine, and Russell]{gleave2019adversarial}
Adam Gleave, Michael Dennis, Cody Wild, Neel Kant, Sergey Levine, and Stuart Russell.
\newblock {A}dversarial {P}olicies: {A}ttacking {D}eep {R}einforcement {L}earning.
\newblock In \emph{ICLR}, 2020.

\bibitem[Hejna et~al.(2023)Hejna, Rafailov, Sikchi, Finn, Niekum, Knox, and Sadigh]{hejna2023contrastive}
Joey Hejna, Rafael Rafailov, Harshit Sikchi, Chelsea Finn, Scott Niekum, W~Bradley Knox, and Dorsa Sadigh.
\newblock Contrastive {P}refence {L}earning: Learning from {H}uman {F}eedback without {RL}.
\newblock \emph{CoRR}, abs/2310.13639, 2023.

\bibitem[Hoorfar and Hassani(2008)]{hoorfar2008inequalities}
Abdolhossein Hoorfar and Mehdi Hassani.
\newblock Inequalities on the {L}ambert {W} {F}unction and {H}yperpower {F}unction.
\newblock \emph{J. Inequal. Pure and Appl. Math}, 9\penalty0 (2):\penalty0 5--9, 2008.

\bibitem[Huang et~al.(2017)Huang, Papernot, Goodfellow, Duan, and Abbeel]{huang2017adversarial}
Sandy Huang, Nicolas Papernot, Ian Goodfellow, Yan Duan, and Pieter Abbeel.
\newblock {A}dversarial {A}ttacks on {N}eural {N}etwork {P}olicies.
\newblock \emph{CoRR}, abs/1702.02284, 2017.

\bibitem[Kiourti et~al.(2020)Kiourti, Wardega, Jha, and Li]{kiourti2020trojdrl}
Panagiota Kiourti, Kacper Wardega, Susmit Jha, and Wenchao Li.
\newblock Trojdrl: Evaluation of {B}ackdoor {A}ttacks on {D}eep {R}einforcement {L}earning.
\newblock In \emph{{ACM/IEEE (DAC)}}, 2020.

\bibitem[Kos and Song(2017)]{kos2017delving}
Jernej Kos and Dawn Song.
\newblock Delving into {A}dversarial {A}ttacks on {D}eep {P}olicies.
\newblock \emph{CoRR}, abs/1705.06452, 2017.

\bibitem[Kumar et~al.(2021)Kumar, Levine, and Feizi]{kumar2021policy}
Aounon Kumar, Alexander Levine, and Soheil Feizi.
\newblock Policy {S}moothing for {P}rovably {R}obust {R}einforcement {R}earning.
\newblock \emph{CoRR}, abs/2106.11420, 2021.

\bibitem[Li et~al.(2016)Li, Wang, Singh, and Vorobeychik]{li2016data}
Bo~Li, Yining Wang, Aarti Singh, and Yevgeniy Vorobeychik.
\newblock Data {P}oisoning {A}ttacks on {F}actorization-based {C}ollaborative {F}iltering.
\newblock In \emph{NeurIPS}, 2016.

\bibitem[Lin et~al.(2017)Lin, Hong, Liao, Shih, Liu, and Sun]{lin2017tactics}
Yen-Chen Lin, Zhang-Wei Hong, Yuan-Hong Liao, Meng-Li Shih, Ming-Yu Liu, and Min Sun.
\newblock Tactics of {A}dversarial {A}ttack on {D}eep {R}einforcement {L}earning {A}gents.
\newblock In \emph{IJCAI}, 2017.

\bibitem[Liu and Zhu(2016)]{DBLP:journals/jmlr/LiuZ16}
Ji~Liu and Xiaojin Zhu.
\newblock The {T}eaching {D}imension of {L}inear {L}earners.
\newblock \emph{Journal of Machine Learning Resesearch}, 17:\penalty0 162:1--162:25, 2016.

\bibitem[Lykouris et~al.(2021)Lykouris, Simchowitz, Slivkins, and Sun]{lykouris2021corruption}
Thodoris Lykouris, Max Simchowitz, Alex Slivkins, and Wen Sun.
\newblock Corruption-robust {E}xploration in {E}pisodic {R}einforcement {L}earning.
\newblock In \emph{{COLT}}, 2021.

\bibitem[Ma et~al.(2019)Ma, Zhang, Sun, and Zhu]{ma2019policy}
Yuzhe Ma, Xuezhou Zhang, Wen Sun, and Jerry Zhu.
\newblock Policy {P}oisoning in {B}atch {R}einforcement {L}earning and {C}ontrol.
\newblock In \emph{NeurIPS}, 2019.

\bibitem[Mandal et~al.(2024)Mandal, Nika, Kamalaruban, Singla, and Radanovi{\'c}]{mandal2024corruption}
Debmalya Mandal, Andi Nika, Parameswaran Kamalaruban, Adish Singla, and Goran Radanovi{\'c}.
\newblock Corruption {R}obust {O}ffline {R}einforcement {L}earning with {H}uman {F}eedback.
\newblock \emph{CoRR}, abs/2402.06734, 2024.

\bibitem[McMahan et~al.(2024)McMahan, Wu, Zhu, and Xie]{mcmahan2024optimal}
Jeremy McMahan, Young Wu, Xiaojin Zhu, and Qiaomin Xie.
\newblock Optimal {A}ttack and {D}efense for {R}einforcement {L}earning.
\newblock In \emph{{AAAI}}, 2024.

\bibitem[Mei and Zhu(2015)]{mei2015using}
Shike Mei and Xiaojin Zhu.
\newblock Using {M}achine {T}eaching to {I}dentify {O}ptimal {T}raining-set {A}ttacks on {M}achine {L}earners.
\newblock In \emph{AAAI}, 2015.

\bibitem[Menick et~al.(2022)]{menick2022teaching}
Jacob Menick et~al.
\newblock Teaching {L}anguage {M}odels to {S}upport {A}nswers with {V}erified {Q}uotes.
\newblock \emph{CoRR}, abs/2203.11147, 2022.

\bibitem[Mohammadi et~al.(2023)Mohammadi, N{\"o}ther, Mandal, Singla, and Radanovic]{mohammadi2023implicit}
Mohammad Mohammadi, Jonathan N{\"o}ther, Debmalya Mandal, Adish Singla, and Goran Radanovic.
\newblock Implicit {P}oisoning {A}ttacks in {T}wo-agent {R}einforcement {L}earning: Adversarial {P}olicies for {T}raining-time {A}ttacks.
\newblock In \emph{{AAMAS}}, 2023.

\bibitem[Nachum et~al.(2017)Nachum, Norouzi, Xu, and Schuurmans]{nachum2017bridging}
Ofir Nachum, Mohammad Norouzi, Kelvin Xu, and Dale Schuurmans.
\newblock Bridging the {G}ap {B}etween {V}alue and {P}olicy based {R}einforcement {L}earning.
\newblock In \emph{{NeurIPS}}, 2017.

\bibitem[Nguyen et~al.(2015)Nguyen, Yosinski, and Clune]{nguyen2015deep}
Anh Nguyen, Jason Yosinski, and Jeff Clune.
\newblock Deep {N}eural {N}etworks are {E}asily {F}ooled: High {C}onfidence {P}redictions for {U}nrecognizable {I}mages.
\newblock In \emph{CVPR}, 2015.

\bibitem[Nika et~al.(2023)Nika, Singla, and Radanovic]{nika2023online}
Andi Nika, Adish Singla, and Goran Radanovic.
\newblock Online {D}efense {S}trategies for {R}einforcement {L}earning {A}gainst {A}daptive {R}eward {P}oisoning.
\newblock In \emph{{AISTATS}}, 2023.

\bibitem[Nika et~al.(2024{\natexlab{a}})Nika, Mandal, Kamalaruban, Tzannetos, Radanovi{\'c}, and Singla]{nika2024reward}
Andi Nika, Debmalya Mandal, Parameswaran Kamalaruban, Georgios Tzannetos, Goran Radanovi{\'c}, and Adish Singla.
\newblock Reward {M}odel {L}earning vs. {D}irect {P}olicy {O}ptimization: A {C}omparative {A}nalysis of {L}earning from {H}uman {P}references.
\newblock In \emph{ICML}, 2024{\natexlab{a}}.

\bibitem[Nika et~al.(2024{\natexlab{b}})Nika, Mandal, Singla, and Radanovic]{nika2024corruption}
Andi Nika, Debmalya Mandal, Adish Singla, and Goran Radanovic.
\newblock Corruption-robust {O}ffline {T}wo-player {Z}ero-sum {M}arkov {G}ames.
\newblock In \emph{{AISTATS}}, pages 1243--1251. PMLR, 2024{\natexlab{b}}.

\bibitem[Nika et~al.(2024{\natexlab{c}})Nika, N{\"o}ther, Singla, and Radanovic]{nikadefending}
Andi Nika, Jonathan N{\"o}ther, Adish Singla, and Goran Radanovic.
\newblock Defending {A}gainst {U}nknown {C}orrupted {A}gents: {R}einforcement {L}earning of {A}dversarially {R}obust {N}ash {E}quilibria.
\newblock \emph{{TMLR}}, 2024{\natexlab{c}}.

\bibitem[Ouyang et~al.(2022)]{ouyang2022training}
Long Ouyang et~al.
\newblock Training {L}anguage {M}odels to {F}ollow {I}nstructions with {H}uman {F}eedback.
\newblock In \emph{{NeurIPS}}, 2022.

\bibitem[Papernot et~al.(2017)Papernot, McDaniel, Goodfellow, Jha, Celik, and Swami]{papernot2017practical}
Nicolas Papernot, Patrick McDaniel, Ian Goodfellow, Somesh Jha, Z~Berkay Celik, and Ananthram Swami.
\newblock Practical {B}lack-box {A}ttacks {A}gainst {M}achine {L}earning.
\newblock In \emph{ACM}, 2017.

\bibitem[Rafailov et~al.(2023)Rafailov, Sharma, Mitchell, Manning, Ermon, and Finn]{DBLP:conf/nips/RafailovSMMEF23}
Rafael Rafailov, Archit Sharma, Eric Mitchell, Christopher~D. Manning, Stefano Ermon, and Chelsea Finn.
\newblock Direct {P}reference {O}ptimization: Your {L}anguage {M}odel is {S}ecretly a {R}eward {M}odel.
\newblock In \emph{{NeurIPS}}, 2023.

\bibitem[Rakhsha et~al.(2020{\natexlab{a}})Rakhsha, Radanovic, Devidze, Zhu, and Singla]{DBLP:conf/icml/RakhshaRD0S20}
Amin Rakhsha, Goran Radanovic, Rati Devidze, Xiaojin Zhu, and Adish Singla.
\newblock {P}olicy {T}eaching via {E}nvironment {P}oisoning: {T}raining-time {A}dversarial {A}ttacks against {R}einforcement {L}earning.
\newblock In \emph{ICML}, 2020{\natexlab{a}}.

\bibitem[Rakhsha et~al.(2020{\natexlab{b}})Rakhsha, Radanovic, Devidze, Zhu, and Singla]{rakhsha2020policy}
Amin Rakhsha, Goran Radanovic, Rati Devidze, Xiaojin Zhu, and Adish Singla.
\newblock Policy {T}eaching via {E}nvironment poisoning: Training-time {A}dversarial {A}ttacks {A}gainst {R}einforcement {L}earning.
\newblock In \emph{{ICML}}, 2020{\natexlab{b}}.

\bibitem[Rakhsha et~al.(2021)Rakhsha, Radanovic, Devidze, Zhu, and Singla]{DBLP:journals/jmlr/RakhshaRDZS21}
Amin Rakhsha, Goran Radanovic, Rati Devidze, Xiaojin Zhu, and Adish Singla.
\newblock Policy {T}eaching in {R}einforcement {L}earning via {E}nvironment {P}oisoning attacks.
\newblock \emph{Journal of Machine Learning Research}, 22:\penalty0 210:1--210:45, 2021.

\bibitem[Rando and Tram{\`e}r(2023)]{rando2023universal}
Javier Rando and Florian Tram{\`e}r.
\newblock {U}niversal {J}ailbreak {B}ackdoors from {P}oisoned {H}uman {F}eedback.
\newblock In \emph{ICLR}, 2023.

\bibitem[Rangi et~al.(2022{\natexlab{a}})Rangi, Tran-Thanh, Xu, and Franceschetti]{rangi2022saving}
Anshuka Rangi, Long Tran-Thanh, Haifeng Xu, and Massimo Franceschetti.
\newblock Saving {S}tochastic {B}andits from {P}oisoning {A}ttacks via {L}imited {D}ata {V}erification.
\newblock In \emph{{AAAI}}, 2022{\natexlab{a}}.

\bibitem[Rangi et~al.(2022{\natexlab{b}})Rangi, Xu, Tran-Thanh, and Franceschetti]{rangi2022understanding}
Anshuka Rangi, Haifeng Xu, Long Tran-Thanh, and Massimo Franceschetti.
\newblock Understanding the {L}imits of {P}oisoning {A}ttacks in {E}pisodic {R}einforcement {L}earning.
\newblock In \emph{IJCAI}, 2022{\natexlab{b}}.

\bibitem[Shi et~al.(2023)Shi, Liu, Zhou, and Sun]{shi2023badgpt}
Jiawen Shi, Yixin Liu, Pan Zhou, and Lichao Sun.
\newblock Badgpt: Exploring {S}ecurity {V}ulnerabilities of {C}hat{GPT} via {B}ackdoor {A}ttacks to {I}nstruct{GPT}.
\newblock \emph{CoRR}, abs/2304.12298, 2023.

\bibitem[Shin et~al.(2023)Shin, Dragan, and Brown]{shin2023benchmarks}
Daniel Shin, Anca~D. Dragan, and Daniel~S. Brown.
\newblock Benchmarks and {A}lgorithms for {O}ffline {P}reference-{B}ased {R}eward {L}earning.
\newblock \emph{Transactions of Machine Learning Research}, 2023.

\bibitem[Stiennon et~al.(2020)Stiennon, Ouyang, Wu, Ziegler, Lowe, Voss, Radford, Amodei, and Christiano]{stiennon2020learning}
Nisan Stiennon, Long Ouyang, Jeffrey Wu, Daniel Ziegler, Ryan Lowe, Chelsea Voss, Alec Radford, Dario Amodei, and Paul~F Christiano.
\newblock Learning to {S}ummarize with {H}uman {F}eedback.
\newblock In \emph{{NeurIPS}}, 2020.

\bibitem[Sun et~al.(2020)Sun, Zhang, Xie, Ma, Zheng, Chen, and Liu]{sun2020stealthy}
Jianwen Sun, Tianwei Zhang, Xiaofei Xie, Lei Ma, Yan Zheng, Kangjie Chen, and Yang Liu.
\newblock Stealthy and {E}fficient {A}dversarial {A}ttacks {A}gainst {D}eep {R}einforcement {L}earning.
\newblock In \emph{AAAI}, 2020.

\bibitem[Szegedy et~al.(2013)Szegedy, Zaremba, Sutskever, Bruna, Erhan, Goodfellow, and Fergus]{szegedy2013intriguing}
Christian Szegedy, Wojciech Zaremba, Ilya Sutskever, Joan Bruna, Dumitru Erhan, Ian Goodfellow, and Rob Fergus.
\newblock Intriguing {P}roperties of {N}eural {N}etworks.
\newblock \emph{CoRR}, abs/1312.6199, 2013.

\bibitem[Wang et~al.(2023{\natexlab{a}})Wang, Jiang, Yang, Liu, and Chen]{wang2023beyond}
Chaoqi Wang, Yibo Jiang, Chenghao Yang, Han Liu, and Yuxin Chen.
\newblock Beyond {R}everse {KL}: Generalizing {D}irect {P}reference {O}ptimization with {D}iverse {D}ivergence {C}onstraints.
\newblock \emph{CoRR}, abs/2309.16240, 2023{\natexlab{a}}.

\bibitem[Wang et~al.(2023{\natexlab{b}})Wang, Wu, Chen, Vorobeychik, and Xiao]{wang2023exploitability}
Jiongxiao Wang, Junlin Wu, Muhao Chen, Yevgeniy Vorobeychik, and Chaowei Xiao.
\newblock On the {E}xploitability of {R}einforcement {L}earning with {H}uman {F}eedback for {L}arge {L}anguage {M}odels.
\newblock \emph{CoRR}, abs/2311.09641, 2023{\natexlab{b}}.

\bibitem[Wang et~al.(2021)Wang, Sarkar, Li, Maniatakos, and Jabari]{wang2021stop}
Yue Wang, Esha Sarkar, Wenqing Li, Michail Maniatakos, and Saif~Eddin Jabari.
\newblock Stop-and-go: Exploring {B}ackdoor {A}ttacks on {D}eep {R}einforcement {L}earning-based {T}raffic {C}ongestion {C}ontrol {S}ystems.
\newblock \emph{IEEE Transactions on Information Forensics and Security}, 16, 2021.

\bibitem[Wu et~al.(2022)Wu, Li, Xu, Zhang, Kailkhura, Kenthapadi, Zhao, and Li]{wu2022copa}
Fan Wu, Linyi Li, Chejian Xu, Huan Zhang, Bhavya Kailkhura, Krishnaram Kenthapadi, Ding Zhao, and Bo~Li.
\newblock Copa: Certifying {R}obust {P}olicies for {O}ffline {R}einforcement {L}earning against {P}oisoning {A}ttacks.
\newblock \emph{CoRR}, abs/2203.08398, 2022.

\bibitem[Wu et~al.(2024)Wu, McMahan, Zhu, and Xie]{wu2024data}
Young Wu, Jeremy McMahan, Xiaojin Zhu, and Qiaomin Xie.
\newblock Data {P}oisoning to {F}ake a {N}ash {E}quilibria for {M}arkov {G}ames.
\newblock In \emph{{AAAI}}, 2024.

\bibitem[Xiao et~al.(2012)Xiao, Xiao, and Eckert]{xiao2012adversarial}
Han Xiao, Huang Xiao, and Claudia Eckert.
\newblock Adversarial {L}abel {F}lips {A}ttack on {S}upport {V}ector {M}achines.
\newblock In \emph{ECAI}, 2012.

\bibitem[Xiao et~al.(2015)Xiao, Biggio, Brown, Fumera, Eckert, and Roli]{xiao2015feature}
Huang Xiao, Battista Biggio, Gavin Brown, Giorgio Fumera, Claudia Eckert, and Fabio Roli.
\newblock Is {F}eature {S}election {S}ecure {A}gainst {T}raining {D}ata {P}oisoning?
\newblock In \emph{ICML}, 2015.

\bibitem[Xu et~al.(2021)Xu, Wang, Raizman, and Rabinovich]{xu2021transferable}
Hang Xu, Rundong Wang, Lev Raizman, and Zinovi Rabinovich.
\newblock Transferable {E}nvironment {P}oisoning: Training-time {A}ttack on {R}einforcement {L}earning.
\newblock In \emph{{ICAAMS}}, 2021.

\bibitem[Yang et~al.(2019)Yang, Iyer, Reimann, and Virani]{yang2019design}
Zhaoyuan Yang, Naresh Iyer, Johan Reimann, and Nurali Virani.
\newblock Design of {I}ntentional {B}ackdoors in {S}equential {M}odels.
\newblock \emph{CoRR}, abs1902.09972, 2019.

\bibitem[Zhan et~al.(2023)Zhan, Uehara, Kallus, Lee, and Sun]{zhan2023provable}
Wenhao Zhan, Masatoshi Uehara, Nathan Kallus, Jason~D Lee, and Wen Sun.
\newblock Provable {O}ffline {R}einforcement {L}earning with {H}uman {F}eedback.
\newblock \emph{CoRR}, abs/:2305.14816, 2023.

\bibitem[Zhang et~al.(2021)Zhang, Chen, Zhu, and Sun]{zhang2021robusta}
Xuezhou Zhang, Yiding Chen, Xiaojin Zhu, and Wen Sun.
\newblock Robust {P}olicy {G}radient {A}gainst {S}trong {D}ata {C}orruption.
\newblock In \emph{{ICML}}, 2021.

\bibitem[Zhang et~al.(2022)Zhang, Chen, Zhu, and Sun]{zhang2022corruption}
Xuezhou Zhang, Yiding Chen, Xiaojin Zhu, and Wen Sun.
\newblock Corruption-robust {O}ffline {R}einforcement {L}earning.
\newblock In \emph{{AISTATS}}, 2022.

\bibitem[Zhu et~al.(2023)Zhu, Jordan, and Jiao]{zhu2023principled}
Banghua Zhu, Michael~I. Jordan, and Jiantao Jiao.
\newblock Principled {R}einforcement {L}earning with {H}uman {F}eedback from {P}airwise or {K}-wise {C}omparisons.
\newblock In \emph{{ICML}}, 2023.

\bibitem[Ziegler et~al.(2019)Ziegler, Stiennon, Wu, Brown, Radford, Amodei, Christiano, and Irving]{ziegler2019fine}
Daniel~M Ziegler, Nisan Stiennon, Jeffrey Wu, Tom~B Brown, Alec Radford, Dario Amodei, Paul Christiano, and Geoffrey Irving.
\newblock Fine-tuning {L}anguage {M}odels from {H}uman {P}references.
\newblock \emph{CoRR}, abs/1909.08593, 2019.

\end{thebibliography}
\bibliographystyle{plainnat}
\clearpage

\onecolumn
\appendix
\addcontentsline{toc}{section}{Appendix}
\part{Appendix}

\parttoc

\section{Feasibility of Problem \ref{op:rlhf-attack}.}\label{sec:rlhf_feasibility}

In this section, we  prove that Problem \ref{op:rlhf-attack} is feasible for certain classes of $\pi^\dagger$. 

\begin{theorem}\label{thm:rlhf_feasibility}
    Let $\epsilon, \beta >0$ and $\mu$ be with full support. Then, the following statements hold:
    \begin{itemize}
        \item Let $\pi^\dagger\in\Pi^\textnormal{det}$ and assume that $\omega^\dagger_\textnormal{opt}$ is such that $\pi^\dagger$ is $\epsilon$-robust optimal with respect to $r_{\omega^\dagger_\textnormal{opt}}$. Then, there exists $c>0$ such that $c\cdot\omega^\dagger_\textnormal{opt}$ is a feasible solution for Problem \ref{op:rlhf-attack}.
        \item Let $\pi^\dagger,\mu\in\Pi^\textnormal{log}$ and assume that the column space of $\Phi$ is a subspace of the column space of $\Psi$. Then the solution to $\Phi\omega = \beta(\log\widehat
        {\pi}^\dagger-\log\mu)$, where $\log\widehat{\pi}^\dagger-\log\mu$ is the vector with entries $\log\widehat{\pi}^\dagger(a|s)-\log\mu(a|s)$, for each $(s,a)$, is a feasible solution for Problem \ref{op:rlhf-attack}, for any $\widehat{\pi}^\dagger$ such that $\KL(\pi^\dagger||\widehat{\pi}^\dagger)\leq \epsilon'$.
    \end{itemize}
\end{theorem}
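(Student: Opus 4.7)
The plan for both parts is identical at a high level: exhibit a reward parameter $\omega^\star$ whose regularized optimal policy $\pi^{\textnormal{reg}}_{r_{\omega^\star}}$ lies within $\sqrt{\epsilon}$ of $\pi^\dagger$ in $\ell_1$ norm, and then appeal to the standard fact (used in every later theorem in the paper) that the strongly convex MLE problem \ref{eq:reward_mle} can realize any prescribed $\omega^\star$ as its unique minimizer by synthesizing a dataset whose gradient at $\omega^\star$ exactly cancels the $\lambda \omega^\star$ regularization term. Thus what really needs argument is the existence of such an $\omega^\star$; realizability as an MLE then follows by reusing the teaching construction that underlies Theorem~\ref{thm:warmup-rlhf-augment}.

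For Part 1, I would take $\omega^\star = c\cdot \omega^\dagger_{\textnormal{opt}}$ and push $c \to \infty$. Because $\pi^\dagger$ is $\epsilon$-robust optimal with respect to $r_{\omega^\dagger_{\textnormal{opt}}}$, it is the unique optimal deterministic policy under that reward and there is a strict positive value-gap between $\pi^\dagger$ and every other policy; scaling the reward by $c$ scales this gap linearly. Writing the regularized optimal policy via its soft-Bellman form, $\pi^{\textnormal{reg}}_{r_{c\omega^\dagger_{\textnormal{opt}}}}(a\mid s) \propto \mu(a\mid s)\exp(c\, Q^{\textnormal{reg}}(s,a)/\beta)$, a softmax-concentration argument shows that the mass on the greedy action $\pi^\dagger(s)$ converges to $1$ exponentially in $c$. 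The full-support hypothesis on $\mu$ prevents the base measure from zeroing out $\pi^\dagger(s)$, so for $c$ sufficiently large the squared $\ell_1$ distance between $\pi^\dagger$ and $\pi^{\textnormal{reg}}_{r_{c\omega^\dagger_{\textnormal{opt}}}}$ drops below $\epsilon$.

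For Part 2, I would combine Pinsker's inequality with the inverse soft-Bellman identity. Given any $\widehat{\pi}^\dagger$ with $\KL(\pi^\dagger\,\|\,\widehat{\pi}^\dagger)\le \epsilon'$, Pinsker gives $\|\pi^\dagger-\widehat{\pi}^\dagger\|_1^2 \le 2\epsilon' \le \epsilon$ after tuning, so it suffices to produce $\omega$ with $\pi^{\textnormal{reg}}_{r_\omega}=\widehat{\pi}^\dagger$. The soft-Bellman optimality equations show that $\widehat{\pi}^\dagger$ is the regularized optimal policy under any reward of the form $r(s,a)=\beta(\log\widehat{\pi}^\dagger(a\mid s)-\log\mu(a\mid s))+g(s,a)$, where $g$ is a gauge term absorbed by the value function (state-only in the contextual-bandit case, and $g(s,a)=V^\star(s)-\gamma\,\expp_{s'\sim P(\cdot\mid s,a)}[V^\star(s')]$ in the general MDP case). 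Consequently any $\omega$ solving $\Phi\omega=\beta(\log\widehat{\pi}^\dagger-\log\mu)$ suffices, and the hypothesis relating the column spaces of $\Phi$ and $\Psi$ ensures this linear system is consistent: since $\widehat{\pi}^\dagger,\mu\in\Pi^{\textnormal{log}}$ share the feature map $\psi$, the right-hand side decomposes as the $\Psi$-linear piece $\beta\Psi(\theta^{\widehat{\pi}^\dagger}-\theta_\mu)$ plus a state-only log-partition shift that is gauge-equivalent.

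The main obstacle I anticipate is Part 2's treatment of the gauge term $g(s,a)$: one must verify that modifying the reward by such a term leaves $\pi^{\textnormal{reg}}_r$ unchanged. The cleanest route is to plug the candidate $r_\omega$ and target policy $\widehat{\pi}^\dagger$ directly into the soft-Bellman fixed-point equations and verify by substitution that they are satisfied; uniqueness of $\pi^{\textnormal{reg}}_r$ (which follows from strict concavity of the entropy-regularized objective) then yields $\pi^{\textnormal{reg}}_{r_\omega}=\widehat{\pi}^\dagger$, closing the argument.
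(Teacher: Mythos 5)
Your proposal is correct in substance, but Part~1 follows a genuinely different route from the paper. The paper does \emph{not} take a limit: it fixes a perturbed policy $\pi_1=(1-\kappa)\pi^\dagger+\kappa u$, uses the performance difference lemma to show every policy outside an $\epsilon$-ball around $\pi^\dagger$ suffers a value deficit of at least $c\,(l\epsilon\Delta-\tfrac{\gamma\kappa r^\dagger_{\max}}{1-\gamma})$ under $c\cdot r^\dagger$, bounds the KL penalty of $\pi_1$ against the full-support $\mu$ by a constant $C^\kappa$, and then solves for an explicit finite $c$ making the value gap dominate the regularization cost. Your softmax-concentration argument in the low-temperature limit $c\to\infty$ reaches the same conclusion and is arguably more intuitive, but the step you gloss over is the crux: the soft $Q$-function $\calQ_c$ of the reward $c\,r^\dagger$ is \emph{not} $c$ times a fixed function (your notation $\exp(c\,Q^{\textnormal{reg}}/\beta)$ suggests it is). What is true is $\calQ_c=c\,Q_{r^\dagger}+O(1)$ uniformly, because the KL-regularization contribution to the soft values is bounded independently of $c$ (here full support of $\mu$ is exactly what keeps $\KL^\gamma(\pi^\dagger\Vert\mu)$ finite), so the action gaps $\calQ_c(s,a)-\calQ_c(s,\pi^\dagger(s))\le -c\Delta+O(1)\to-\infty$ and the softmax does concentrate. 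You should state that intermediate bound explicitly; with it, your Part~1 is a valid (if non-constructive in $c$) alternative to the paper's. For Part~2 your argument is essentially the paper's: both observe that with $r=\beta(\log\widehat{\pi}^\dagger-\log\mu)$ the regularized objective collapses to $-\beta\KL^\gamma(\pi\Vert\widehat{\pi}^\dagger)$, uniquely maximized at $\widehat{\pi}^\dagger$, and both invoke the column-space assumption to realize this reward linearly. Your explicit treatment of the gauge term is actually \emph{more} careful than the paper, which silently replaces $\beta(\log\widehat{\pi}^\dagger-\log\mu)$ by $\beta\Psi(\theta^\dagger-\theta_\mu)$, dropping the log-partition difference; as you note, in the MDP case a state-only reward shift is not policy-invariant and one needs the potential-shaping form $g(s,a)=h(s)-\gamma\,\expp_{s'\sim P(\cdot\mid s,a)}[h(s')]$, so that verification by substitution into the soft-Bellman equations is a worthwhile addition rather than a detour.
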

\begin{proof}
    We start with the first statement.
    Given dataset $D$, logistic regression applied on $D$ returns a reward function $r$. If $D$ is to be feasible for Problem \ref{op:rlhf-attack}, this necessitates that the optimal regularized policy $\pi^\textnormal{reg}_r$ with respect to reward function $r$ is $\epsilon$-close to $\pi^\dagger$ in the sense that $\norm{\pi^\dagger-\pi^\textnormal{reg}_r}_1\leq\epsilon$. Note that, if we can show that $\norm{\pi^\dagger-\pi^\textnormal{reg}_r}_\infty\leq\epsilon$, then we are done. Thus, for the rest of this proof, we will focus on $\norm{\cdot}_\infty$. 

    Now, given reward function $r$ with parameter $\omega$, Theorem \ref{thm:logistic_regression_teaching} shows us that we can always synthesise a dataset $D$ that makes $r$ the outcome when logistic regression is applied on it. Therefore, we focus our attention on finding the right reward function $r$ for which our constraint is satisfied. Note that our problem  can be written as
    \begin{align*}
        \arg\max_\pi &\; V^\pi_r(\rho) - \beta \KL^\gamma(\pi||\mu) := \pi^\textnormal{reg}_r\\
            \textnormal{s.t.} &\; \norm{\pi^\dagger-\pi^\textnormal{reg}_r}_\infty \leq \epsilon~,
    \end{align*}
    where
    \begin{align}
        \KL^\gamma(\pi||\mu) & = \expp\left[\sum_{t\geq 0}\gamma^t\log\frac{\pi(a_t|s_t)}{\mu(a_t|s_t)}\bigg|\rho,\pi\right] \nonumber\\ & = \expp\left[\sum_{t\geq 0}\gamma^t\sum_a\pi(a|s_t)\log\frac{\pi(a|s_t)}{\mu(a|s_t)}\bigg|\rho,\pi\right] \nonumber\\ & = \expp\left[\sum_{t\geq 0}\gamma^t\KL\left(\pi(\cdot|s_t)||\mu(\cdot|s_t)\right)\bigg|\rho,\pi\right]~.\label{eq:discounted_kl}
    \end{align}
    Given $\kappa << \epsilon$, let us define the set $\Pi_\kappa = \left\{ (1-\kappa)\cdot\pi + \kappa\cdot u | \pi\in\Pi\right\}$, where $u$ denotes the policy that takes any action uniformly at random, at any given state. Furthermore, let $l=\min_{\pi,s}d^\pi_\rho(s)$. By the ergodicity assumption, we have that $l>0$. Let $A^\pi_\rho(s,a)$ denote the advantage function of $\pi$ at state-action $(s,a)$. 

    By assumption, there exists an $\omega^\dagger$ such that $M_{\pi^\dagger}^\top\omega^\dagger\geq\boldsymbol{\epsilon}$, for a given $\epsilon$. Obviously, the policy $\pi^\dagger$ is optimal with respect to the reward function $r_{\omega^\dagger}$. We will use the short-hand notation $r^\dagger = r_{\omega^\dagger}$.
    
    Now, let us define $\Delta >0$ such that $A^{\pi^\dagger}(s,a) \leq -\Delta$, for all $a\neq\pi^\dagger(s)$, let $c>0$ be an arbitrary positive constant and let $r^{\dagger}_{\max}$ denote the maximum component of $r^\dagger$. Let $\pi_1=(1-\kappa)\cdot\pi^\dagger +\kappa\cdot u$. For any policy $\pi$, we have
    \begin{align*}
        V^{\pi_1}_{c\cdot r^\dagger}(\rho) - V^{\pi}_{c\cdot r^\dagger}(\rho) = \left(V^{\pi_1}_{c\cdot r^\dagger}(\rho) - V^{\pi^\dagger}_{c\cdot r^\dagger}(\rho)\right) + \left(V^{\pi^\dagger}_{c\cdot r^\dagger}(\rho)- V^{\pi}_{c\cdot r^\dagger}(\rho)\right)~.
    \end{align*}
    We will bound each term separately. First, using vector notation for the reward and occupancy measures, note that
    \begin{align}
        V^{\pi_1}_{c\cdot r^\dagger}(\rho) - V^{\pi^\dagger}_{c\cdot r^\dagger}(\rho) & = c\cdot\left(d^{\pi_1}_\rho-d^{\pi^\dagger}_\rho\right)^\top r^\dagger \label{eq:feasibility_proof_01}\\
            & \geq - c\cdot \norm{d^{\pi_1}_\rho - d^{\pi^\dagger}_\rho}_1\cdot\norm{r^\dagger}_\infty\label{eq:feasibility_proof_02}\\
            & \geq -\frac{\gamma\cdot c\cdot\kappa\cdot r^{\dagger}_{\max}}{1-\gamma}~,\label{eq:feasibility_proof_03}
    \end{align}
    where \eqref{eq:feasibility_proof_01} uses the occupancy measure expression of the discounted return; \eqref{eq:feasibility_proof_02} uses Cauchy-Schwarz; for \eqref{eq:feasibility_proof_03} we have used that $\norm{\pi_1-\pi^\dagger}_1 =\kappa$ and that \textit{similar policies imply similar state visitations}, i.e., if $\norm{\pi_1-\pi_2}_1\leq \zeta$, then $\norm{d^{\pi_1}_\rho-d^{\pi_2}_\rho}_1\leq\zeta\gamma/(1-\gamma)$, for given positive $\zeta$ (see the RL Theory book Lemma 14.1). For the second term, we have
    \begin{align}
        V^{\pi^\dagger}_{c\cdot r^\dagger}(\rho)- V^{\pi}_{c\cdot r^\dagger}(\rho) & = - \left( V^{\pi}_{c\cdot r^\dagger}(\rho) - V^{\pi^\dagger}_{c\cdot r^\dagger}(\rho)\right) \nonumber\\
            & = -\left( \sum_{s'}d^{\pi}_\rho(s')\sum_{a'}\pi(a'|s')A^{\pi^\dagger}_{c\cdot r^\dagger}(s',a')\right)\label{eq:feasibility_proof_04}\\
        & = -\left( \sum_{s'}d^{\pi}_\rho(s')\sum_{a'\neq\pi^\dagger(s')}\pi(a'|s')A^{\pi^\dagger}_{c\cdot r^\dagger}(s',a')\right)\label{eq:feasibility_proof_05}\\
            & = \sum_{s'}d^{\pi}_\rho(s')\sum_{a'\neq\pi^\dagger(s')}\pi(a'|s')\cdot\left(- A^{\pi^\dagger}_{c\cdot r^\dagger}(s',a')\right)\nonumber\\
        & \geq \sum_{s'}d^{\pi}_\rho(s')\sum_{a'\neq\pi^\dagger(s')}\pi(a'|s')\cdot c\cdot \Delta \label{eq:feasibility_proof_06}\\
            & \geq l\cdot\sum_{s',a'\neq\pi^\dagger(s')}\pi(a'|s')\cdot c\cdot \Delta\label{eq:feasibility_proof_07}~,
    \end{align}
    where \eqref{eq:feasibility_proof_04} follows from the Performance Difference Lemma; \eqref{eq:feasibility_proof_05} uses the fact that $A^{\pi^\dagger}_{c\cdot r^\dagger}(s,\pi^\dagger(s))=c\cdot A^{\pi^\dagger}_{r^\dagger}(s,\pi^\dagger(s))=0$; \eqref{eq:feasibility_proof_06} uses the fact that $A^{\pi^\dagger}_{c\cdot r^\dagger}(s,\pi^\dagger(s))\leq -c\cdot \Delta$; \eqref{eq:feasibility_proof_07} uses the definition of $l$. 

    Now, let us define the set $\Pi^\epsilon_\kappa = \left\{ \pi\in\Pi_\kappa : \norm{\pi-\pi^\dagger}_\infty \leq\epsilon\right\}$. \eqref{eq:feasibility_proof_07} above implies that, for any $\pi\not\in \Pi^\epsilon_\kappa$,
    \begin{align}
        V^{\pi^\dagger}_{c\cdot r^\dagger}(\rho)- V^{\pi}_{c\cdot r^\dagger}(\rho) & \geq l\cdot\sum_{s',a'\neq\pi^\dagger(s')}\pi(a'|s')\cdot c\cdot \Delta\nonumber \\
        & \geq l\cdot \max_{s,a\neq\pi^\dagger(s)} \pi(a|s)\cdot c\cdot\Delta \nonumber\\
        & > l\cdot\epsilon\cdot c\cdot\Delta~,\label{eq:feasibility_proof_08}
    \end{align}
    where the third inequality follows from the fact that $\norm{\pi-\pi^\dagger}_\infty > \epsilon$. Thus, combining \eqref{eq:feasibility_proof_03} and \eqref{eq:feasibility_proof_08} into the original suboptimality gap, for $\pi\not\in\Pi^\epsilon_\kappa$, we obtain
    \begin{align}
        V^{\pi_1}_{c\cdot r^\dagger}(\rho) - V^{\pi}_{c\cdot r^\dagger}(\rho) > l\cdot\epsilon\cdot c\cdot\Delta - \frac{\gamma\cdot c \cdot\kappa\cdot r^{\dagger}_{\max}}{1-\gamma}~.\label{eq:feasibility_proof_09}
    \end{align}
    Now, for a given $\kappa>0$, the exists $C^\kappa>0$, such that $\KL^\gamma(\pi_1||\mu) < C^\kappa$, since both $\pi_1$ and $\mu$ have full support and thus $\KL^\gamma(\pi_1||\mu)\leq (1/(1-\gamma))\max_s \KL(\pi_1(\cdot|s)||\mu(\cdot|s)):= C^\kappa$. Hence, we have that
    \begin{align*}
        \max_{\pi\in\Pi^\epsilon_\kappa} V^{\pi}_{c\cdot r^\dagger} -\beta \KL^\gamma(\pi||\mu) \geq V^{\pi_1}_{c\cdot r^\dagger} -\beta \cdot C^\kappa~.
    \end{align*}
    Moreover, using the fact that $\KL^\gamma(\pi||\mu)\geq 0$, we also have
    \begin{align*}
         \max_{\pi\in\Pi^\epsilon_\kappa} V^{\pi}_{c\cdot r^\dagger} -\beta \KL^\gamma(\pi||\mu) \leq \max_{\pi\in\Pi^\epsilon_\kappa} V^{\pi}_{c\cdot r^\dagger} < V^{\pi_1}_{c\cdot r^\dagger}(\rho) - c\cdot\left(l\cdot\epsilon\cdot\Delta - \frac{\gamma\cdot\kappa\cdot r^\dagger_{\max}}{1-\gamma}\right)~,
    \end{align*}
    where the second inequality follows from \eqref{eq:feasibility_proof_09}. Finally, let $\kappa << l\cdot\epsilon\cdot(1-\gamma)/(\gamma\cdot r^\dagger_{\max})$ and $c = \beta\cdot C^\kappa/(l\cdot\epsilon - (\gamma/(1-\gamma))\cdot\kappa\cdot r^\dagger_{\max})$. Then, we obtain
    \begin{align*}
        \max_{\pi\in\Pi^\epsilon_\kappa} V^{\pi}_{c\cdot r^\dagger} -\beta \KL^\gamma(\pi||\mu) & \geq V^{\pi_1}_{c\cdot r^\dagger} -\beta \cdot C^\kappa\\
            & = V^{\pi_1}_{c\cdot r^\dagger} - c\cdot\left(l\cdot\epsilon\cdot\Delta - \frac{\gamma\cdot\kappa\cdot r^\dagger_{\max}}{1-\gamma}\right)\\
        & > \max_{\pi\not\in\Pi^\epsilon_\kappa} V^\pi_{c\cdot r^\dagger}(\rho) -\beta \cdot \KL^\gamma(\pi||\mu)~.
    \end{align*}
    Hence, we conclude that, for the choice of $r=c\cdot r^\dagger$ (and thus $\omega = c\cdot \omega^\dagger$), the solution to the regularized value maximization problem is necessarily in $\Pi^\epsilon_\kappa$, and thus, $\epsilon$-close to $\pi^\dagger$. 

    Next, we consider the second statement. Let $r=\beta(\log\pi^\dagger-\log\mu)$. Note that we have 
    \begin{align*}
        \arg\max_\pi V^\pi_r(\rho) - \beta \KL^\gamma(\pi||\mu) & =  \expp\left[\sum_{t\geq 0}\gamma^t\left(r(s_t,a_t)-\beta\log\frac{\pi(a_t|s_t)}{\mu(a_t|s_t)}\right)\bigg|\pi,\rho\right] \\ 
            & = \arg\max_\pi \expp\left[\sum_{t\geq 0}\gamma^t\left(\beta\log\frac{\pi^\dagger(a_t|s_t)}{\mu(a_t|s_t)}-\beta\log\frac{\pi(a_t|s_t)}{\mu(a_t|s_t)}\right)\bigg|\pi,\rho\right] \\
        & = \arg\max_\pi \expp\left[\sum_{t\geq 0}\gamma^t\left(\beta\log\frac{\pi^\dagger(a_t|s_t)}{\pi(a_t|s_t)}\right)\bigg|\pi,\rho\right] \\
            & = \arg\max_\pi -\expp\left[\sum_{t\geq 0}\gamma^t\left(\beta\log\frac{\pi(a_t|s_t)}{\pi^\dagger(a_t|s_t)}\right)\bigg|\pi,\rho\right] \\
        & = \arg\min_\pi \KL^\gamma\left(\pi||\pi^\dagger\right) \\
            & = \pi^\dagger~.
    \end{align*}
    This implies that, as long as we can find a reward function as proposed, enforcing $\pi^\dagger$ is feasible. Since $\pi^\dagger,\mu\in\Pi^\textnormal{log}$, there exist parameters $\theta^\dagger,\theta_\mu$ that realize these policies in the loglinear space. Thus, to find a linear reward function that satisfies our equation, we can equivalently solve
    \begin{align*}
        \Phi\omega = \beta\cdot \Psi\left(\theta^\dagger-\theta_\mu\right)~,
    \end{align*}
    which would automatically be a solution. Lemma 4.1 of \citep{nika2024reward} guarantees that we can find such a vector $\omega$ whenever the column space of $\Phi$ is a subspace of the column space of $\Psi$. By assumption, this condition is satisfied, and thus there exists $\widehat{\omega}$ for which we have $r_{\widehat{\omega}}=\beta(\log\pi^\dagger-\log\mu)$.
\end{proof}

\section{Proofs of Section \ref{sec:attack-to-rlhf}}\label{appendix:rlhf-attacks}

In this section, we provide the full proofs of results from Section \ref{sec:attack-to-rlhf}.

\subsection{Unregularized RLHF with Nonempty Existing Data}

We begin with the unregularized setting when the pre-existing data is non-empty and prove the following result.

\rlhfsynthesisabs*

\begin{proof}
Lemma \ref{lem:surrogate_connection_unreg_rlhf} implies that any feasible solution to Problem \ref{op:syn-rlhf-augment} is feasible for Problem \ref{op:rlhf-attack}. Thus, we focus on Problem \ref{op:syn-rlhf-augment}. 

First, note that the condition $\gamma\geq 1-2\norm{\omega^\dagger}/(\xi_{\max}+1)$ is needed to ensure a well-defined feature construction for our dataset, based on the condition provided by Lemma \ref{lem:bounded_feature_norm}. 

Now, let us consider the simpler problem of augmenting the data so that the solution to the logistic regression subproblem is a given $\widehat{\omega}$. The subproblem can be written as
\begin{align*}
    \min_{D}\;\; & |D|\\
    \text{s.t}\;\; & \widehat{\omega} = \arg\min_{\omega} \sum_{(\tau,\tau',o)\in \cleandata\cup D}\log\Big( 1 + \exp\left( -o\cdot \omega^\top\left( \phi(\tau)-\phi(\tau')\right)\right)\Big) + \frac{\lambda}{2}\norm{\omega}^2~.
\end{align*}
Lemma \ref{lem:attack_subproblem_rlhf} implies that the solution to the above is the dataset of 
\begin{align*}
    \left\lceil\frac{\left|(\widehat{\omega})^\top \nabla_\omega\ell^{\widehat{\omega}}_\textnormal{RLHF}(\cleandata)\right|}{\xi_{\max}}\right\rceil
\end{align*}
identical samples satisfying 
\begin{align*}
    \phi(\tau_j)-\phi(\tau'_j)= \xi^{-1}\left((\widehat{\omega})^\top \nabla_\omega\ell^{\widehat{\omega}}_\textnormal{RLHF}(\cleandata)\left\lceil\frac{\left|(\widehat{\omega})^\top \nabla_\omega\ell^{\widehat{\omega}}_\textnormal{RLHF}(\cleandata)\right|}{\xi_{\max}}\right\rceil^{-1}\right)\frac{\nabla_\omega\ell^{\widehat{\omega}}_\textnormal{RLHF}(\cleandata)}{(\widehat{\omega})^\top \nabla_\omega\ell^{\omega^\dagger}_\textnormal{RLHF}(\cleandata)}~, o_j=1~.
\end{align*}
Given this solution, we can equivalently write Problem \eqref{op:syn-rlhf-augment} in terms of $\omega$ as
\begin{align*}
    \min_\omega & \; \left|\omega^\top \nabla_\omega\ell^{\omega^\dagger}_\textnormal{RLHF}(\cleandata)\right|\\
    \text{s.t.} & \; \boldsymbol{\epsilon} - M_{\pi^\dagger}^\top\omega \leq \mathbf{0}~.
\end{align*}
Now, before we go any further, it is important to note that, whenever we have 
\begin{align*}
    \nabla_\omega\ell^{\omega^\dagger}_\textnormal{RLHF}(\cleandata) = \mathbf{0}~,
\end{align*}
the number of samples needed for the attack to succeed is $0$ since $\omega^\dagger$ is already optimal with respect to $\cleandata$.
Recall that
\begin{align*}
    \Sigma^\phi_{\cleandata} = \frac{1}{\overline{n}}\sum_{(\tau,\tau')\in \cleandata}\left(\phi(\tau)-\phi(\tau')\right)\left(\phi(\tau)-\phi(\tau')\right)^\top
\end{align*}
denotes the sample covariance matrix with respect to $\cleandata$ and let $C^\phi_{\cleandata}$ denote its minimum eigenvalue.
We will rewrite the problem using a different notation for simplicity in calculations. First, let 
\begin{align}\label{eq:X_vector_definition}
    X^\omega_{\cleandata} = \sum_{(\tau,\tau',o)\in \cleandata}\frac{-o\left(\phi(\tau)-\phi(\tau')\right)}{1+\exp\left(o\omega^\top\left(\phi(\tau)-\phi(\tau')\right)\right)}~,
\end{align}
and
\begin{align}\label{eq:Y_matrix_definition}
    Y^\omega_{\cleandata} & = \nabla_\omega (X^\omega_{\cleandata} + 2\lambda\omega)  \\
         & = \sum_{(\tau,\tau',o)\in \cleandata} \frac{\exp(o\omega^\top\left(\phi(\tau)-\phi(\tau')\right)}{\left(1+\exp(o\omega^\top\left(\phi(\tau)-\phi(\tau')\right)\right)^2}\left(\phi(\tau)-\phi(\tau')\right)\left(\phi(\tau)-\phi(\tau')\right)^\top +2\lambda I~.
\end{align}
Note that
\begin{align}
    \norm{X^\omega_{\cleandata}} & = \norm{\sum_{(\tau,\tau',o)\in \cleandata}\frac{-o\left(\phi(\tau)-\phi(\tau')\right)}{1+\exp\left(o\omega^\top\left(\phi(\tau)-\phi(\tau')\right)\right)}}\nonumber\\
        & \leq \sum_{(\tau,\tau',o)\in \cleandata}\frac{1}{1+\exp\left(o\omega^\top\left(\phi(\tau)-\phi(\tau')\right)\right)}\norm{\phi(\tau)-\phi(\tau')}\label{eq:rlhf-warmup-augm-Xeq01}\\
    & \leq \sum_{(\tau,\tau',o)\in \cleandata} \left(\norm{\phi(\tau)}+\norm{\phi(\tau')}\right)\label{eq:rlhf-warmup-augm-Xeq001}\\
        & \leq \frac{2\overline{n}}{(1-\gamma)}\label{eq:rlhf-warmup-augm-Xeq02}~,
\end{align}
where  \eqref{eq:rlhf-warmup-augm-Xeq01} follows from the triangle inequality, while  \eqref{eq:rlhf-warmup-augm-Xeq001} uses that fact that
\begin{align*}
    \norm{\sum^\infty_{t=0}\gamma^t\phi(s_t,a_t)} \leq \sum^\infty_{t=0}\gamma^t\norm{\phi(s_t,a_t)} \leq \frac{1}{1-\gamma}~,
\end{align*}
by assumption, and the fact that $1/(1+\exp(x)) \leq 1$. Also, note that, for strictly positive $\lambda$, the matrix $Y^\omega_{\cleandata}$ is symmetric positive definite. To obtain further information about the spectrum of $Y^\omega_{\cleandata}$, which we will use for solving our problem, we prove the following intermediate result. 

Going back to our optimization problem and using the above notation and result, the Lagrangian can be written as
\begin{align*}
    \calL(\omega,\alpha) = \left|\omega^\top\left( X^\omega_{\cleandata} +\lambda\omega\right)\right| + ( \boldsymbol{\epsilon} - M_{\pi^\dagger}^\top\omega)^\top\alpha~,
\end{align*}
and its gradient is
\begin{align*}
    \nabla_\omega\calL(\omega,\alpha) & =  e_\omega\cdot \left(X^\omega_{\cleandata} + \lambda\omega + \nabla_\omega (X^\omega_{\cleandata} + \lambda\omega)\omega\right) - M_{\pi^\dagger}\alpha \\
        & = e_\omega\cdot\left(X^\omega_{\cleandata} +   Y^\omega_{\cleandata}\omega\right) - M_{\pi^\dagger}\alpha~,
\end{align*}
where 
\begin{align*}
    e_\omega = \textnormal{sgn}\left(\omega^\top\left( X^\omega_{\cleandata} +\lambda\omega\right)\right)
\end{align*}
is the sign of the quantity inside the brackets.
Thus, the first-order condition implies
\begin{align*}
    \omega^\dagger & = (Y^{\omega^\dagger}_{\cleandata})^{-1}\left(e_\omega\cdot M_{\pi^\dagger}\alpha - X^{\omega^\dagger}_{\cleandata}\right)~.
\end{align*}
Complementary slackness implies that 
\begin{align*}
   \text{either}\;\; M_{\pi^\dagger}^\top\omega^\dagger = \boldsymbol{\epsilon}, \;\; \text{or} \;\; \alpha = \mathbf{0}~.
\end{align*}

We consider the first case, since the goal of the attacker is to be able to enforce $\pi^\dagger$, thus we cannot have $\alpha=0$ since this would disregard the constraint altogether. If $M_{\pi^\dagger}^\top{\omega^\dagger} = \boldsymbol{\epsilon}$, then this,  together with the first-order condition, imply
\begin{align*}
    M_{\pi^\dagger}^\top(Y^{\omega^\dagger}_{\cleandata})^{-1}\left(e_\omega\cdot M_{\pi^\dagger}\alpha - X^{\omega^\dagger}_{\cleandata}\right) = \boldsymbol{\epsilon}~.
\end{align*}
Using this and Cauchy-Schwarz, we have
\begin{align*}
    \norm{M_{\pi^\dagger}}\norm{(Y^{\omega^\dagger}_{\cleandata})^{-1}\left(e\cdot M_{\pi^\dagger}\alpha - X^{\omega^\dagger}_{\cleandata}\right)} & \geq \norm{M_{\pi^\dagger}^\top(Y^{\omega^\dagger}_{\cleandata})^{-1}\left(e_\omega\cdot M_{\pi^\dagger}\alpha - X^{\omega^\dagger}_{\cleandata}\right)}\\
        & = \sqrt{SA}\epsilon
\end{align*}
which yields
\begin{align}\label{eq:rlhf-warmup-augm-lower-omega}
    \norm{\omega^\dagger} = \norm{(Y^{\omega^\dagger}_{\cleandata})^{-1}\left(e\cdot M_{\pi^\dagger}\alpha - X^{\omega^\dagger}_{\cleandata}\right)} \geq \frac{\sqrt{SA}\epsilon}{\norm{M_{\pi^\dagger}}} \geq \frac{\sqrt{SA}\epsilon}{\sigma_{\max}\left(M_{\pi^\dagger}\right)}~.
\end{align}
On the other hand, using again Cauchy-Schwarz on the complementary slackness condition, we get
\begin{align*}
    \norm{\omega^\dagger} \leq \frac{\sqrt{SA}\epsilon}{\sigma_{\min}(M_{\pi^\dagger})}~.
\end{align*}
Next, we will provide an upper bound on the norm of the gradient of $\omega^\dagger$ with respect to $\cleandata$. Let $\overline{\omega}$ be the optimal point with respect to $\cleandata$ and let $\proj_{\omega:M_{\pi^\dagger}\omega\geq\boldsymbol{\epsilon}}\left(\overline{\omega}\right)$ denote the projection of $\overline{\omega}$ onto the $\epsilon$-robust optimality polytope. 
To derive the upper bound, we use the fact that the projection of $\overline{\omega}$ onto the polytope is a feasible solution and thus provides an immediate upper bound. Note that
\begin{align}
    \left|\left(\omega^\dagger\right)^\top\left(X^{\omega^\dagger}_{\cleandata}+\lambda\omega^\dagger\right)\right| & =  \left|\left(\omega^\dagger\right)^\top\nabla_\omega\ell^{\omega^\dagger}_\textnormal{RLHF}(\cleandata)\right|\nonumber\\
        & \leq\left|\left(\proj_{\omega:M_{\pi^\dagger}\omega\geq\boldsymbol{\epsilon}}\left(\overline{\omega}\right)\right)^\top\nabla_\omega\ell^{\proj_{\omega:M_{\pi^\dagger}\omega\geq\boldsymbol{\epsilon}}\left(\overline{\omega}\right)}_\textnormal{RLHF}(\cleandata)\right|\nonumber\\
    & \leq \norm{\proj_{\omega:M_{\pi^\dagger}\omega\geq\boldsymbol{\epsilon}}\left(\overline{\omega}\right)}\norm{\nabla_\omega\ell^{\proj_{\omega:M_{\pi^\dagger}\omega\geq\boldsymbol{\epsilon}}\left(\overline{\omega}\right)}_\textnormal{RLHF}(\cleandata)}\nonumber \\
        & \leq (2\overline{n}+\lambda)\norm{\proj_{\omega:M_{\pi^\dagger}\omega\geq\boldsymbol{\epsilon}}\left(\overline{\omega}\right)}\norm{\proj_{\omega:M_{\pi^\dagger}\omega\geq\boldsymbol{\epsilon}}\left(\overline{\omega}\right)-\overline{\omega}}~,\label{eq:rlhf-projection-argument-eq03}
\end{align}
where the second inequality uses Cauchy-Schwarz and the third inequality uses the fact that the loss is Lipschitz (see Lemma \ref{lem:logistic_regression_properties}). Now, to compute the projection, we solve the following problem:
\begin{align*}
    \min_\omega \frac{1}{2}\norm{\omega-\overline{\omega}}^2\;\;\textnormal{such that}\;\; M_{\pi^\dagger}\omega \geq \boldsymbol{\epsilon}~.
\end{align*}
The Lagrangian of this problem is
\begin{align*}
    \calL(\omega,\nu) = \frac{1}{2}\norm{\omega-\overline{\omega}}^2 - \nu\left(M_{\pi^\dagger}\omega -\boldsymbol{\epsilon}\right)~,
\end{align*}
and the first order condition becomes
\begin{align*}
    \omega - \overline{\omega} - M_{\pi^\dagger}\nu=\mathbf{0}~,
\end{align*}
which gives us
\begin{align*}
    \omega = \overline{\omega} + M_{\pi^\dagger}\nu~.
\end{align*}
To compute $\nu$ we use complementary slackness to obtain
\begin{align*}
    M_{\pi^\dagger}\left(\overline{\omega} + M_{\pi^\dagger}\nu\right) = \boldsymbol{\epsilon}~,
\end{align*}
which in turn gives us
\begin{align*}
    \nu = \left(M_{\pi^\dagger}^\top M_{\pi^\dagger}\right)^+\left(\boldsymbol{\epsilon} - M_{\pi^\dagger}\overline{\omega}\right)~.
\end{align*}
Hence, we get
\begin{align}\label{eq:omega-projection}
    \proj_{\omega:M_{\pi^\dagger}\omega\geq\boldsymbol{\epsilon}}\left(\overline{\omega}\right) = \overline{\omega} + M_{\pi^\dagger}\left(M_{\pi^\dagger}^\top M_{\pi^\dagger}\right)^+\left(\boldsymbol{\epsilon} - M_{\pi^\dagger}\overline{\omega}\right)~.
\end{align}
We can upper bound the projection norm by using complementary slackness as:
\begin{align*}
    \epsilon\sqrt{SA} = \norm{M_{\pi^\dagger}\left(\overline{\omega} + M_{\pi^\dagger}\nu\right) } \leq \norm{M_{\pi^\dagger}} \norm{\overline{\omega} + M_{\pi^\dagger}\nu}\leq \sigma_{\max}(M_{\pi^\dagger})\norm{\proj_{\omega:M_{\pi^\dagger}\omega\geq\boldsymbol{\epsilon}}\left(\overline{\omega}\right)}~,
\end{align*}
and
\begin{align*}
     \epsilon\sqrt{SA} = \norm{M_{\pi^\dagger}\left(\overline{\omega} + M_{\pi^\dagger}\nu\right) } \geq\sigma_{\min}\left(M_{\pi^\dagger}\right)\norm{\overline{\omega} + M_{\pi^\dagger}\nu} = \sigma_{\min}\left(M_{\pi^\dagger}\right)\norm{\proj_{\omega:M_{\pi^\dagger}\omega\geq\boldsymbol{\epsilon}}\left(\overline{\omega}\right)}~,
\end{align*}
where we use the fact that $\sigma_{\min}(M)\norm{x}\leq \norm{Mx}\leq\sigma_{\max}(M)\norm{x}$. 
Using the overall case when $\alpha=0$ and  \eqref{eq:rlhf-projection-argument-eq03}, we have
\begin{align*}
    \left|\left(\omega^\dagger\right)^\top\left(X^{\omega^\dagger}_{\cleandata}+\lambda\omega^\dagger\right)\right| & \leq (2\overline{n}+\lambda)\norm{\proj_{\omega:M_{\pi^\dagger}\omega\geq\boldsymbol{\epsilon}}\left(\overline{\omega}\right)}\norm{\proj_{\omega:M_{\pi^\dagger}\omega\geq\boldsymbol{\epsilon}}\left(\overline{\omega}\right)-\overline{\omega}}\\
        & \leq (2\overline{n}+\lambda) \frac{\epsilon\sqrt{SA}}{\sigma_{\min}(M_{\pi^\dagger})}\left(\frac{\epsilon\sqrt{SA}}{\sigma_{\min}(M_{\pi^\dagger})} + \norm{\overline{\omega}}\right)\\
    & \leq (2\overline{n}+\lambda) \left(\frac{\epsilon^2SA}{\sigma^2_{\min}\left(M_{\pi^\dagger}\right)} +\norm{\overline{\omega}}  \frac{\epsilon\sqrt{SA}}{\sigma_{\min}(M_{\pi^\dagger})}\right)~.
\end{align*}
Putting things together, we finally obtain
\begin{align*}
    \widehat{n}_\textnormal{RLHF} \leq 
    \left\lceil\frac{2\overline{n}+\lambda}{\xi_{\max}} \left(\frac{\epsilon^2SA}{\sigma^2_{\min}\left(M_{\pi^\dagger}\right)} +\norm{\overline{\omega}}  \frac{\epsilon\sqrt{SA}}{\sigma_{\min}(M_{\pi^\dagger})}\right)\right\rceil
\end{align*}
\end{proof}

\subsection{Unregularized RLHF with Empty Existing Data.}

Next, we consider the unregularized setting when the pre-existing dataset $\cleandata$ is empty. 

\warmupresultrlhf*

\begin{proof}
    Lemma \ref{lem:surrogate_connection_unreg_rlhf} implies that any feasible solution to Problem \ref{op:syn-rlhf-augment} is feasible for Problem \ref{op:rlhf-attack}. Thus, we focus on Problem \ref{op:syn-rlhf-augment}.

    First, note that, for $\beta =0$ the optimal policy is deterministic. Now, given $\omega^\dagger\neq \boldsymbol{0}$, Lemma \ref{lem:attack_subproblem_rlhf} implies that the solution to the problem 
    \begin{align*}
        \min_D & \; |D| \\
        \text{s.t.} & \; \omega^\dagger = \arg\min_\omega \sum_{(\tau,\tau',o)\in D}\log\left( 1+ \exp\left( - o\cdot\omega^\top\left( \phi(\tau)-\phi(\tau')\right)\right)\right) + \frac{\lambda}{2}\norm{\omega}^2
    \end{align*}
    is the dataset of 
    \begin{align*}
        \left\lceil \frac{\lambda \norm{\omega^\dagger}^2}{\xi_{\max}}\right\rceil
    \end{align*}
    identical samples satisfying
    \begin{align*}
        \phi(\tau_i)-\phi(\tau'_i)=\xi^{-1}\left( \lambda\norm{\omega^\dagger}^2\left\lceil\frac{\lambda\norm{\omega^\dagger}^2}{\xi_{\max}}\right\rceil^{-1}\right)\frac{\omega^\dagger}{\norm{\omega^\dagger}^2}, \; o_i=1~.
    \end{align*}
Since the optimal sample size to solve the logistic regression subproblem depends on the norm of the parameter $\omega^\dagger$, then, by using the construction above we can directly minimize $\norm{\omega^\dagger}$ and constrain $\omega^\dagger$ to remain in the desired region. In this case, we can equivalently rewrite our original problem in terms of $\omega$ as
\begin{align*}
    \min_\omega & \;\frac{1}{2}\norm{\omega}^2 \\
    \text{s.t.} & \; \left(\sum_{s'}d^{\pi^\dagger}_\rho(s')\phi(s',\pi^\dagger(s')) - \sum_{s'}d^{\pi^\dagger\{s,a\}}_\rho(s')\phi(s',\pi^\dagger\{s,a\}(s'))\right)^\top\omega \geq \epsilon, \; \forall s,a\neq \pi^\dagger(s)\nonumber~.
\end{align*}
Using vector notation, we have
\begin{align*}
    \min_\omega &\; \frac{1}{2}\norm{\omega}^2\\
    \text{s.t.} &\; \boldsymbol{\epsilon} - M_{\pi^\dagger}^\top\omega \leq \mathbf{0}~,
\end{align*}
where $\boldsymbol{\epsilon}=\epsilon \mathbf{1}$. This is a convex program and thus local minima are global. The Lagrangian of the above is
\begin{align*}
    \calL(\omega, \alpha) = \frac{1}{2}\norm{\omega}^2 + \alpha^\top\left(\boldsymbol{\epsilon}-M_{\pi^\dagger}^\top\omega\right)~,
\end{align*}
and setting its gradient to zero gives us
\begin{align*}
    \omega^\dagger = M_{\pi^\dagger}\alpha~.
\end{align*}
Complementary slackness implies
\begin{align*}
    M^\top_{\pi^\dagger}\omega^\dagger = \boldsymbol{\epsilon}~,
\end{align*}
which, together with the first-order condition, imply
\begin{align*}
    \left(M_{\pi^\dagger}^\top M_{\pi^\dagger}\right)\alpha = \boldsymbol{\epsilon}~.
\end{align*}
Note that 
\begin{align*}
    \norm{M^\top_{\pi^\dagger}M_{\pi^\dagger}} = \norm{M_{\pi^\dagger}M_{\pi^\dagger}^\top}\leq Tr\left(M_{\pi^\dagger}M_{\pi^\dagger}^\top\right) \leq 2SA~,
\end{align*}
due to the fact that 
\begin{align*}
    & \norm{\sum_{s'}d^{\pi^\dagger}_\rho(s')\phi(s',\pi^\dagger(s')) - \sum_{s'}d^{\pi^\dagger\{s,a\}}_\rho(s')\phi(s',\pi^\dagger\{s,a\}(s'))} \\ & \quad\quad\quad \leq \norm{\sum_{s'}d^{\pi^\dagger}_\rho(s')\phi(s',\pi^\dagger(s')) } + \norm{ \sum_{s'}d^{\pi^\dagger\{s,a\}}_\rho(s')\phi(s',\pi^\dagger\{s,a\}(s'))}\\
        & \quad\quad\quad\leq \sum_{s'}d^{\pi^\dagger}_\rho(s')\norm{\phi(s',\pi^\dagger(s'))} + \sum_{s'}d^{\pi^\dagger\{s,a\}}_\rho(s')\norm{\phi(s',\pi^\dagger\{s,a\}(s'))} \\
    & \quad\quad\quad\leq 2~,
\end{align*}
for every $(s,a)$ pair. Thus, it follows that
\begin{align*}
    \norm{\omega^\dagger}\norm{M_{\pi^\dagger}}  \geq \norm{M_{\pi^\dagger}\omega^\dagger} = \norm{\left(M_{\pi^\dagger}^\top M_{\pi^\dagger}\right)\alpha} = \norm{\boldsymbol{\epsilon}} = \epsilon\sqrt{SA}~,
\end{align*}
and
\begin{align*}
    \sigma_{\min}\left(M_{\pi^\dagger}\right)\norm{\omega^\dagger} \leq \norm{M_{\pi^\dagger}\omega^\dagger} = \epsilon\sqrt{SA}~.
\end{align*}
From this, we have
\begin{align*}
    \frac{\epsilon\sqrt{SA}}{\sigma_{\max}\left(M_{\pi^\dagger}\right)} \leq \norm{\omega^\dagger} \leq \frac{\epsilon\sqrt{SA}}{\sigma_{\min}\left(M_{\pi^\dagger}\right)}~,
\end{align*}
which implies that
\begin{align*}
    % \left\lceil\frac{\epsilon^2\lambda SA}{\xi_{\max}\sigma^2_{\max}\left(M_{\pi^\dagger}\right)}\right\rceil \leq 
    \widehat{n}_\textnormal{RLHF} \leq  \left\lceil\frac{\epsilon^2\lambda SA}{\xi_{\max}\sigma^2_{\min}\left(M_{\pi^\dagger}\right)}\right\rceil~.
\end{align*}

\end{proof}

\subsection{Regularized RLHF with Non-empty Existing Data}\label{appendix:rlhf-reg}

In this section, we provide the full proof of the following result for the regularized setting.

\synrlhfwarmupreg*

\begin{proof}
Let $\epsilon'\leq\epsilon$. Lemma \ref{lem:surrogate_connection_reg_rlhf} implies that any feasible solution to Problem \ref{op:reg-rlhf-attack} is feasible for Problem \ref{op:rlhf-attack}. Thus, we focus on Problem \ref{op:reg-rlhf-attack}.

First, let us recall the KL-regularized objective in this setting. Given learned reward $r$, the objective is
\begin{align*}
    \max_{\pi} \mathcal{V}^\pi_{r}(\rho) := \expp_{s\sim\rho,\pi} \left[ \sum^\infty_{t=0}\gamma^t \left( r(s_t,a_t) - \beta \log \frac{\pi(a_t|s_t)}{\mu(a_t|s_t)}\right) \Big| s_0=s \right]~.
\end{align*}

Let us use the shorthand notation $\calV^r(s)$, $\calQ^r(s,a)$ and $\calA^r(s,a)$ to denote the functions $\calV^{\pi^\textnormal{reg}_r}_r(s)$, $\calQ^{\pi^\textnormal{reg}_r}_r(s,a)$ and $\calA^{\pi^\textnormal{reg}_r}_r(s,a)$, respectively. Similarly, for parametrized reward function $r_\omega$, Let us use the short-hand notation $\calV^\omega(s), \calQ^\omega(s,a), \calA^\omega(s,a)$ to denote $\calV^{\pi^\textnormal{reg}_{r_\omega}}_{r_\omega}(s)$, $\calQ^{\pi^\textnormal{reg}_{r_\omega}}_{r_\omega}(s,a)$ and $\calA^{\pi^\textnormal{reg}_{r_\omega}}_{r_\omega}(s,a)$, respectively. 

Given reward $r$, Lemma \ref{lem:gradient_rlhf_mdp} implies that the regularized optimal policy in this case can be written as
\begin{align*}
    \pi^\textnormal{reg}_{r}(a|s) =\mu(a|s) \exp\left(\frac{1}{\beta}\calQ^r(s,a) - \frac{1}{\beta}\calV^r(s)\right)~,
\end{align*}
with $\exp(\calV^r(s)/\beta)$ as the partition constant, since it is action-independent. This implies the following relation between the two value functions holds:
\begin{align}\label{eq:relation_between_v_and_q}
    \calV^r(s) = \beta\log\sum_{a}\mu(a|s)\exp\left(\frac{1}{\beta}\calQ^r(s,a)\right)~.
\end{align}

Now, note that the second constraint of Problem \ref{op:reg-rlhf-attack} can be written as
\begin{align*}
    D_\textnormal{KL}\left( \pi^\dagger || \pi^\textnormal{reg}_{r_{\omega^\dagger}}\right)  & = \sum_{s,a} \rho(s) \pi^\dagger(a|s) \log\frac{\pi^\dagger(a|s)}{\pi^\textnormal{reg}_{r_{\omega^\dagger}}(a|s)} \\ & = \sum_{s,a} \rho(s) \pi^\dagger(a|s) \left( \log \frac{\pi^\dagger(a|s)}{\mu(a|s)} - \frac{1}{\beta}\calQ^{\omega^\dagger}(s,a)+ \frac{1}{\beta}\calV^{\omega^\dagger}(s)\right)\\
        & = D_\textnormal{KL}(\pi^\dagger ||\mu) -\frac{1}{\beta} \expp_{s\sim\rho,a\sim\pi^\dagger(\cdot|s)}\left[\calA^{\omega^\dagger}(s,a)\right]~.
\end{align*}

Thus, the attack optimization problem can be written as
\begin{align*}
    \min_{D}\; & |D| \\
    \text{s.t} \; & \omega^\dagger = \arg\min_\omega \sum_{(\tau,\tau',o)\in \overline{D}\cup D}\log\left( 1+ \exp\left( -o\cdot  \left( \phi(\tau)-\phi(\tau')\right)\right)\right) + \frac{\lambda}{2}\norm{\omega}^2 \\
    &  D_\textnormal{KL}(\pi^\dagger ||\mu) -\frac{1}{\beta} \expp_{s\sim\rho,a\sim\pi^\dagger(\cdot|s)}\left[\calA^{\omega^\dagger}(s,a)\right] -\epsilon' \leq 0~.
\end{align*}
Lemma \ref{lem:attack_subproblem_rlhf} implies that the solution to the subproblem
\begin{align*}
    \min_{D}\; & |D| \\
    \text{s.t} \; & \omega^\dagger = \arg\min_\omega \sum_{(\tau,\tau',o)\in \overline{D}\cup D}\log\left( 1+ \exp\left( -o\cdot  \left( \phi(\tau)-\phi(\tau')\right)\right)\right) + \frac{\lambda}{2}\norm{\omega}^2
\end{align*}
is the dataset of 
\begin{align*}
        \left\lceil \frac{\left|\nabla_\omega \ell^\omega_\textnormal{RLHF}(\overline{D})^\top\omega^\dagger\right|}{\xi_{\max}}\right\rceil
    \end{align*}
    identical samples satisfying 
    \begin{align*}
        \phi(\tau)-\phi(\tau') = \xi^{-1}\left( \frac{(\omega^\dagger)^\top\nabla_\omega\ell^\omega_\textnormal{RLHF}(\overline{D})}{\left\lceil \frac{\left|\nabla_\omega \ell^\omega_\textnormal{RLHF}(\overline{D})^\top\omega^\dagger\right|}{\xi_{\max}}\right\rceil}\right)\frac{\omega^\dagger}{\norm{\omega^\dagger}^2}, \;\; o=1~.
    \end{align*}
Thus, we can equivalently write the original problem in terms of $\omega$ as
\begin{align*}
        \min_\omega\;\; & \left|\omega^\top \nabla_\omega\ell^\omega_\textnormal{RLHF}(\cleandata)\right| \\
        \text{s.t.}\;\; & D_\textnormal{KL}(\pi^\dagger ||\mu) -\frac{1}{\beta} \expp_{s\sim\rho,a\sim\pi^\dagger(\cdot|s)}\left[\calA^\omega(s,a)\right] -\epsilon' \leq 0~,
    \end{align*}
with Lagrangian 
 \begin{align*}
        \calL(\omega,\alpha) = \left|\omega^\top \nabla_\omega\ell^\omega_\textnormal{RLHF}(\cleandata)\right| + \alpha\left( D_\textnormal{KL}(\pi^\dagger ||\mu) -\frac{1}{\beta} \expp_{s\sim\rho,a\sim\pi^\dagger(\cdot|s)}\left[\calA^\omega(s,a)\right] -\epsilon'\right)~.
    \end{align*} 
Before we consider the first-order conditions of the problem, we rewrite the expected advantage function in the constraint and derive its gradient below.
\begin{lemma}\label{lem:advantage_gradient}
    Given any $\omega\in\R^d$, let us define
    \begin{align}\label{eq:rlhf-reg-Gamma-sa-def}
        \Gamma^{\pi}_\phi(s,a) = \expp\left[ \sum^\infty_{t=0}\gamma^t \phi(s_t,a_t)|a_0=a,s_0=s,\pi \right]~,
    \end{align}
    for any policy $\pi$ and feature mapping $\phi$, and let
    \begin{align}\label{eq:rlhf-reg-Gamma-def}
        \Gamma^{\pi^\textnormal{reg}_{r_\omega}}_\phi\left(\pi^\dagger||\pi^\textnormal{reg}_{r_{\omega}}\right) =  \expp_{s\sim\rho,a\sim\pi^\dagger(\cdot|s)}\left[\Gamma^{\pi^\textnormal{reg}_{r_\omega}}_\phi(s,a)\right] - \expp_{s\sim\rho,a\sim\pi^\textnormal{reg}_{r_{\omega}}(\cdot|s)}\left[\Gamma^{\pi^\textnormal{reg}_{r_\omega}}_\phi(s,a)\right]~.
    \end{align}
    Then, we have
    \begin{align*}
        \expp_{s\sim\rho,a\sim\pi^\dagger(\cdot|s)}\left[\calA^\omega(s,a)\right] = \omega^\top\Gamma^{\pi^\textnormal{reg}_{r_\omega}}_\phi\left(\pi^\dagger||\pi^\textnormal{reg}_{r_\omega}\right)~,
    \end{align*}
    and
    \begin{align*}
        \nabla_\omega \calA^\omega(s,a) = \Gamma^{\pi^\textnormal{reg}_{r_\omega}}_\phi(s,a) - \sum_{a'}\pi^\textnormal{reg}_{r_\omega}(a'|s)\Gamma^{\pi^\textnormal{reg}_{r_\omega}}_\phi(s,a') ~.
        % & := \widetilde{\phi}^\textnormal{reg}_{r_\omega}(s,a)~.
    \end{align*}
\end{lemma}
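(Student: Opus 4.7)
My plan is to prove the two identities separately, exploiting the soft Bellman equations for the KL-regularized MDP together with the linearity $r_\omega(s,a)=\omega^\top\phi(s,a)$.

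For the gradient identity, I would begin with $\calA^\omega(s,a)=\calQ^\omega(s,a)-\calV^\omega(s)$ and differentiate through the soft Bellman recursions $\calQ^\omega(s,a)=r_\omega(s,a)+\gamma\,\expp_{s'|s,a}\bs{\calV^\omega(s')}$ and $\calV^\omega(s)=\expp_{a\sim\pi^\textnormal{reg}_{r_\omega}(\cdot|s)}\bs{\calQ^\omega(s,a)-\beta\log(\pi^\textnormal{reg}_{r_\omega}(a|s)/\mu(a|s))}$. Because $\pi^\textnormal{reg}_{r_\omega}$ maximizes the regularized objective at each state, the envelope theorem eliminates any $\nabla_\omega\pi^\textnormal{reg}_{r_\omega}$ contributions from the derivative of $\calV^\omega$, leaving $\nabla_\omega\calV^\omega(s)=\expp_{a\sim\pi^\textnormal{reg}_{r_\omega}(\cdot|s)}\bs{\nabla_\omega\calQ^\omega(s,a)}$. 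Writing $f(s,a):=\nabla_\omega\calQ^\omega(s,a)$ and substituting back yields the Bellman fixed-point equation $f(s,a)=\phi(s,a)+\gamma\,\expp_{s'|s,a,\,a'\sim\pi^\textnormal{reg}_{r_\omega}(\cdot|s')}\bs{f(s',a')}$, whose unique bounded solution in the ergodic discounted setting is precisely $\Gamma^{\pi^\textnormal{reg}_{r_\omega}}_\phi(s,a)$ as defined in \eqref{eq:rlhf-reg-Gamma-sa-def}. The claimed formula for $\nabla_\omega\calA^\omega(s,a)$ then follows by combining this with $\nabla_\omega\calV^\omega(s)=\sum_{a'}\pi^\textnormal{reg}_{r_\omega}(a'|s)\,\Gamma^{\pi^\textnormal{reg}_{r_\omega}}_\phi(s,a')$.

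For the expected-value identity, I would use linearity of the reward to rewrite $\omega^\top\Gamma^{\pi^\textnormal{reg}_{r_\omega}}_\phi(s,a)$ as the unregularized action-value $Q^{\pi^\textnormal{reg}_{r_\omega}}_{r_\omega}(s,a)=\expp\bs{\sum_{t\geq 0}\gamma^t r_\omega(s_t,a_t)|s_0=s,a_0=a,\pi^\textnormal{reg}_{r_\omega}}$. Using $V^{\pi^\textnormal{reg}_{r_\omega}}_{r_\omega}(s)=\expp_{a\sim\pi^\textnormal{reg}_{r_\omega}(\cdot|s)}\bs{Q^{\pi^\textnormal{reg}_{r_\omega}}_{r_\omega}(s,a)}$, this collapses the right-hand side to $\expp_{s\sim\rho,\,a\sim\pi^\dagger(\cdot|s)}\bs{A^{\pi^\textnormal{reg}_{r_\omega}}_{r_\omega}(s,a)}$, the expected unregularized advantage of $\pi^\textnormal{reg}_{r_\omega}$ under $\pi^\dagger$. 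I would reconcile this with the regularized $\expp_{s\sim\rho,\,a\sim\pi^\dagger(\cdot|s)}\bs{\calA^\omega(s,a)}$ by unrolling the soft Bellman equations to obtain $\calQ^\omega(s,a)=Q^{\pi^\textnormal{reg}_{r_\omega}}_{r_\omega}(s,a)-\beta\gamma\,\expp_{s'|s,a}\bs{U^{\pi^\textnormal{reg}_{r_\omega}}(s')}$ and $\calV^\omega(s)=V^{\pi^\textnormal{reg}_{r_\omega}}_{r_\omega}(s)-\beta\,U^{\pi^\textnormal{reg}_{r_\omega}}(s)$, where $U^\pi(s):=\expp\bs{\sum_{t\geq 0}\gamma^t\log(\pi(a_t|s_t)/\mu(a_t|s_t))|s_0=s,\pi}$ is the $\gamma$-discounted KL of $\pi$ from $\mu$ starting at $s$. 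Invoking the Bellman recursion for $U^{\pi^\textnormal{reg}_{r_\omega}}$ and regrouping then shows that the entropy corrections cancel in the aggregated difference $\expp_{s\sim\rho,\,a\sim\pi^\dagger}\bs{\calQ^\omega(s,a)}-\expp_{s\sim\rho}\bs{\calV^\omega(s)}$.

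The main technical obstacle will be the first identity, since the regularized advantage $\calA^\omega(s,a)=\beta\log(\pi^\textnormal{reg}_{r_\omega}(a|s)/\mu(a|s))$ differs from its unregularized counterpart by state-dependent entropy corrections, and reducing the expectation under $\pi^\dagger$ to the stated linear-in-$\omega$ form demands careful bookkeeping through the $U^\pi$ recursion outlined above. A secondary concern is rigorously justifying the envelope argument and the derivative-expectation interchange used in the gradient step; both follow from smoothness of the softmax representation of $\pi^\textnormal{reg}_{r_\omega}$ and the bounded-occupancy consequences of the ergodicity assumption on $\gM$.
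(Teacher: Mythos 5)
Your treatment of the gradient identity is sound and is essentially the paper's own argument: the paper differentiates the closed form $\calV^\omega(s)=\beta\log\sum_a\mu(a|s)\exp(\calQ^\omega(s,a)/\beta)$ directly, which yields exactly the relation $\nabla_\omega\calV^\omega(s)=\sum_a\pi^\textnormal{reg}_{r_\omega}(a|s)\nabla_\omega\calQ^\omega(s,a)$ that you obtain from the envelope theorem, and both routes then solve the same $\gamma$-contraction fixed point to conclude $\nabla_\omega\calQ^\omega(s,a)=\Gamma^{\pi^\textnormal{reg}_{r_\omega}}_\phi(s,a)$.

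The gap is in the expected-value identity, at precisely the step you flag as the ``main technical obstacle.'' Your reduction of the right-hand side to $\expp_{s\sim\rho,a\sim\pi^\dagger}\bs{A^{\pi^\textnormal{reg}_{r_\omega}}_{r_\omega}(s,a)}$ (the \emph{unregularized} advantage) is correct, and so is the decomposition $\calA^\omega(s,a)=A^{\pi^\textnormal{reg}_{r_\omega}}_{r_\omega}(s,a)+\beta U^{\pi^\textnormal{reg}_{r_\omega}}(s)-\beta\gamma\,\expp_{s'|s,a}\bs{U^{\pi^\textnormal{reg}_{r_\omega}}(s')}$. But the corrections do not cancel under $\expp_{s\sim\rho,\,a\sim\pi^\dagger}$: the residual is $\beta\expp_{s\sim\rho}\bs{U(s)}-\beta\gamma\,\expp_{s\sim\rho,a\sim\pi^\dagger,s'\sim P}\bs{U(s')}$, and because the Bellman recursion for $U$ averages actions under $\pi^\textnormal{reg}_{r_\omega}$ while your outer expectation uses $\pi^\dagger$, unrolling it leaves $\beta\expp_{\rho}\bs{\KL\br{\pi^\textnormal{reg}_{r_\omega}(\cdot|s)||\mu(\cdot|s)}}$ plus a $\pi^\dagger$-versus-$\pi^\textnormal{reg}_{r_\omega}$ mismatch term. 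Already at $\gamma=0$ with one state, two actions, uniform $\mu$, $\beta=1$, rewards $(1,0)$ and $\pi^\dagger$ the point mass on the first action, the left side is $1-\log\frac{e+1}{2}\approx 0.380$ while $\omega^\top\Gamma^{\pi^\textnormal{reg}_{r_\omega}}_\phi(\pi^\dagger||\pi^\textnormal{reg}_{r_\omega})\approx 0.269$; the discrepancy is exactly $\KL(\pi^\textnormal{reg}_{r_\omega}||\mu)\approx 0.111$. So this step of your plan cannot be completed as described. For context, the paper's proof of this part glosses over the same point by invoking $\sum_a\pi(a|s)\calA^\pi(s,a)=0$, an identity that holds for unregularized but not for KL-regularized advantages; your instinct that the bookkeeping here is delicate was right, but the cancellation you are relying on is not available.
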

\begin{proof}
We start by deriving the gradient of the action-value function. 
Given reward function $r$, policy $\pi$ and trajectory $\tau$, let
$$\mathbb{P}^\pi_s(\tau)=\pi(a_0|s)P(s,a_0,s_1)\pi(a_1|s_1)P(s_1,a_1,s_2)\pi(a_2|s_2)\ldots$$
and
$$\mathbb{P}^\pi_{s,a}(\tau)=P(s,a,s_1)\pi(a_1|s_1)P(s_1,a_1,s_2)\pi(a_2|s_2)\ldots~.$$
Now, for any given state-action pair $(s,a)$ and parameter $\omega$, observe that
\begin{align}
    & \nabla_\omega \calQ^\omega(s,a)  = \nabla_\omega\left( r_{\omega}(s,a) + \gamma\sum_{s'}P(s,a,s')\calV^\omega(s')\right) \nonumber\\ 
        & = \phi(s,a) + \gamma \sum_{s'}P(s,a,s')\beta \nabla_\omega \log \sum_{a'}\mu(a'|s')\exp\left(\frac{1}{\beta}\calQ^\omega(s',a')\right)\label{eq:_gradientofvalue_01}\\
    & = \phi(s,a) + \gamma\beta \sum_{s'}P(s,a,s')\frac{1}{\sum_{a''}\mu(a''|s')\exp\left(\frac{1}{\beta}\calQ^\omega(s',a'')\right)}  \sum_{a'}\mu(a'|s')\exp\left(\frac{1}{\beta}\calQ^\omega(s',a')\right) \nonumber\\ & \quad\quad \cdot \frac{1}{\beta}\nabla_\omega \calQ^\omega(s',a') \nonumber\\
        & = \phi(s,a) + \gamma \sum_{s'}P(s,a,a')\sum_{a'}\frac{\mu(a'|s')\exp\left(\frac{1}{\beta}\calQ^\omega(s',a')\right)}{\exp\left(\frac{1}{\beta}\calV^\omega(s')\right)}\nabla_\omega \calQ^\omega(s',a')\label{eq:gradientofvalue_02}\\
    & = \phi(s,a) + \gamma\sum_{s',a'}P(s,a,s')\pi^\textnormal{reg}_{r_\omega}(a'|s')\nabla_\omega \calQ^\omega(s',a')\nonumber\\
        & = \expp_{\tau\sim \mathbb{P}^{\pi^\textnormal{reg}_{r_\omega}}_{s,a}}\left[\phi(s,a)\right] + \gamma \expp_{\tau\sim \mathbb{P}^{\pi^\textnormal{reg}_{r_\omega}}_{s,a}}\left[ \phi(s',a')\right] + \ldots \label{eq_gradientofvalue_03}\\
    & = \expp_{\tau\sim \mathbb{P}^{\pi^\textnormal{reg}_{r_\omega}}_{s,a}}\left[ \sum^\infty_{t=0}\gamma^t \phi(s_t,a_t)\right]\nonumber\\
        & = \expp_{\tau\sim \mathbb{P}^{\pi^\textnormal{reg}_{r_\omega}}_{s,a}}\left[\phi(\tau)\right]\nonumber\\
    & = \Gamma^{\pi^\textnormal{reg}_{r_\omega}}_\phi(s,a)~,
\end{align}
where \eqref{eq:_gradientofvalue_01} and \eqref{eq:gradientofvalue_02} follow from  \eqref{eq:relation_between_v_and_q}, while  \eqref{eq_gradientofvalue_03} follows from recursion.
Similarly, for the gradient of the advantage function, we have
\begin{align*}
    \nabla_\omega \calA^\omega(s,a) & = \nabla_\omega \left( \calQ^\omega(s,a)-\calV^\omega(s)\right) \\
        & = \expp_{\tau\sim \mathbb{P}^{\pi^\textnormal{reg}_{r_\omega}}_{s,a}}\left[\phi(\tau)\right] - \nabla_\omega \beta \log\sum_{a'}\mu(a'|s)\exp\left(\frac{1}{\beta}\calQ^\omega(s,a')\right)\frac{1}{\beta}\expp_{\tau\sim \mathbb{P}^{\pi^\textnormal{reg}_{r_\omega}}_{s,a'}}\left[\phi(\tau)\right] \\
    & = \expp_{\tau\sim \mathbb{P}^{\pi^\textnormal{reg}_{r_\omega}}_{s,a}}\left[\phi(\tau)\right] - \expp_{\tau\sim \mathbb{P}^{\pi^\textnormal{reg}_{r_\omega}}_{s}}\left[\phi(\tau)\right]\\
        & = \Gamma^{\pi^\textnormal{reg}_{r_\omega}}_\phi(s,a) - \sum_{a'}\pi^\textnormal{reg}_{r_\omega}(a'|s)\Gamma^{\pi^\textnormal{reg}_{r_\omega}}_\phi(s,a')~,
\end{align*}
where, for the last equality, we have used the definition of the trajectory feature as a discounted sum of state-action feature vectors and rewritten the expectation. 
Now, let us use the short-hand notation $\Gamma^{\pi^\textnormal{reg}_{r_\omega}}_\phi(s,a)$ by $\Gamma^\omega_\phi(s,a)$, following our earlier convention, for brevity. For the final statement of our result, we rewrite the expected regularized advantage function with respect to $\pi^\dagger$ as
\begin{align}
    & \expp_{s\sim\rho,a\sim\pi^\dagger(\cdot|s)}\left[\calA^\omega(s,a)\right] = \expp_{s\sim\rho,a\sim\pi^\dagger(\cdot|s)}\left[\calQ^\omega(s,a)-\sum_{a'}\pi^\textnormal{reg}_{r_\omega}(a'|s)\calQ^\omega(s,a') \right]\nonumber\\
        & = \sum_{s,a}\rho(s)\left(\pi^\dagger(s|a)-\pi^\textnormal{reg}_{r_\omega}(a|s)\right)\calQ^\omega(s,a)\nonumber\\
    & = \sum_{s,a}\rho(s)\left(\pi^\dagger(s|a)-\pi^\textnormal{reg}_{r_\omega}(a|s)\right) \expp\left[\sum_{t\geq 0}\gamma^t\left(\omega^\top\phi(s_t,a_t)-\beta\log\frac{\pi^\textnormal{reg}_{r_\omega}(a_t|s_t)}{\mu(a_t|s_t)}\right)\Big|s_0=s,a_0=a,\pi^\textnormal{reg}_{r_\omega}\right]\nonumber\\
        & = \sum_{s,a}\rho(s)\left(\pi^\dagger(s|a)-\pi^\textnormal{reg}_{r_\omega}(a|s)\right)\nonumber\\ & \quad\quad\quad \cdot\left( \omega^\top \Gamma^\omega_\phi(s,a) - \expp\left[\sum_{t\geq 0}\gamma^t\beta \log\frac{\mu(a_t|s_t)\exp\left(\frac{1}{\beta}\calA^\omega(s_t,a_t)\right)}{\mu(a_t|s_t)} \Bigg| s_0=s,a_0=a,\pi^\textnormal{reg}_{r_\omega}\right]\right)\nonumber\\
    & = \sum_{s,a}\rho(s)\left(\pi^\dagger(s|a)-\pi^\textnormal{reg}_{r_\omega}(a|s)\right)\left( \omega^\top \Gamma^\omega_\phi(s,a) -\expp\left[\sum_{t\geq 0}\gamma^t \calA^\omega(s_t,a_t)\Big| s_0=s,a_0=a,\pi^\textnormal{reg}_{r_\omega}\right] \right)\nonumber\\
        & = \omega^\top\Gamma^\omega_\phi\left(\pi^\dagger||\pi^\textnormal{reg}_{r_\omega}\right)\nonumber \\ & \quad\quad\quad - \sum_{s,a}\rho(s)\left(\pi^\dagger(s|a)-\pi^\textnormal{reg}_{r_\omega}(a|s)\right)\expp\left[\sum_{t\geq 0}\gamma^t \sum_{a}\pi^\textnormal{reg}_{r_\omega}(a|s_t)\calA^\omega(s_t,a)\Big| s_0=s,a_0=a,\pi^\textnormal{reg}_{r_\omega}\right]\nonumber\\
    & = \omega^\top\Gamma^\omega_\phi\left(\pi^\dagger||\pi^\textnormal{reg}_{r_\omega}\right)~,\label{eq:rlhf-reg-advantage-reg}
\end{align}
where the  \eqref{eq:rlhf-reg-advantage-reg} follows from the fact that
\begin{align*}
    \sum_a \pi(a|s)\calA^\pi(s,a) = \sum_a\pi(a|s)\calQ^\pi(s,a) - \sum_a\pi(a|s)\calQ^\pi(s,a) = 0~.
\end{align*}
\end{proof}

Now, let us return to the proof of our main result. First, note that Lemma \ref{lem:advantage_gradient} implies that the constraint can be written as
\begin{align*}
    D_\textnormal{KL}(\pi^\dagger ||\mu) -\frac{1}{\beta} \omega^\top\Gamma^\omega_\phi(\pi^\dagger||\pi^\textnormal{reg}_{r_\omega}) -\epsilon' \leq 0~.
\end{align*}
As in the proof of Lemma \ref{lem:advantage_gradient}, let us denote $\Gamma^{\pi^\textnormal{reg}_{r_\omega}}_\phi(s,a)$ by $\Gamma^\omega_\phi(s,a)$. Moreover, let us reuse some definitions from the proof of Theorem \ref{thm:warmup-rlhf-augment}. Recall that
    \begin{align*}
        X^\omega_{\cleandata} = \sum_{(\tau,\tau',o)\in \cleandata}\frac{-o\left(\overline{\phi}(\tau)-\overline{\phi}(\tau')\right)}{1+\exp\left(o\omega^\top\left(\overline{\phi}(\tau)-\overline{\phi}(\tau')\right)\right)}~,
    \end{align*}
    and
    \begin{align*}
        Y^\omega_{\cleandata} & = \nabla_\omega (X^\omega_{\cleandata} + 2\lambda\omega)  \\
            & = \sum_{(\tau,\tau',o)\in \cleandata} \frac{\exp(o\omega^\top\left(\overline{\phi}(\tau)-\overline{\phi}(\tau')\right)}{\left(1+\exp(o\omega^\top\left(\overline{\phi}(\tau)-\overline{\phi}(\tau')\right)\right)^2}\left(\overline{\phi}(\tau)-\overline{\phi}(\tau')\right)\left(\overline{\phi}(\tau)-\overline{\phi}(\tau')\right)^\top +2\lambda I~.
    \end{align*}
    Then, we can equivalently write the Lagrangian as
    \begin{align*}
        \calL(\omega,\alpha) = \left|\omega^\top \left(X^\omega_{\cleandata}+\lambda\omega\right)\right| + \alpha\left( D_\textnormal{KL}(\pi^\dagger ||\mu) -\frac{1}{\beta} \expp_{s\sim\rho,a\sim\pi^\dagger(\cdot|s)}\left[\calA^\omega(s,a)\right] -\epsilon'\right)
    \end{align*}

 Using this notation, the first-order condition then implies
    \begin{align*}
        \nabla_\omega\calL(\omega,\alpha) = e_\omega\cdot\left(X^\omega_{\cleandata}+Y^\omega_{\cleandata}\omega\right) -\frac{\alpha}{\beta}\Gamma^\omega_\phi\left(\pi^\dagger||\pi^\textnormal{reg}_{r_\omega}\right) = \mathbf{0},
    \end{align*}
    where
    \begin{align*}
        e_\omega=\textnormal{sgn}\left(\omega^\top \left(X^\omega_{\cleandata}+\lambda\omega\right)\right)
    \end{align*}
    denotes the sign of the quantity inside the brackets. In the above, we have used the fact that
    \begin{align*}
        \expp_{s\sim\rho,a\sim\pi^\dagger(\cdot|s)}\left[\calA^\omega(s,a)\right] = \omega^\top \Gamma^\omega_\phi\left(\pi^\dagger||\pi^\textnormal{reg}_{r_\omega}\right)~,
    \end{align*}
    from Lemma \ref{lem:advantage_gradient}.
    Assuming $\alpha \neq 0$, we can write the primal solution in terms of the fixed-point equation
    \begin{align}\label{eq:rlhf-reg-fixedpoint-omega}
        \omega = \left(Y^{\omega}_{\cleandata}\right)^{-1}\left(\frac{e_\omega\alpha}{\beta}\Gamma^{\omega}_\phi\left(\pi^\dagger||\pi^\textnormal{reg}_{r_{\omega}}\right) - X^\omega_{\cleandata}\right)~.
    \end{align}
    Note that, if $\Gamma^{\widehat{\omega}}_\phi\left(\pi^\dagger||\pi^\textnormal{reg}_{r_{\omega}}\right)=\boldsymbol{0}$, at the solution of the fixed-point \eqref{eq:rlhf-reg-fixedpoint-omega}, then the primal condition for this case reduces to the following solution, for any $\alpha$:
    \begin{align*}
        \widehat{\omega} = - \left(Y^{\widehat{\omega}}_{\cleandata}\right)^{-1}X^{\widehat{\omega}}_{\cleandata}~,
    \end{align*}
    in which case we obtain
    \begin{align}
        \widehat{n}_\textnormal{RLHF}=\left|(\widehat{\omega})^\top \nabla_\omega\ell^\omega_\textnormal{RLHF}(\cleandata)\right| & = \left|\left(X^{\widehat{\omega}}_{\cleandata}+\lambda\widehat{\omega}\right)^\top\left(Y^{\widehat{\omega}}_{\cleandata}\right)^{-1}X^{\widehat{\omega}}_{\cleandata}\right|\nonumber\\
            & = \abs{\left(X^{\widehat{\omega}}_{\cleandata}-\lambda\left(Y^{\widehat{\omega}}_{\cleandata}\right)^{-1}X^{\widehat{\omega}}_{\cleandata}\right)^\top\left(Y^{\widehat{\omega}}_{\cleandata}\right)^{-1}X^{\widehat{\omega}}_{\cleandata}}\nonumber\\
        & \leq \norm{X^{\widehat{\omega}}_{\cleandata}}^2\norm{\left(Y^{\widehat{\omega}}_{\cleandata}\right)^{-1}}\norm{I-\left(Y^{\widehat{\omega}}_{\cleandata}\right)^{-1}}\nonumber\\
            & \leq \left(\frac{2\overline{n}}{1-\gamma}\right)^2\frac{1}{\overline{n}\sigma_{\min}(\Sigma^\phi_{\cleandata})+2\lambda}\label{eq:rlhf-reg-alpha0-case-01}\\
        & \leq O\left(\frac{\overline{n}}{(1-\gamma)^2\sigma_{\min}\left(\Sigma^\phi_{\cleandata}\right)}\right)~,\label{eq:rlhf-reg-alpha0-case-02}
    \end{align}
    where \eqref{eq:rlhf-reg-alpha0-case-01} follows from \eqref{eq:rlhf-warmup-augm-Xeq02} and Lemma \ref{lem:spectrum_of_X_and_Y}. Having taken care of this case, let us now assume that the solution to the fixed point equation implies $\Gamma^{\omega}_\phi\left(\pi^\dagger||\pi^\textnormal{reg}_{r_{\widehat{\omega}}}\right)\neq\boldsymbol{0}$.
    Using Lemma \ref{lem:advantage_gradient}, complementary slackness corresponding to the KL constraint can be written as
    \begin{align*}
        {\omega}^\top \Gamma^{\omega}_\phi\left(\pi^\dagger||\pi^\textnormal{reg}_{r_{\omega}}\right) = \beta\left(D_\textnormal{KL}\left(\pi^\dagger||\mu\right)-\epsilon'\right)~.
    \end{align*}
    Using \eqref{eq:rlhf-reg-fixedpoint-omega} above and expanding, we obtain the following result:
    \begin{align*}
         \beta\left(D_\textnormal{KL}\left(\pi^\dagger||\mu\right)-\epsilon'\right) & = \left(\left(Y^{\omega}_{\cleandata}\right)^{-1}\left(\frac{e_\omega\alpha}{\beta}\Gamma^{\omega}_\phi\left(\pi^\dagger||\pi^\textnormal{reg}_{r_{\omega}}\right) - X^{\omega}_{\cleandata}\right)\right)^\top\Gamma^{\omega}_\phi\left(\pi^\dagger||\pi^\textnormal{reg}_{r_{\omega}}\right)\\
            & = \left(\frac{e_\omega\alpha}{\beta}\Gamma^{\omega}_\phi\left(\pi^\dagger||\pi^\textnormal{reg}_{r_{\omega}}\right) - \left(X^\omega_{\cleandata}\right)\right)^\top\left(\left(Y^{\omega}_{\cleandata}\right)^{-1}\right)^\top \Gamma^{\omega}_\phi\left(\pi^\dagger||\pi^\textnormal{reg}_{r_{\omega}}\right)\\
        & = \frac{e_\omega\alpha}{\beta}\Gamma^{\omega}_\phi\left(\pi^\dagger||\pi^\textnormal{reg}_{r_{\omega}}\right)^\top\left(Y^{\omega}_{\cleandata}\right)^{-1}\Gamma^{\omega}_\phi\left(\pi^\dagger||\pi^\textnormal{reg}_{r_{\omega}}\right) - \left(X^{\omega}_{\cleandata}\right)^\top \left(Y^{\omega}_{\cleandata}\right)^{-1}\Gamma^{\omega}_\phi\left(\pi^\dagger||\pi^\textnormal{reg}_{r_{\omega}}\right)~.
    \end{align*}
    This further implies that
    \begin{align*}
        \alpha = e_\omega\cdot\frac{\beta^2\left(D_\textnormal{KL}\left(\pi^\dagger||\mu\right)-\epsilon'\right) + \beta\left(X^\omega_{\cleandata}\right)^\top \left(Y^{\omega}_{\cleandata}\right)^{-1}\Gamma^{\widehat{\omega}}_\phi\left(\pi^\dagger||\pi^\textnormal{reg}_{r_{\widehat{\omega}}}\right)}{\Gamma^{\widehat{\omega}}_\phi\left(\pi^\dagger||\pi^\textnormal{reg}_{r_{\widehat{\omega}}}\right)^\top\left(Y^{\omega}_{\cleandata}\right)^{-1}\Gamma^{\widehat{\omega}}_\phi\left(\pi^\dagger||\pi^\textnormal{reg}_{r_{\widehat{\omega}}}\right)}~,
    \end{align*}
    where we have used the fact that $Y^\omega_{\cleandata}$ is positive definite from Lemma \ref{lem:spectrum_of_X_and_Y}, so the denominator is well-defined for any non-zero $\Gamma^{\widehat{\omega}}_\phi\left(\pi^\dagger||\pi^\textnormal{reg}_{r_{\widehat{\omega}}}\right)$. Putting things together we get
    \begin{align*}
        & {\widehat{\omega}} = \left(Y^{\omega}_{\cleandata}\right)^{-1}\cdot  \left( e_{\widehat{\omega}}\cdot\frac{\beta\left(D_\textnormal{KL}\left(\pi^\dagger||\mu\right)-\epsilon'\right) + \left(X^\omega_{\cleandata}\right)^\top \Gamma^{\widehat{\omega}}_{\phi,\cleandata}\left(\pi^\dagger||\pi^\textnormal{reg}_{r_{\widehat{\omega}}}\right)}{\Gamma^{\widehat{\omega}}_\phi\left(\pi^\dagger||\pi^\textnormal{reg}_{r_{\widehat{\omega}}}\right)^\top\Gamma^{\widehat{\omega}}_{\phi,\cleandata}\left(\pi^\dagger||\pi^\textnormal{reg}_{r_{\widehat{\omega}}}\right)}\Gamma^{\widehat{\omega}}_\phi\left(\pi^\dagger||\pi^\textnormal{reg}_{r_{\widehat{\omega}}}\right) - X^\omega_{\cleandata}\right)~,
    \end{align*}
    where
    \begin{align*}
        \Gamma^{\widehat{\omega}}_{\phi,\cleandata}\left(\pi^\dagger||\pi^\textnormal{reg}_{r_{\widehat{\omega}}}\right) = \left(Y^{\omega}_{\cleandata}\right)^{-1}\Gamma^{\widehat{\omega}}_\phi\left(\pi^\dagger||\pi^\textnormal{reg}_{r_{\widehat{\omega}}}\right)~.
    \end{align*}
    Note that, as long as we choose $\gamma$ so that 
    $$1-\gamma \leq \frac{2\norm{\widehat{\omega}}}{\xi_{\max}+1}~,
    $$
    where $\widehat{\omega}$ is the solution to the above fixed-point equation, then Lemma \ref{lem:bounded_feature_norm} guarantees that the dataset construction formula satisfies the feature boundedness condition.
    
    We will now derive upper bounds on the norm of the parameter $\widehat{\omega}$ as given by the fixed-point equation above -- we will use this bound later for the final result. Observe that
    \begin{align}
        & \norm{{\widehat{\omega}}} = \nonumber\\ & \norm{\left(Y^{\widehat{\omega}}_D\right)^{-1}  \left( e_{\widehat{\omega}}\cdot\frac{\beta\left(D_\textnormal{KL}\left(\pi^\dagger||\mu\right)-\epsilon'\right) + \left(X^{\widehat{\omega}}_{\cleandata}\right)^\top \Gamma^{\widehat{\omega}}_{\phi,\cleandata}\left(\pi^\dagger||\pi^\textnormal{reg}_{r_{\widehat{\omega}}}\right)}{\Gamma^{\widehat{\omega}}_\phi\left(\pi^\dagger||\pi^\textnormal{reg}_{r_{\widehat{\omega}}}\right)^\top\Gamma^{\widehat{\omega}}_{\phi,\cleandata}\left(\pi^\dagger||\pi^\textnormal{reg}_{r_{\widehat{\omega}}}\right)}\Gamma^{\widehat{\omega}}_\phi\left(\pi^\dagger||\pi^\textnormal{reg}_{r_{\widehat{\omega}}}\right) - X^\omega_{\cleandata}\right)}\nonumber\\
            & \leq \norm{\left(Y^{\widehat{\omega}}_{\cleandata}\right)^{-1}}\norm{e_{\widehat{\omega}}\cdot\frac{\beta\left(D_\textnormal{KL}\left(\pi^\dagger||\mu\right)-\epsilon'\right) + \left(X^{\widehat{\omega}}_{\cleandata}\right)^\top \left(Y^{\widehat{\omega}}_{\cleandata}\right)^{-1}\Gamma^{\widehat{\omega}}_\phi\left(\pi^\dagger||\pi^\textnormal{reg}_{r_{\widehat{\omega}}}\right)}{\Gamma^{\widehat{\omega}}_\phi\left(\pi^\dagger||\pi^\textnormal{reg}_{r_{\widehat{\omega}}}\right)^\top\left(Y^{\widehat{\omega}}_{\cleandata}\right)^{-1}\Gamma^{\widehat{\omega}}_\phi\left(\pi^\dagger||\pi^\textnormal{reg}_{r_{\widehat{\omega}}}\right)}\Gamma^{\widehat{\omega}}_\phi\left(\pi^\dagger||\pi^\textnormal{reg}_{r_{\widehat{\omega}}}\right) - X^{\widehat{\omega}}_{\cleandata}}\label{eq:rlhf-reg-eq001}\\
        & \leq \norm{\left(Y^{\widehat{\omega}}_{\cleandata}\right)^{-1}}\nonumber\\ & \cdot\left( \frac{\left|\beta\left(D_\textnormal{KL}\left(\pi^\dagger||\mu\right)-\epsilon'\right)\right| + \norm{X^{\widehat{\omega}}_{\cleandata}}\norm{\left(Y^{\widehat{\omega}}_{\cleandata}\right)^{-1}}\norm{\Gamma^{\widehat{\omega}}_\phi\left(\pi^\dagger||\pi^\textnormal{reg}_{r_{\widehat{\omega}}}\right)}}{\norm{\Gamma^{\widehat{\omega}}_\phi\left(\pi^\dagger||\pi^\textnormal{reg}_{r_{\widehat{\omega}}}\right)}^2\sigma_{\min}\left(\left(Y^{\widehat{\omega}}_{\cleandata}\right)^{-1}\right)}\norm{\Gamma^{\widehat{\omega}}_\phi\left(\pi^\dagger||\pi^\textnormal{reg}_{r_{\widehat{\omega}}}\right)} + \norm{X^{\widehat{\omega}}_{\cleandata}}\right)\label{eq:rlhf-reg-eq002}\\
            & = \frac{\left|\beta\left(D_\textnormal{KL}\left(\pi^\dagger||\mu\right)-\epsilon'\right)\right|\norm{\left(Y^{\widehat{\omega}}_{\cleandata}\right)^{-1}}}{\norm{\Gamma^{\widehat{\omega}}_\phi\left(\pi^\dagger||\pi^\textnormal{reg}_{r_{\widehat{\omega}}}\right)}\sigma_{\min}\left(\left(Y^{\widehat{\omega}}_{\cleandata}\right)^{-1}\right)} + \frac{\norm{X^{\widehat{\omega}}_{\cleandata}}\norm{(Y^{\widehat{\omega}}_{\cleandata})^{-1}}^2}{\sigma_{\min}\left(\left(Y^{\widehat{\omega}}_{\cleandata}\right)^{-1}\right)} + \norm{X^{\widehat{\omega}}_{\cleandata}}\norm{(Y^{\widehat{\omega}}_{\cleandata})^{-1}}\label{eq:rlhf-reg-eq003}\\
        & \leq \frac{\left|\beta\left(D_\textnormal{KL}\left(\pi^\dagger||\mu\right)-\epsilon'\right)\right|\norm{\left(Y^{\widehat{\omega}}_{\cleandata}\right)^{-1}}}{\norm{\Gamma^{\widehat{\omega}}_\phi\left(\pi^\dagger||\pi^\textnormal{reg}_{r_{\widehat{\omega}}}\right)}\sigma_{\min}\left(\left(Y^{\widehat{\omega}}_{\cleandata}\right)^{-1}\right)} + \frac{2\overline{n}(\overline{n}+2(1-\gamma)\lambda)^2}{(1-\gamma)^2(\overline{n}\widetilde{C}^\phi_{\cleandata}C^\phi_{\cleandata}+2\lambda)^2} + \frac{2\overline{n}}{(1-\gamma)(\overline{n}\widetilde{C}^\phi_{\cleandata}C^\phi_{\cleandata}+2\lambda)}\label{eq:rlhf-reg-eq004}\\
            & \leq \frac{\left|\beta\left(D_\textnormal{KL}\left(\pi^\dagger||\mu\right)-\epsilon'\right)\right|}{\norm{\Gamma^{\widehat{\omega}}_\phi\left(\pi^\dagger||\pi^\textnormal{reg}_{r_{\widehat{\omega}}}\right)}}O\left( \frac{\overline{n}+2(1-\gamma)\lambda}{(1-\gamma)\left(\overline{n}\sigma_{\min}(\Sigma^\phi_{\cleandata})+2\lambda\right)}\right) + O\left(\frac{\overline{n}}{\left((1-\gamma)\sigma_{\min}(\Sigma^\phi_{\cleandata})\right)^2}\right)\label{eq:rlhf-reg-eq005}\\
        & \leq  O\left( \frac{\left|\beta\left(D_\textnormal{KL}\left(\pi^\dagger||\mu\right)-\epsilon'\right)\right|}{(1-\gamma)\sigma_{\min}(\Sigma^\phi_{\cleandata})\norm{\Gamma^{\widehat{\omega}}_\phi\left(\pi^\dagger||\pi^\textnormal{reg}_{r_{\widehat{\omega}}}\right)}} + \frac{\overline{n}}{\left((1-\gamma)\sigma_{\min}(\Sigma^\phi_{\cleandata})\right)^2}\right)~,\label{eq:rlhf-reg-eq006}
        % & \leq O\left(\frac{\beta\left|D_\textnormal{KL}\left(\pi^\dagger||\mu\right)-\epsilon'\right|}{(1-\gamma)^2\sigma_{\min}(\Sigma^\phi_{\overline{D}})^2}\right)\label{eq:rlhf-reg-eq009}
    \end{align}
    where \eqref{eq:rlhf-reg-eq001} follows from the Cauchy-Schwarz inequality; \eqref{eq:rlhf-reg-eq002} follows from the triangle inequality and Cauchy-Schwarz applied on the spectral norm; in \eqref{eq:rlhf-reg-eq003} we expand and cancel out equal terms; finally, for \eqref{eq:rlhf-reg-eq004} and \eqref{eq:rlhf-reg-eq005} we have used \eqref{eq:rlhf-warmup-augm-Xeq02}, and \eqref{eq:rlhf-warmup-augm-Ymatrix_lower} and \eqref{eq:rlhf-warmup-augm-Ymatrix_upper} from the proof of Lemma \ref{lem:spectrum_of_X_and_Y}.

    Now, in order to obtain upper bounds, we need to deal with the term in the denominator, which depends on the optimal solution $\widehat{\omega}$ itself. To address this, we will use a feasible solution to the problem. 
    
    Now, let $\widehat{\pi}^\dagger$ be a policy that is at most $\epsilon$-close to $\pi^\dagger$ and that the solution $\omega^\dagger$ to $\Phi\omega = \beta\log(\widehat{\pi}^\dagger-\mu)$ yields $\Gamma^\omega_\phi(\pi^\dagger||\pi^\textnormal{reg}_{r_{\omega^\dagger}})\neq 0$. Theorem \ref{thm:rlhf_feasibility} shows that $\omega^\dagger$ makes $\widehat{\pi}^\dagger$ optimal for the regularized problem, and thus, $\omega^\dagger$ is feasible. Using this, and the above derivations, we finally obtain:
    \begin{align*}
        \widehat{n}_\textnormal{RLHF} (\cleandata) & = \left|(\widehat{\omega})^\top\nabla_\omega\ell^{\widehat{\omega}}_\textnormal{RLHF}(\cleandata)\right| \leq \norm{\widehat{\omega}}^2 + \norm{\widehat{\omega}}\norm{X^{\widehat{\omega}}_{\cleandata}}\\
            & \quad \leq 
            % \min \left\{ O\left(\frac{\overline{n}}{(1-\gamma)^2\sigma_{\min}\left(\Sigma^\phi_{\cleandata}\right)}\right), 
            O\left(\frac{\beta^2\left(D_\textnormal{KL}\left(\pi^\dagger||\mu\right)-\epsilon\right)^2}{(1-\gamma)^2\sigma^2_{\min}(\Sigma^\phi_{\cleandata})\norm{\Gamma^{\omega^\dagger}_\phi\left(\pi^\dagger||\pi^\textnormal{reg}_{r_{\omega^\dagger}}\right)}^2} + \frac{\overline{n}^2}{(1-\gamma)^4\sigma^4_{\min}(\Sigma^\phi_{\cleandata})}\right)~,
    \end{align*}
    where the first inequality uses Cauchy-Schwarz and the last inequality uses \eqref{eq:rlhf-warmup-augm-Xeq02} and \eqref{eq:rlhf-reg-eq006}.
\end{proof}

\section{Proofs of Section \ref{sec:dpo-attack}}\label{appendix:dpo-attacks}

In this section, we provide the full proofs of the results from Section \ref{sec:dpo-attack}. 

\subsection{DPO with Non-empty Existing Data}

In this section, we provide the full proof of Theorem \ref{thm:dpo-augment}.

\dpoaugment*

\begin{proof}
    Lemma \ref{lem:surrogate_connection_dpo} implies that, for any $\epsilon'\leq \epsilon/2$, any feasible solution for Problem \ref{op:dpo-aug} is feasible for Problem \ref{op:attack-dpo}. Thus, we focus on Problem \ref{op:dpo-aug}. 
    
    First, note that, since $\mu$ is loglinear, Lemma \ref{lem:loglinear_dpo_loss} implies that Problem \eqref{op:dpo-aug} can be written as
    \begin{align*}
         \min_D\; & |D| \\
        \text{s.t}\; & \widetilde{\theta} = \arg\min_{\theta} \sum_{(s,a,a',o)\in \cleandata\cup D}\log\left( 1 + \exp\left( -o\cdot \beta (\theta-\theta_\mu)^\top\left( \psi(s,a)-\psi(s,a')\right) \right)\right) + \frac{\lambda}{2}\norm{\theta-\theta_\mu}^2\\
        & \norm{\widetilde{\theta}-\theta^\dagger}^2 \leq \epsilon'~.
    \end{align*}
    Given fixed $\widetilde{\theta}\neq\boldsymbol{0}$, Lemma \ref{lem:attack_subproblem_dpo} implies that the solution to the subproblem
    \begin{align*}
        \min_D\; & |D| \\
        \text{s.t}\; & \widetilde{\theta} = \arg\min_{\theta} \sum_{(s,a,a',o)\in \cleandata\cup D}\log\left( 1 + \exp\left( -o\cdot \beta (\theta-\theta_\mu)^\top\left( \psi(s,a)-\psi(s,a')\right) \right)\right) + \frac{\lambda}{2}\norm{\theta-\theta_\mu}^2
    \end{align*}
    is the set of 
    \begin{align*}
        2\ceil{\frac{\abs{(\nabla_\theta\ell^{{\theta^\dagger}}_\textnormal{DPO}(\cleandata))^\top({\theta^\dagger}-\theta_\mu)}}{2\xi_{\max}}}
    \end{align*}
    identical samples satisfying
    \begin{align*}
        \beta\br{\theta^\dagger-\theta_\mu}^\top\br{\psi (s,a)-\psi(s,a')} = o \cdot \xi^{-1}\br{\frac{(\nabla_\theta\ell^{{\theta^\dagger}}_\textnormal{DPO}(\cleandata))^\top\br{{\theta^\dagger}-\theta_\mu}}{2\ceil{\frac{\abs{(\nabla_\theta\ell^{{\theta^\dagger}}_\textnormal{DPO}(\cleandata))^\top({\theta^\dagger}-\theta_\mu)}}{2\xi_{\max}}}}}~,
    \end{align*}
    with $o=1$ for half the samples and $o=-1$ for the remaining samples.
    Thus, the attack problem can be equivalently written in terms of $\theta$ as:
    \begin{align*}
        \min\;& \left| \left(\nabla_\theta\ell^\theta_\textnormal{DPO}(\cleandata)\right)^\top\left(\theta-\theta_\mu\right)\right| \\
        \text{s.t}\; & \norm{\theta-\theta^\dagger}^2\leq \epsilon'~.
    \end{align*}
    The Lagrangian of the above can be written as
    \begin{align*}
        \calL(\theta,\alpha) = \left| \left(\nabla_\theta\ell^\theta_\textnormal{DPO}(\cleandata)\right)^\top\left(\theta-\theta_\mu\right)\right| + \alpha\left(\norm{\theta-\theta^\dagger}^2- \epsilon''\right)~,
    \end{align*}
    and the first-order condition is
    \begin{align*}
        \nabla_\theta\calL(\theta,\alpha) = e_\theta\cdot\left(\nabla^2_\theta\ell^\theta_\textnormal{DPO}(\cleandata)\left(\theta -\theta_\mu\right) + \nabla_\theta\ell^\theta_\textnormal{DPO}(\cleandata)\right) + 2\alpha\left(\theta-\theta^\dagger\right) = \mathbf{0}~,
    \end{align*}
    where 
    \begin{align*}
        e_\theta = \textnormal{sgn}\left(\left(\nabla_\theta\ell^\theta_\textnormal{DPO}(\cleandata)\right)^\top\left(\theta-\theta_\mu\right)\right) 
    \end{align*}
    denotes the sign of the quantity inside the brackets. This yields
    \begin{align*}
        \theta^* = \left(\nabla^2_\theta\ell^{\theta^*}_\textnormal{DPO}(\cleandata) + 2\alpha I\right)^{-1}\left( e_{\theta^*}\nabla^2_\theta\ell^{\theta^*}_\textnormal{DPO}(\cleandata) \theta_\mu -e_{\theta^*}\nabla_\theta\ell^{\theta^*}_\textnormal{DPO}(\cleandata) + 2\alpha\theta^\dagger\right)~,
    \end{align*}
    and
    \begin{align*}
        \theta^*-\theta_\mu = \left(\nabla^2_\theta\ell^{\theta^*}_\textnormal{DPO}(\cleandata)\right)^{-1}\left(2e_{\theta^*}\alpha\left(\theta^*-\theta^\dagger\right) - \nabla_\theta\ell^{\theta^*}_\textnormal{DPO}(\cleandata)\right)~.
    \end{align*}
    For non-zero $\alpha$, we use complementary slackness to obtain
    \begin{align*}
        \norm{\left(\nabla^2_\theta\ell^{\theta^*}_\textnormal{DPO}(\cleandata) + 2\alpha I\right)^{-1}\left( e_{\theta^*}\nabla^2_\theta\ell^{\theta^*}_\textnormal{DPO}(\cleandata) \theta_\mu -e_{\theta^*}\nabla_\theta\ell^{\theta^*}_\textnormal{DPO}(\cleandata) + 2\alpha\theta^\dagger\right) -\theta^\dagger}^2 = \epsilon'~.
    \end{align*}
    Before we proceed any further, note that, since the Hessian of the loss is symmetric positive definite, then
    \begin{align*}
        \nabla^2_\theta\ell^{\theta^*}_\textnormal{DPO}(\cleandata) = U \Sigma U~,
    \end{align*}
    where $U$ is an orthonormal matrix and $\Sigma$ is the diagonal matrix with the eigenvalues of the Hessian. Using the Woodbury inversion formula, we have
    \begin{align*}
        \left(\nabla^2_\theta\ell^{\theta^*}_\textnormal{DPO}(\cleandata) + 2\alpha I\right)^{-1} & = \frac{1}{2\alpha}I - \frac{1}{4\alpha^2}U\left(\Sigma^{-1}+\frac{1}{2\alpha}U^\top UI\right)^{-1}U\\
            & = \frac{1}{2\alpha}I - \frac{1}{4\alpha^2}U\left(\Sigma^{-1}+\frac{1}{2\alpha}I\right)^{-1}U\\
        & = \frac{1}{2\alpha} \left(I - U\Sigma_\alpha U\right)~,
    \end{align*}
    where $\Sigma_\alpha$ is a diagonal matrix with entries $\sigma_i/(\sigma_i+2\alpha)$, where $\sigma_i$ are the entries of $\Sigma$, for every $1\leq i\leq d$. Let $M_\alpha = I-U\Sigma_\alpha U$, and 
    \begin{align*}
        g = e_{\theta^*}\nabla^2_\theta\ell^{\theta^*}_\textnormal{DPO}(\cleandata) \theta_\mu -e_{\theta^*}\nabla_\theta\ell^{\theta^*}_\textnormal{DPO}(\cleandata)~.
    \end{align*}
    Then, we have
    \begin{align*}
        4\alpha^2\epsilon' & =  \norm{\left(I-U\Sigma_\alpha U\right)\left( e_{\theta^*}\nabla^2_\theta\ell^{\theta^*}_\textnormal{DPO}(\cleandata) \theta_\mu -e_{\theta^*}\nabla_\theta\ell^{\theta^*}_\textnormal{DPO}(\cleandata) + 2\alpha\theta^\dagger\right) -2\alpha\theta^\dagger}^2 \\
            & = \norm{M_\alpha g + 2\alpha\left(M_\alpha\theta^\dagger -\theta^\dagger\right)}^2\\
        & = \norm{M_\alpha g}^2 + 4\alpha\left\langle M_\alpha g, M_\alpha \theta^\dagger-\theta^\dagger\right\rangle + 4\alpha^2\norm{M_\alpha\theta^\dagger-\theta^\dagger}^2~.
    \end{align*}
    We can write the above as a quadratic equation of $\alpha$ as
    \begin{align*}
        4\left(\norm{M_\alpha\theta^\dagger-\theta^\dagger}^2-\epsilon'\right)\alpha^2 + 4\left\langle M_\alpha g, M_\alpha \theta^\dagger-\theta^\dagger\right\rangle\alpha +  \norm{M_\alpha g}^2  = 0~.
    \end{align*}
    Note that we are treating as constants some terms that involve $\alpha$ non-linearly. This implies fixed-point solutions in terms of $\alpha$. We follow this route since we are only interested in the final bounds, which will be independent of such components. 
    The discriminant of this equation is
    \begin{align*}
        \Delta & = 16\norm{M_\alpha g}^2\norm{M_\alpha \theta^\dagger-\theta^\dagger}^2 - 16\left(\norm{M_\alpha\theta^\dagger-\theta^\dagger}^2-\epsilon'\right)\norm{M_\alpha g}^2 = 16\norm{M_\alpha g}^2\epsilon'~.
    \end{align*}
    This implies the following fixed-point equations:
    \begin{align*}
        \alpha_{1,2} = \frac{-\left\langle M_\alpha g, M_\alpha \theta^\dagger-\theta^\dagger\right\rangle \pm \norm{M_\alpha g}\sqrt{\epsilon'}}{2\left(\norm{M_\alpha\theta^\dagger-\theta^\dagger}^2-\epsilon'\right)}~.
    \end{align*}
    Note that we have
    \begin{align*}
        \frac{-\left\langle M_\alpha g, M_\alpha \theta^\dagger-\theta^\dagger\right\rangle - \norm{M_\alpha g}\sqrt{\epsilon'}}{2\left(\norm{M_\alpha\theta^\dagger-\theta^\dagger}^2-\epsilon'\right)} & \geq \frac{-\norm{M_\alpha g}\left(\norm{M_\alpha\theta^\dagger-\theta^\dagger}+\sqrt{\epsilon'}\right)}{2\left(\norm{M_\alpha\theta^\dagger-\theta^\dagger}^2-\epsilon'\right)} = \frac{\norm{M_\alpha g}}{2\left(\sqrt{\epsilon'}-\norm{M_\alpha\theta^\dagger-\theta^\dagger}\right)}
    \end{align*}
    which, since the maximum eigenvalue of $M_\alpha$ is bounded by $1$, is positive whenever
    \begin{align*}
        \norm{M_\alpha\theta^\dagger-\theta^\dagger} \leq 2\norm{\theta^\dagger}\leq \sqrt{\epsilon'}~.
    \end{align*}
    This condition would guarantee a positive root of $\alpha$, which is required as $\alpha$ is a Lagrange multiplier. If there is none, we do not take into consideration complementary slackness and solve for $\alpha=0$. For now, let us assume that there exists a positive root and consider the $\alpha=0$ case later. Since the final bounds do not depend on which of the solutions we pick, let us pick the one with $+ \norm{M_\alpha g}\sqrt{\epsilon'}$ without loss of generality.
    
    Plugging it into the first-order solution, we get
    \begin{align}\label{eq:fixed-point-dpo-dagger}
        \theta^* = \frac{1}{2\alpha}M_\alpha\left(g + 2\alpha\theta^\dagger\right) = M_\alpha\theta^\dagger + \frac{1}{2\alpha}M_\alpha g = M_\alpha \theta^\dagger + \frac{\left(\norm{M_\alpha\theta^\dagger-\theta^\dagger}^2-\epsilon'\right)}{\norm{M_\alpha g}\sqrt{\epsilon'} -\left\langle M_\alpha g, M_\alpha \theta^\dagger-\theta^\dagger\right\rangle}M_\alpha g~.
    \end{align}
    This implies that
    \begin{align*}
        \norm{\theta^*} & \leq \norm{M_\alpha\theta^\dagger} + \left|\frac{\norm{M_\alpha\theta^\dagger-\theta^\dagger}^2-\epsilon'}{\sqrt{\epsilon'} -\left\langle M_\alpha g, M_\alpha \theta^\dagger-\theta^\dagger\right\rangle/\norm{M_\alpha g}}\right|\\
            &\leq \norm{M_\alpha\theta^\dagger} + \frac{\left|\norm{M_\alpha\theta^\dagger-\theta^\dagger}^2-\epsilon'\right|}{\left|\sqrt{\epsilon'} -\norm{M_\alpha\theta^\dagger-\theta^\dagger}\right|}\\
        & = \norm{M_\alpha\theta^\dagger} + \norm{M_\alpha\theta^\dagger-\theta^\dagger} + \sqrt{\epsilon'}\\
            & \leq \norm{\theta^\dagger} + 2\norm{\theta^\dagger} + \sqrt{\epsilon'}\\
        & = 3\norm{\theta^\dagger} + \sqrt{\epsilon'}~,
    \end{align*}
    where we have used the triangle inequality, Cauchy Schwarz and the fact that the maximum eigenvalue of $M_\alpha$ is upper-bounded by $1$.
    Now we provide upper bounds on the gradient with respect to the pre-existing data for $\theta^*$, similar to the proof of Theorem \ref{thm:warmup-rlhf-augment}. Let $\overline{\theta}$ be the optimal parameter with respect to the loss on the pre-existing dataset. 
    Note that we have
    \begin{align*}
        \norm{\nabla_\theta\ell^{\theta^*}_\textnormal{DPO}(\cleandata)} & \leq (\overline{n}\beta + \lambda)\norm{\theta^*-\overline{\theta}}\\
            & \leq (\overline{n}\beta + \lambda)\left(\norm{\theta^*-\proj_{\theta:\norm{\theta-\theta^\dagger}\leq\epsilon'}\left(\overline{\theta} \right)} + \norm{\proj_{\theta:\norm{\theta-\theta^\dagger}\leq\epsilon'}\left(\overline{\theta} \right)-\overline{\theta}}\right)\\
        & \leq  (\overline{n}\beta + \lambda)\left(\epsilon' + \norm{\proj_{\theta:\norm{\theta-\theta^\dagger}\leq\epsilon'}\left(\overline{\theta} \right)-\overline{\theta}}\right)~,
    \end{align*}
    where the first inequality follows from Lemma \ref{lem:logistic_regression_properties}; the second inequality uses triangle inequality and the third inequality uses the fact that any two points in the $\epsilon'$-ball are no farther than $\epsilon'$ away from each-other.
    To compute the projection onto the $\epsilon'$-ball, we solve the following problem:
    \begin{align*}
        \min_\theta \norm{\theta-\overline{\theta}}^2,\;\;\textnormal{such that}\;\; \norm{\theta-\theta^\dagger}^2\leq(\epsilon')^2~.
    \end{align*}
    The first-order of the problem with respect to dual variable $\alpha$ of the Lagrangian 
    \begin{align*}
        \calL(\theta,\alpha) =  \norm{\theta-\overline{\theta}}^2 + \alpha\left(\norm{\theta-\theta^\dagger}^2-(\epsilon')^2\right)
    \end{align*}
    gives us
    \begin{align*}
        \theta = \frac{1}{1+\alpha}\left(\alpha\theta^\dagger+\overline{\theta}\right)~.
    \end{align*}
    Complementary slackness implies that
    \begin{align*}
        (1+\alpha)(\epsilon')^2 = \norm{\overline{\theta}-\theta^\dagger}^2
    \end{align*}
    which in turn implies that we have
    \begin{align*}
        \widetilde{\theta}=\theta^\dagger + \frac{(\epsilon')^2}{\norm{\overline{\theta}-\theta^\dagger}^2}\left(\overline{\theta}-\theta^\dagger\right)~.
    \end{align*}
    
    Thus, we have 
    \begin{align*}
          \widehat{n}_\textnormal{DPO}& = \left| \left(\nabla_\theta\ell^{\widetilde{\theta}}_\textnormal{DPO}(\cleandata)\right)^\top\left(\widetilde{\theta}-\theta_\mu\right)\right| \\ & \leq \norm{\nabla_\theta\ell^{\widetilde{\theta}}_\textnormal{DPO}(\cleandata)}\left( 3\norm{\theta^\dagger} + \norm{\theta_\mu} + \sqrt{\epsilon'}\right)\\
            & = \left(\overline{n}\beta+\lambda\right)\norm{\theta^\dagger + \frac{(\epsilon')^2}{\norm{\overline{\theta}-\theta^\dagger}^2}\left(\overline{\theta}-\theta^\dagger\right)-\overline{\theta}}\left( 3\norm{\theta^\dagger} + \norm{\theta_\mu} + \sqrt{\epsilon'}\right)\\
        & \leq (\overline{n}\beta+\lambda)\frac{\abs{\norm{\overline{\theta}-\theta^\dagger}^2-(\epsilon')^2}}{\norm{\theta^\dagger-\overline{\theta}}}\left( 3\norm{\theta^\dagger} + \norm{\theta_\mu} + \sqrt{\epsilon'}\right)~.
    \end{align*}

    If $\alpha=0$, then the first-order condition yields
    \begin{align*}
        \widetilde{\theta} -\theta_\mu = -\left(\nabla^2_\theta\ell^{\widetilde{\theta}}_\textnormal{DPO}(\cleandata)\right)^{-1}\nabla_\theta\ell^{\widetilde{\theta}}_\textnormal{DPO}(\cleandata)~,
    \end{align*}
    which implies 
    \begin{align*}
        \widehat{n}_\textnormal{DPO} & = \left|\nabla_\theta\ell^{\widetilde{\theta}}_\textnormal{DPO}(\cleandata)^\top  \left(\nabla^2_\theta\ell^{\widetilde{\theta}}_\textnormal{DPO}(\cleandata)\right)^{-1}\nabla_\theta\ell^{\widetilde{\theta}}_\textnormal{DPO}(\cleandata)\right| \\
            & \leq \frac{\left(\overline{n}\beta +\lambda\right)}{\sigma_\textnormal{min}(\Sigma^\psi_{\cleandata})}\cdot\frac{\abs{\norm{\overline{\theta}-\theta^\dagger}^2-(\epsilon')^2}}{\norm{\theta^\dagger-\overline{\theta}}}~,
    \end{align*}
    using Lemma \ref{lem:logistic_regression_properties} and the projection derivations above.
    
    For the lower bound, note that, in the best case scenario for the attacker, $\overline{D}$ is a subset of an optimal solution $\widehat{D}$ for Problem \ref{op:attack-dpo} in the case when $\overline{D}=\emptyset$. Thus, the attack sample size in this case is the lower bound of Theorem \ref{thm:dpo-gen-dagger} without the size $\overline{n}$ of $\overline{D}$.
\end{proof}

\subsection{DPO with Empty Existing Data}\label{sec:dpo_empty_data}

In this section, we provide additional upper bounds on the sample complexity for the DPO setting when $\cleandata=\emptyset$. Our aim is to obtain bounds that are tighter than the ones obtained by directly instantiating the bounds of Theorem \ref{thm:dpo-augment}. Moreover, we also provide lower bounds for this setting and use them for the general lower bounds of Theorem \ref{thm:dpo-augment}.

\dpogenerationdagger

\begin{proof}
    First, note that, since $\mu$ is loglinear, Lemma \ref{lem:loglinear_dpo_loss} implies that Problem \eqref{op:dpo-aug} can be written as
    \begin{align*}
         \min_D\; & |D| \\
        \text{s.t}\; & \widetilde{\theta} = \arg\min_{\theta} \sum_{(s,a,a',o)\in D}\log\left( 1 + \exp\left( -o\cdot \beta (\theta-\theta_\mu)^\top\left( \psi(s,a)-\psi(s,a')\right) \right)\right) + \frac{\lambda}{2}\norm{\theta-\theta_\mu}^2\\
        & \norm{\widetilde{\theta}-\theta^\dagger}^2 \leq \epsilon'~.
    \end{align*}
    Now, given $\widetilde{\theta}\neq\boldsymbol{0}$, Lemma \ref{lem:attack_subproblem_dpo} implies that the solution to the problem
    \begin{align*}
        \min_D\; & |D| \\
        \text{s.t}\; & \widetilde{\theta} = \arg\min_{\theta} \sum_{(s,a,a',o)\in D}\log\left( 1 + \exp\left( -o\cdot \beta (\theta-\theta_\mu)^\top\left( \psi(s,a)-\psi(s,a')\right) \right)\right) + \frac{\lambda}{2}\norm{\theta-\theta_\mu}^2
    \end{align*}
    is the set of 
    \begin{align*}
        2\ceil{\frac{\lambda\norm{\widetilde{\theta}-\theta_\mu}^2}{2\xi_{\max}}}
    \end{align*}
    identical samples satisfying
    \begin{align*}
        \beta\br{{\widetilde{\theta}}-\theta_\mu}^\top\br{\psi (s,a)-\psi(s,a')} = o \cdot \xi^{-1}\br{\frac{\lambda\norm{\widetilde{\theta}-\theta_\mu}^2}{2\ceil{\frac{\lambda\norm{\widetilde{\theta}-\theta_\mu}^2}{2\xi_{\max}}}}}~,
    \end{align*}
    with $o=1$ for half the samples and $o=-1$ for the remaining samples. 
        
    Next, for the upper bound, we will consider the surrogate problem. Using Lemma \ref{lem:surrogate_connection_dpo}, for any $\epsilon'\leq\epsilon/(2\sqrt{d'})$, any feasible solution for Problem \ref{op:dpo-aug} is feasible for Problem \ref{op:attack-dpo}. Thus, we focus on the former. We can rewrite the problem directly in terms of a variable $\theta$:
    \begin{align*}
        \min_\theta\; \norm{{\theta}-\theta_\mu}^2 \;\; \textnormal{such that} \;\; \norm{\theta-\theta^\dagger}^2 \leq \epsilon'~.
    \end{align*}
    The Lagrangian of the above can be written as
    \begin{align*}
        \calL(\theta,\alpha) = \norm{{\theta}-\theta_\mu}^2 + \alpha\left(\norm{\theta-\theta^\dagger}^2 - \epsilon'\right)~.
    \end{align*}
    The first-order condition can be written as
    \begin{align*}
        \nabla_\theta\calL(\theta,\alpha) & = \left(\theta-\theta_\mu\right) + \alpha\left(\theta-\theta^\dagger\right) = \boldsymbol{0}~,
    \end{align*}
    which implies that 
    \begin{align*}
        \widetilde{\theta} = \frac{1}{1+\alpha}\left(\alpha\theta^\dagger+\theta_\mu\right)~.
    \end{align*}
    Complementary slackness implies
    \begin{align*}
        \norm{\frac{1}{1+\alpha}\left(\alpha\theta^\dagger+\theta_\mu\right)-\theta^\dagger} = \sqrt{\epsilon'}~.
    \end{align*}
    Equivalently,
    \begin{align*}
        \norm{\theta^\dagger-\theta_\mu}=\abs{1+\alpha}\sqrt{\epsilon'}~,
    \end{align*}
    which, because all three terms are non-negative, implies that
    \begin{align*}
        \alpha = \frac{\norm{\theta^\dagger-\theta_\mu}}{\sqrt{\epsilon'}}-1~,
    \end{align*}
    Plugging this into the first-order condition and subtracting both sides by $\theta_\mu$, we get
    \begin{align*}
        \widetilde{\theta}-\theta_\mu & = \frac{\sqrt{\epsilon'}}{\norm{\theta^\dagger-\theta_\mu}}\left(\left(\frac{\norm{\theta^\dagger-\theta_\mu}}{\sqrt{\epsilon'}}-1\right)\theta^\dagger+\theta_\mu\right) -\theta_\mu = \theta^\dagger-\theta_\mu - \frac{\sqrt{\epsilon'}}{\norm{\theta^\dagger-\theta_\mu}}\left(\theta^\dagger-\theta_\mu\right) \\ & = \left(\frac{\sqrt{\epsilon'}}{\norm{\theta^\dagger-\theta_\mu}}-1\right)\left(\theta^\dagger-\theta_\mu\right)
    \end{align*}
    Using the above and some algebraic manipulations, we can write
    \begin{align*}
        \norm{\widetilde{\theta}-\theta_\mu}^2 & = \norm{ \left(\frac{\sqrt{\epsilon'}}{\norm{\theta^\dagger-\theta_\mu}}-1\right)\left(\theta^\dagger-\theta_\mu\right)}^2 = \norm{\norm{\theta^\dagger-\theta_\mu}-\sqrt{\epsilon'}}^2~.
    \end{align*}
    The desired bounds follow as an immediate consequence of the above. 
\end{proof}

\section{Proofs of Section \ref{sec:comparison-of-paradigms}}\label{appendix:comparison-generation}

In this section, we provide the proof of the following result. 

\comparisonwithdata*

\begin{proof}
Note that we have
\begin{align*}
    \widehat{n}_\textnormal{DPO} & \geq 2\ceil{\frac{\lambda}{2\xi_{\max}}\left(\norm{\theta^\dagger-\theta_\mu}-\sqrt{\epsilon'}\right)^2} -\overline{n}
    \\ 
    & \geq \frac{\lambda}{\xi_{\max}}\left(\norm{\theta^\dagger-\theta_\mu}-\sqrt{\epsilon'}\right)^2 -\overline{n} \\
    & =  \left(\frac{\lambda}{\xi_{\max}}\left(\norm{\theta^\dagger-\theta_\mu}-\sqrt{\epsilon'}\right)^2 -\overline{n}\right)\cdot  \ceil{\frac{1}{\xi_{\max}}\left(\lambda\norm{\omega^\dagger}^2 + \norm{\omega^\dagger}\cdot\frac{2\overline{n}}{(1-\gamma)^2}\right)}^{-1} \\ & \quad\quad \quad \cdot \ceil{\frac{1}{\xi_{\max}}\left(\lambda\norm{\omega^\dagger}^2 + \norm{\omega^\dagger}\cdot\frac{2\overline{n}}{(1-\gamma)^2}\right)} \\
    & \geq \kappa_1\cdot\widehat{n}_\textnormal{RLHF}~,
\end{align*}
where the first inequality follows from Theorem \ref{thm:dpo-lower-bound} and the first inequality follows from Corollary \ref{cor:rlhf-cor-1}.
\end{proof}

\section{Technical Lemmas}\label{appendix:technical_lemmas}

This section includes miscellaneous technical results used throughout the paper. We begin by stating a result about the structure of the regularized optimal policy.

\begin{lemma}\label{lem:gradient_rlhf_mdp}
    Given policy $\pi$ and reward function $r$, let 
    \begin{align*}
        \calV^{\pi}_r(s) = \expp_{s\sim\rho, a_t\sim\pi(\cdot|s_t)}\left[ \sum_{t\geq 0}\gamma^t \left(r(s_t,a_t) -\beta\log\frac{\pi(a_t|s_t)}{\mu(a_t|s_t)}\right)  \right]
    \end{align*}
    and
    \begin{align*}
        \calQ^{\pi}_r(s,a) = r(s,a) + \gamma \sum_{s'}P(s,a,s')\calV^{\pi}_r(s')
    \end{align*}
    denote the regularized value and action value functions with respect to $\pi$ and $r$, respectively. Moreover, let the regularized advantage function be defined as
    \begin{align*}
        \calA^{\pi}_r(s,a) = \calQ^{\pi}_r(s,a)-\calV^{\pi}_r(s)~.
    \end{align*}
    Then, the unique optimal regularized policy can be written, for every $(s,a)$, as
    \begin{align*}
        \pi^\textnormal{reg}_r(a|s) = \mu(a|s)\exp\left(\frac{1}{\beta}\calA^{\pi^\textnormal{reg}_r(s,a)}\right)~.
    \end{align*}
\end{lemma}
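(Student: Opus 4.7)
The plan is to prove the identity via the soft Bellman optimality equations together with a variational (Gibbs) argument. First, I would establish the one-step identity
\begin{align*}
\calV^\pi_r(s) = \sum_a \pi(a|s)\left[\calQ^\pi_r(s,a) - \beta\log\frac{\pi(a|s)}{\mu(a|s)}\right],
\end{align*}
which follows by unrolling one step of the infinite sum in the definition of $\calV^\pi_r$ and then identifying the remaining discounted tail with $\calQ^\pi_r(s,a)$ as defined. This holds for any policy $\pi$, including the eventual optimizer.

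Second, I would introduce the regularized Bellman optimality operator
\begin{align*}
(\calT^\star Q)(s,a) := r(s,a) + \gamma \sum_{s'} P(s,a,s')\, \beta \log \sum_{a'} \mu(a'|s') \exp\!\left(Q(s',a')/\beta\right),
\end{align*}
and show it is a $\gamma$-contraction in the sup norm via the standard log-sum-exp contraction inequality. Let $Q^\star$ be its unique fixed point and set $V^\star(s) := \beta \log \sum_{a'} \mu(a'|s) \exp(Q^\star(s,a')/\beta)$ together with the candidate policy $\pi^\star(a|s) := \mu(a|s)\exp((Q^\star(s,a)-V^\star(s))/\beta)$. A direct substitution into the Bellman equation then verifies $\calQ^{\pi^\star}_r = Q^\star$ and $\calV^{\pi^\star}_r = V^\star$, whence $\calA^{\pi^\star}_r(s,a) = Q^\star(s,a) - V^\star(s)$ and the desired form of $\pi^\star$ follows immediately.

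Third, to confirm that $\pi^\star$ really is the maximizer of the regularized objective (rather than merely a fixed-point candidate), I would invoke the Donsker--Varadhan variational representation: for any fixed function $f(\cdot)$ on $\calA$ and any distribution $\nu$ over actions,
\begin{align*}
\sum_a \nu(a)\, f(a) - \beta \sum_a \nu(a) \log\frac{\nu(a)}{\mu(a)} \;\le\; \beta \log \sum_{a} \mu(a) \exp(f(a)/\beta),
\end{align*}
with equality iff $\nu$ is the Gibbs distribution proportional to $\mu \exp(f/\beta)$. Applying this pointwise in $s$ with $f(a) = \calQ^{\pi^\star}_r(s,a)$, combined with a one-step policy-improvement/telescoping argument over the Bellman recursion (or equivalently induction on the contraction iterates of $\calT^\star$), yields $\calV^\pi_r(s) \le V^\star(s)$ for every $\pi$ and state $s$. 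Uniqueness then follows from strict convexity of $\nu \mapsto \KL(\nu\|\mu)$ on the simplex (using the assumed full support of $\mu$), making the Gibbs maximizer unique, together with the uniqueness of the contraction fixed point.

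The main obstacle is the apparent circularity: $\pi^\star$ is expressed in terms of $\calA^{\pi^\star}_r$, which itself depends on $\pi^\star$. This is exactly why the proof routes through the contraction on $\calT^\star$ to produce a well-defined $Q^\star$ before defining the candidate policy; the variational inequality then closes the loop by certifying optimality rather than merely self-consistency. A minor subtlety is the need for a growth/integrability condition so the infinite-horizon sums in $\calV^\pi_r$ are well defined uniformly in $\pi$; since $r$ is bounded and $\mu$ has full support (so that $\log(\pi/\mu)$ is controlled for the Gibbs policy), this is standard and does not affect the argument.
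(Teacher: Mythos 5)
Your proof is correct, and it is the standard soft-Bellman argument: contraction of the KL-regularized Bellman operator to get $Q^\star$, the Gibbs/Donsker--Varadhan variational identity to certify optimality of the induced policy, and strict convexity of $\nu\mapsto\KL(\nu\|\mu)$ for uniqueness. The paper does not write out a proof at all --- it simply cites Appendix C of Nachum et al.\ (2017) and notes the adaptation from entropy to KL regularization is straightforward --- and your argument is exactly that cited argument carried out in full, so there is nothing to fault.
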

\begin{proof}
    The proof of this result is a straightforward application of the results from Appendix C of \citep{nachum2017bridging} from the entropy-based to KL divergence-based regularization. 
\end{proof}

Now we prove that $M_{\pi^\dagger}$ is full rank. 
\begin{lemma}\label{lem:M_full_rank}
    Let $\pi^\dagger\in\Pi^\textnormal{det}$ and $\Phi$ be of rank $d$. Assume that the set $\{\phi(s,a):s\in\mathcal{S}, a\in\mathcal{A}\setminus\textnormal{supp}(\pi^\dagger)\}$, where $\textnormal{supp}(\pi)$ denotes the set of actions chosen by deterministic policy $\pi$, contains $d$ linearly independent vectors. Then, the matrix $M_{\pi^\dagger}$ is full rank.
\end{lemma}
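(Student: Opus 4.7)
The strategy is to show that the left null space of $M_{\pi^\dagger}$ is trivial, which is equivalent to $M_{\pi^\dagger}$ having rank $d$. Let $v \in \mathbb{R}^d$ satisfy $v^\top M_{\pi^\dagger} = 0$; I would aim to show $v = 0$.

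First I would unpack the null-space condition. Since $\pi^\dagger\{s,a\}$ coincides with $\pi^\dagger$ at every state other than $s$, the $(s,a)$-column reduces to $c_{s,a} = \phi(s, \pi^\dagger(s)) - \phi(s, a)$ in the unweighted formulation. (In the occupancy-weighted version appearing in the proof of Theorem~\ref{thm:warm_up_rlhf}, applying the performance difference lemma factors $v^\top c_{s,a}$ as $-d^{\pi^\dagger\{s,a\}}_\rho(s)\cdot A^{\pi^\dagger}_{r_v}(s,a)$, and ergodicity guarantees $d^{\pi^\dagger\{s,a\}}_\rho(s)>0$, so the condition is equivalent to $A^{\pi^\dagger}_{r_v}(s,a)=0$ for every $(s,a)$ with $a\neq \pi^\dagger(s)$; through Bellman's equation this yields the same pointwise identity.) Setting $g(s) := v^\top \phi(s, \pi^\dagger(s))$, we obtain $v^\top \phi(s,a) = g(s)$ for every state-action pair.

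Next I would invoke the assumption. Let $\phi(s_1, a_1), \ldots, \phi(s_d, a_d)$ be $d$ linearly independent vectors with each $a_i \notin \textnormal{supp}(\pi^\dagger)$, and form the invertible matrix $\Phi_0 := [\phi(s_1,a_1), \ldots, \phi(s_d,a_d)]$. The key point is that because $a_i$ is never chosen by $\pi^\dagger$, the identity $v^\top \phi(s, a_i) = g(s)$ holds for \emph{every} state $s$, not merely $s = s_i$. Reading off at $(s_j, a_i)$ for every pair $i,j$ then yields $v^\top \phi(s_j,a_i) = g(s_j) = v^\top \phi(s_j,a_j)$, so that $v$ annihilates each difference $\phi(s_j, a_i) - \phi(s_j, a_j)$. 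Combining these compatibility relations with the spanning property of the columns of $\Phi_0$ and the full rank of $\Phi$ forces $g \equiv 0$ on $\{s_1,\ldots,s_d\}$, and then $v^\top \Phi_0 = 0$ together with invertibility of $\Phi_0$ gives $v = 0$.

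The main obstacle is the last step: converting the over-determined system of compatibility relations into the conclusion $g\equiv 0$. The cleanest way to close it is to exploit the ``universality'' of the non-chosen actions $a_i$, which propagates state-independence of $v^\top \phi(\cdot, a_i)$ across the whole state space and, combined with the invertibility of $\Phi_0$, leaves no room for a non-trivial $v$.
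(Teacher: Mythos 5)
Your proposal stalls at exactly the step you flag as ``the main obstacle,'' and that obstacle is not presentational: the compatibility relations you derive are genuinely too weak to force $v=0$. After your reduction, everything known about $v$ is that $v^\top\phi(s,a)=g(s)$ for all $(s,a)$, i.e., that $v$ annihilates every within-state difference $\phi(s,a)-\phi(s,a')$. The lemma's hypothesis — that $d$ of the off-support feature vectors are linearly independent — does not make those differences span $\mathbb{R}^d$, so a nonzero $v$ orthogonal to all of them can survive. Concretely, take $S=A=d=2$, $\pi^\dagger\equiv a_1$, and $\phi(s,a)=e_s$ (the standard basis vector indexed by the state): the off-support features $\phi(s_1,a_2)=e_1$ and $\phi(s_2,a_2)=e_2$ are linearly independent and $\Phi$ has rank $2$, yet $v=(1,1)^\top$ satisfies every relation $v^\top\phi(s,a)=g(s)$ with $g\equiv 1$. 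So ``forces $g\equiv 0$'' cannot be derived from what you have. A telling symptom is that the off-support assumption never actually does work in your argument: the identity $v^\top\phi(s,a)=g(s)$ holds for \emph{every} action once you reach the pointwise form, chosen or not, so restricting attention to $a_i\notin\textnormal{supp}(\pi^\dagger)$ adds no information beyond invertibility of $\Phi_0$ — which is exactly the piece that cannot be connected back to $g$.

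There is a second, smaller gap in the parenthetical reduction. The performance-difference step correctly converts $v^\top M_{\pi^\dagger}=0$ into $A^{\pi^\dagger}_{r_v}(s,a)=0$ for all $a\neq\pi^\dagger(s)$ (using ergodicity), but ``through Bellman's equation this yields the same pointwise identity'' only holds when transitions do not depend on the action or $\gamma=0$: zero advantage equates $Q$-values, not rewards, i.e., $v^\top\phi(s,a)-v^\top\phi(s,\pi^\dagger(s))=\gamma\bigl(\mathbb{E}_{s'\sim P(s,\pi^\dagger(s),\cdot)}-\mathbb{E}_{s'\sim P(s,a,\cdot)}\bigr)\bigl[V^{\pi^\dagger}_{r_v}(s')\bigr]$, which need not vanish in a general MDP. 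For comparison, the paper does not pass through the left null space at all: it decomposes each column of $M_{\pi^\dagger}$ over the vectors $\phi(s',a')$, observes that each $\phi(s,a)$ with $a\neq\pi^\dagger(s)$ appears with a nonzero occupancy coefficient in exactly one column and in no other, and argues rank equality directly from the linear independence of $d$ such privately occurring off-support features. That primal, column-counting route uses the off-support hypothesis as the load-bearing ingredient, whereas in your dual formulation that hypothesis gets filtered out by the within-state differences before it can be used.
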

\begin{proof}
    Let $v$ be an arbitrary column of $M_{\pi^\dagger}$. Then, there exists $(s,a)$ and coefficients $\alpha_{s',a'}$ such that 
    \begin{align*}
        v = \sum_{s'} d^{\pi^\dagger}(s')\phi(s',\pi^\dagger(s')) - \sum_{s'} d^{\pi^\dagger\{s,a\}}(s')\phi(s',\pi^\dagger\{s,a\}(s')) = \sum_{s',a'}\alpha_{s',a'}\phi(s',a')~.
    \end{align*}
    By definition of the neighbors of $\pi^\dagger$, we have that $a=\pi^\dagger \{s,a\}(s)\neq \pi^\dagger(s) = \pi(s)$, for all $\pi\in \mathcal{N}(\pi^\dagger) \setminus \{ \pi\{ s,a\}\}$. This is because all neighboring policies are deterministic and change only in one state from $\pi^\dagger$, and, consequently,  from each-other. 

    This implies that, there is no column $v'$ in $M_{\pi^\dagger}$, such that $\phi(s,a)$ appears in the decomposition of $v'$. There are $S(A-1)$ such vectors, since there are $S(A-1)$ neighbors of $\pi^\dagger$. Assuming that they contain all vectors that span the column space of $\Phi$, this means that the rank of $M_{\pi^\dagger}$ is equal to the rank of $\Phi$. 
\end{proof}

Next, we provide three results that connect the solutions of the surrogate problems to their original problems throughout the paper. We start with the unregularized RLHF setting.
\begin{lemma}\label{lem:surrogate_connection_unreg_rlhf}
    Let $\epsilon' >0$. Then, any feasible solution to Problem \ref{op:syn-rlhf-augment} is a feasible solution to Problem \ref{op:rlhf-attack}.
\end{lemma}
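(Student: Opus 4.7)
The plan is to unpack what the two constraints actually say and chain them. Fix any $D$ that is feasible for Problem~\ref{op:syn-rlhf-augment}, and let $\widehat{\omega} = \arg\min_\omega \ell^\omega_\textnormal{RLHF}(\cleandata \cup D)$ be the MLE solution induced by $\cleandata \cup D$. By construction of $M_{\pi^\dagger}$, its columns encode the differences of average feature distributions between $\pi^\dagger$ and each neighbor $\pi^\dagger\{s,a\}$, so the polytope constraint $M_{\pi^\dagger}^\top \widehat{\omega} \geq \boldsymbol{\epsilon}'$ is exactly the collection of inequalities
\begin{align*}
V^{\pi^\dagger}_{r_{\widehat{\omega}}}(\rho) - V^{\pi^\dagger\{s,a\}}_{r_{\widehat{\omega}}}(\rho) \geq \epsilon', \quad \text{for all } s\in\calS,\ a \neq \pi^\dagger(s).
\end{align*}

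Next I would invoke the reduction from \citep{DBLP:journals/jmlr/RakhshaRDZS21}, already cited in the main text, which says that verifying the robust-optimality gap against the neighbor policies $\pi^\dagger\{s,a\}$ is sufficient to conclude robust optimality against all of $\Pi$. Therefore $\pi^\dagger$ is $\epsilon'$-robust optimal in $\Pi$ with respect to $r_{\widehat{\omega}}$; in particular it is the unique optimal policy of the unregularized value maximization problem $\arg\max_{\pi\in\Pi} V^\pi_{r_{\widehat{\omega}}}(\rho)$.

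To close the loop, I would use that Problem~\ref{op:syn-rlhf-augment} is the surrogate in the unregularized regime ($\beta = 0$), where the regularized objective \ref{eq:regularized_objective} collapses to ordinary value maximization. Consequently $\pi^\textnormal{reg}_{r_{\widehat{\omega}}}$ coincides with the unique optimal policy w.r.t.\ $r_{\widehat{\omega}}$, which by the previous step equals $\pi^\dagger$. Hence $\norm{\pi^\dagger - \pi^\textnormal{reg}_{r_{\widehat{\omega}}}}_1^2 = 0 \leq \epsilon$, and $D$ is feasible for Problem~\ref{op:rlhf-attack}.

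There is no real obstacle here; the only substantive ingredient is the neighbor-policy characterization of robust optimality borrowed from \citep{DBLP:journals/jmlr/RakhshaRDZS21}, and the rest is unpacking definitions. The proof is essentially a one-line implication once both the polytope encoding and the $\beta = 0$ collapse of $\pi^\textnormal{reg}_{r_{\widehat{\omega}}}$ to the greedy optimal policy are made explicit.
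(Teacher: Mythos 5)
Your proof is correct and follows essentially the same route as the paper: the polytope constraint with $\epsilon'>0$ yields strict (robust) optimality of $\pi^\dagger$ under $r_{\widehat{\omega}}$, and since $\beta=0$ makes $\pi^\textnormal{reg}_{r_{\widehat{\omega}}}$ the ordinary optimal policy, it must equal $\pi^\dagger$, giving zero distance. You merely make explicit the neighbor-policy reduction of \citep{DBLP:journals/jmlr/RakhshaRDZS21} that the paper's proof leaves implicit.
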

\begin{proof}
    First, note that, when $\beta=0$, given reward $r$, then $\pi^\textnormal{reg}_r(\cdot|s)\in\arg\max_\pi V^\pi_r(s)$, for all $s$. Such optimal policies are known to be deterministic. Now, given $\widehat{\omega}$, note that, if $\epsilon'>0$, then $V^{\pi^\dagger}_{r_{\widehat{\omega}}}(s) > V^\pi_{r_{\widehat{\omega}}}(s)$, for any state $s$. This means that $\pi^\dagger$ is the unique optimal policy  under $r_{\widehat{{\omega}}}$. This further implies that $\pi^\textnormal{reg}_{r_{\widehat{{\omega}}}}=\pi^\dagger$, and thus, $\KL(\pi^\dagger||\pi^\textnormal{reg}_{r_{\widehat{{\omega}}}})=0 < \epsilon$, for any $\epsilon >0$.
\end{proof}
Next, we consider the regularized RLHF setting.
\begin{lemma}\label{lem:surrogate_connection_reg_rlhf}
    Let $\epsilon'\leq (2\ln 2)\epsilon$. Then, any feasible solution to Problem \ref{op:reg-rlhf-attack} is a feasible solution to Problem \ref{op:rlhf-attack}.
\end{lemma}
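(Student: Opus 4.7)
The plan is to show that the surrogate KL constraint of Problem~\ref{op:reg-rlhf-attack} implies the squared $\ell_1$ constraint of Problem~\ref{op:rlhf-attack}. Since the reward-learning subproblem $\widehat{\omega} = \arg\min_\omega \ell^\omega_\textnormal{RLHF}(\cleandata\cup D)$ is identical in both formulations, any $D$ feasible for the surrogate automatically satisfies that part of the original problem as well; it therefore suffices to prove the purely distributional implication
\begin{align*}
D_\textnormal{KL}\bigl(\pi^\dagger\,\|\,\pi^\textnormal{reg}_{r_{\widehat\omega}}\bigr) \leq \epsilon' \;\Longrightarrow\; \bigl\|\pi^\dagger - \pi^\textnormal{reg}_{r_{\widehat\omega}}\bigr\|_1^2 \leq \epsilon
\end{align*}
under the hypothesis $\epsilon' \leq (2\ln 2)\epsilon$. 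The central tool is Pinsker's inequality, applied at each state and combined across states via Jensen's inequality.

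First, I would unpack the definition of the $\ell_1$ norm from Section~\ref{sec:pref-poisoning-attack}:
\begin{align*}
\bigl\|\pi^\dagger - \pi^\textnormal{reg}_{r_{\widehat\omega}}\bigr\|_1 = \mathbb{E}_{s\sim\rho}\!\left[\sum_a \bigl|\pi^\dagger(a|s) - \pi^\textnormal{reg}_{r_{\widehat\omega}}(a|s)\bigr|\right].
\end{align*}
Second, I would apply Pinsker's inequality state-wise, which gives $\sum_a|\pi^\dagger(a|s) - \pi^\textnormal{reg}_{r_{\widehat\omega}}(a|s)| \leq \sqrt{2\,D_\textnormal{KL}(\pi^\dagger(\cdot|s)\|\pi^\textnormal{reg}_{r_{\widehat\omega}}(\cdot|s))}$ for every $s$. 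Taking expectations over $s\sim\rho$ and pushing the expectation inside the concave square root by Jensen's inequality yields
\begin{align*}
\bigl\|\pi^\dagger - \pi^\textnormal{reg}_{r_{\widehat\omega}}\bigr\|_1 \leq \sqrt{2\, D_\textnormal{KL}\bigl(\pi^\dagger\,\|\,\pi^\textnormal{reg}_{r_{\widehat\omega}}\bigr)},
\end{align*}
and squaring both sides delivers the uniform bound $\|\pi^\dagger - \pi^\textnormal{reg}_{r_{\widehat\omega}}\|_1^2 \leq 2\, D_\textnormal{KL}(\pi^\dagger\|\pi^\textnormal{reg}_{r_{\widehat\omega}})$.

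Combining this with the surrogate constraint $D_\textnormal{KL} \leq \epsilon'$ and the hypothesized relationship $\epsilon' \leq (2\ln 2)\epsilon$ closes the argument: any $\widehat\omega$ feasible for Problem~\ref{op:reg-rlhf-attack} automatically satisfies the $\ell_1$ closeness required by Problem~\ref{op:rlhf-attack}, and the dataset $D$ producing $\widehat\omega$ is thus feasible for Problem~\ref{op:rlhf-attack} as well. No step in this chain is particularly delicate; the only bookkeeping is to track the precise Pinsker constant through the state-wise-plus-Jensen sandwich so as to match the factor $2\ln 2$ stated in the lemma, which amounts to choosing the appropriate convention (e.g., base of logarithm in the KL definition, or a refined variant of Pinsker) for the pointwise inequality.
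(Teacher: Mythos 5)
Your proof is correct and follows essentially the same route as the paper's: a state-wise application of Pinsker's inequality, aggregated over $s\sim\rho$ (the paper pushes the square inside the expectation rather than using Jensen on the square root, but the two are interchangeable here). The constant "bookkeeping" you flag at the end is real but is an artifact of the paper itself: with any standard form of Pinsker one obtains $\norm{\pi^\dagger-\pi^\textnormal{reg}_{r_{\widehat{\omega}}}}_1^2 \leq c\,\epsilon'$ for $c\in\{2, 2\ln 2\}$, so the hypothesis should read $\epsilon'\leq \epsilon/c$ rather than $\epsilon'\leq(2\ln 2)\epsilon$ — the paper's own proof implicitly uses the former, and your derivation is consistent with that corrected statement.
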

\begin{proof}
    Given two policies $\pi$ and $\pi'$, we have
    \begin{align*}
        \norm{\pi-\pi'}^2_1 & =
        \sum_{s,a}\rho(s)\abs{\pi(\cdot|s)-\pi'(\cdot|s)}^2\\
            & \leq \sum_{s}\rho(s) 2\ln 2 \KL\left(\pi(\cdot|s)||\pi'(\cdot|s)\right)\\
            & = (2\ln 2) \KL(\pi||\pi')~,
    \end{align*}
    where 
    the first inequality follows from Pinsker's inequality. This implies that, as long as $\KL(\pi||\pi')\leq \epsilon'$, then $\norm{\pi-\pi'}^2_1\leq (2\ln 2)\epsilon' \leq \epsilon$.
\end{proof}
Finally, we consider the DPO setting. 
\begin{lemma}\label{lem:surrogate_connection_dpo}
    Let $\epsilon'\leq \epsilon/(2\sqrt{d'})$. Then, any feasible solution to Problem \ref{op:dpo-aug} is a feasible solution to Problem \ref{op:attack-dpo}. Moreover, there exists $\eta_{\min}>0$ such that, for all $\epsilon' \geq \epsilon/\eta_{\min}$, any feasible solution to Problem \ref{op:attack-dpo} is also a feasible solution to Problem \ref{op:dpo-aug}.
\end{lemma}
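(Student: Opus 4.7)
The lemma asserts two Lipschitz-type comparisons between the $\ell_2$ distance on loglinear parameters and the $\rho$-weighted $\ell_1$ distance on policies, in opposite directions. Both rely on analyzing the map $\theta \mapsto \pi_\theta$, but in different regimes.

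For the forward direction, my plan is to invoke Lipschitzness of the loglinear (softmax) parametrization. The Jacobian of $\pi_\theta(a|s)$ with respect to $\theta$ is
\begin{align*}
\nabla_\theta \pi_\theta(a|s) = \pi_\theta(a|s)\left(\psi(s,a) - \sum_{a'}\pi_\theta(a'|s)\psi(s,a')\right),
\end{align*}
whose $\ell_2$-norm is bounded by $2\pi_\theta(a|s)$ since $\max_{s,a}\norm{\psi(s,a)} \le 1$. A mean-value argument along the segment from $\theta^\dagger$ to $\widehat\theta$, combined with Cauchy--Schwarz, yields $|\pi^\dagger(a|s) - \pi_{\widehat\theta}(a|s)| \le 2\max_{t\in[0,1]} \pi_{\theta(t)}(a|s)\,\norm{\widehat\theta-\theta^\dagger}$. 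Summing over $a$ gives a factor $\le 2$ (since $\pi_{\theta(t)}(\cdot|s)$ lies on the simplex), and averaging over $s$ with weights $\rho(s)$ produces $\norm{\pi^\dagger - \pi_{\widehat\theta}}_1 \le C\,\norm{\widehat\theta-\theta^\dagger}$ for a constant $C$ depending on $d'$ through the conversion between the $\ell_2$-norm used for $\theta$ and the corresponding policy-space quantities. Squaring and combining with $\norm{\widehat\theta-\theta^\dagger}^2 \le \epsilon'$ and the hypothesis $\epsilon' \le \epsilon/(2\sqrt{d'})$ delivers $\norm{\pi^\dagger-\pi_{\widehat\theta}}_1^2 \le \epsilon$.

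For the reverse direction, the situation is subtler because the map $\theta \mapsto \pi_\theta$ is generally not globally Lipschitz-invertible: any direction $v$ with $\psi(s,a)^\top v$ independent of $a$ is in the kernel. My plan is to use the identity
\begin{align*}
\log\pi_\theta(a|s) - \log\pi_\theta(a'|s) = (\psi(s,a)-\psi(s,a'))^\top\theta,
\end{align*}
which shows that policy differences are linear in $\theta$ after a logarithmic transform. Combined with the full-rank assumption on $\Psi$ (which ensures that the "difference-feature" matrix spans the relevant subspace of $\mathbb{R}^{d'}$), and with the fact that the DPO regularization $\frac{\lambda}{2}\norm{\theta-\theta_\mu}^2$ confines feasible parameters to a compact set around $\theta_\mu$, one obtains a lower-Lipschitz bound $\norm{\widehat\theta - \theta^\dagger}^2 \le (1/\eta_{\min}) \norm{\pi^\dagger - \pi_{\widehat\theta}}_1^2$ for some $\eta_{\min}>0$, via a continuity/compactness argument over this restricted region together with a minimum-singular-value bound on the difference-feature matrix. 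Choosing $\epsilon' \ge \epsilon/\eta_{\min}$ then immediately yields the inclusion of feasible regions claimed in the second part.

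The main obstacle is the reverse direction: the gauge freedom of the softmax parametrization prevents a clean global inversion, so the proof must exploit problem-specific structure (the $\lambda$-regularization toward $\theta_\mu$ and the full-rank assumption on $\Psi$) to extract a strictly positive local conditioning constant. Pinning down an explicit formula for $\eta_{\min}$ seems difficult, but its existence follows from standard compactness/continuity arguments once the gauge ambiguity is controlled; since the lemma only asserts existence, this suffices.
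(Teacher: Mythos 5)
Your forward direction is essentially the paper's: the paper computes $\nabla_\theta\pi_\theta(a|s)=\pi_\theta(a|s)\bigl(\psi(s,a)-\expp_{a'\sim\pi_\theta(\cdot|s)}[\psi(s,a')]\bigr)$, bounds its norm by $2$, and concludes $\norm{\pi_\theta-\pi_{\theta'}}_1\leq 2\norm{\theta-\theta'}$; your mean-value refinement with the simplex factor is the same argument. The divergence is in the reverse direction. The paper does \emph{not} pass through log-ratios: it applies the Mean Value Theorem to $\pi_\theta(a|s)$ itself, writes $\pi_\theta(a|s)-\pi_{\theta^\dagger}(a|s)=\rho(s)^{-1}\cdot\rho(s)\pi_{\theta_M(s,a)}(a|s)\overline{\psi}_{\theta_M(s,a)}(s,a)^\top(\theta-\theta^\dagger)$, stacks these into an $SA\times d'$ Jacobian-type matrix, lower-bounds $\norm{\pi_\theta-\pi_{\theta^\dagger}}_1\geq\norm{\pi_\theta-\pi_{\theta^\dagger}}_2\geq\sigma_{\min}\norm{\theta-\theta^\dagger}$, and argues positive-definiteness of the Gram matrix from ergodicity ($\rho(s)\pi_\theta(a|s)>0$) plus the claim that the centered features $\overline{\psi}_{\theta_M(s,a)}(s,a)$ span $\mathbb{R}^{d'}$; then $\eta_{\min}:=\min_\theta\sigma_{\min}(\cdot)$.

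There are two concrete gaps in your reverse direction. First, the compactness you invoke is not available: the statement being proven is a pure inclusion of feasible regions, $\Theta_1=\{\theta:\norm{\pi_\theta-\pi_{\theta^\dagger}}_1^2\leq\epsilon\}\subseteq\Theta_2=\{\theta:\norm{\theta-\theta^\dagger}^2\leq\epsilon'\}$ (this is exactly how it is used in the proof of Theorem \ref{thm:dpo-lower-bound}). The set $\Theta_1$ is defined by the constraint alone and is in no way confined by the DPO regularizer $\frac{\lambda}{2}\norm{\theta-\theta_\mu}^2$; the minimizer $\widehat{\theta}$ can be driven arbitrarily far from $\theta_\mu$ by choosing $D$ appropriately (indeed Lemma \ref{lem:attack_subproblem_dpo} realizes any target parameter), so there is no a priori compact set over which to run a continuity argument. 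Second, your log-ratio identity converts $\ell_1$-closeness of policies into closeness of $(\psi(s,a)-\psi(s,a'))^\top(\widehat{\theta}-\theta^\dagger)$ only after dividing by the probabilities, so you additionally need $\pi_\theta(a|s)$ uniformly bounded away from zero --- which is again a compactness requirement you have not secured. You are right that the gauge freedom is the crux (and the paper's own claim that centering preserves the spanning property deserves scrutiny: a direction $v$ with $\psi(s,a)^\top v$ constant in $a$ for each $s$ lies in the kernel of every $\overline{\psi}$, and then no finite $\epsilon'$ works), but identifying the obstacle is not the same as overcoming it; as written, your proposed mechanism for producing $\eta_{\min}>0$ does not attach to the statement being proven.
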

\begin{proof}
    First, it can be easily shown that the gradient of a loglinear policy is 
    \begin{align*}
        \nabla_\theta \pi_\theta(a|s) = \pi_\theta(a|s)\left(\psi(s,a)-\expp_{a'\sim\pi_\theta(\cdot|s)}\left[\psi(s,a')\right]\right).
    \end{align*}
    We can bound this gradient by $\norm{\nabla_\theta\pi_\theta(a|s)}\leq 2$. Thus, loglinear policies are $2$-Lipschitz in their parameters $\theta$. Now, given policies $\pi_\theta$ and $\pi_{\theta'}$, due to the above argument, we can write
    \begin{align*}
        \norm{\pi_\theta-\pi_{\theta'}}_1\leq 2\norm{\theta-\theta'}\leq 2\epsilon' \leq \epsilon.
    \end{align*}
    This implies that, for any $\epsilon'\leq \epsilon/2$, any feasible solution to Problem \ref{op:dpo-aug} will be feasible for Problem \ref{op:attack-dpo}.

    For the second statement, we argue as follows. Since the function $\pi_\theta$ is continuously differentiable in $\theta$, the Mean Value Theorem implies that, for any $\theta$ and $(s,a)$, there exists $\theta_M(s,a)$ such that 
    \begin{align}\label{eq:mean_value}
        \pi_\theta(a|s) - \pi_{\theta^\dagger}(a|s) = \nabla_\theta\pi_{\theta_M(s,a)}(a|s)^\top\left(\theta-\theta^\dagger\right) = \pi_{\theta_M(s,a)}(a|s)\overline{\psi}_{\theta_M(s,a)}(s,a)^\top\left(\theta-\theta^\dagger\right)~,
    \end{align}
    where
    \begin{align*}
        \overline{\psi}_\theta(s,a) = \psi(s,a) - \sum_{a'}\pi_\theta(a'|s)\psi(s,a')~.
    \end{align*}
    Let $\nabla_\theta\pi_{\theta_M}$ be the $S\cdot A$-dimensional matrix with columns $\rho(s)\pi_{\theta_M(s,a)}(a|s)\overline{\psi}_{\theta_M(s,a)}(s,a)$, and let $\pi$ denote the $S\cdot A$-dimensional vector with entries $\rho(s)\pi(a|s)$, for each $(s,a)$. Then, we have
    \begin{align*}
        \norm{\pi_\theta - \pi_{\theta^\dagger}}_1 \geq \norm{\pi_\theta - \pi_{\theta^\dagger}} = \norm{\nabla_\theta\pi_{\theta_M}\left(\theta-\theta^\dagger\right)} \geq \sigma_{\min}\left(\nabla_\theta\pi_{\theta_M}\right)\norm{\theta-\theta^\dagger}~,
    \end{align*}
    where the first inequality follows from the relationship between $\ell_1$ and $\ell_2$ norms; the first equality follows from \eqref{eq:mean_value} and the second inequality follows from the fact that $\norm{Ax}_2\geq\sigma_{\min}(A)\norm{x}_2$, for compatible matrix $A$ and vector $x$, where $\sigma_{\min}(A)$ denotes the minimum singular value of $A$. Now, observe that
    \begin{align*}
        \nabla_\theta\pi_{\theta_M}^\top\nabla_\theta\pi_{\theta_M} = \sum_{s,a}\rho(s)^2\pi_{\theta_M(s,a)}(a|s)^2\overline{\psi}_{\theta_M(s,a)}(s,a)\overline{\psi}_{\theta_M(s,a)}(s,a)^\top~.
    \end{align*}
    We will show that this matrix is positive definite, whenever the vectors $\psi(s,a)$ span $\mathbb{R}^{d'}$. First, the ergodicity assumption and the loglinearity of the policies imply that $\rho(s)\pi_\theta(a|s) >0$, for any $(s,a)$ and $\theta$. 
    
    Next, observe that, since vectors $\psi(s,a)$ span $\mathbb{R}^{d'}$, then vectors $\overline{\psi}_{\theta_M(s,a)}(s,a)$ also do, as translations of basis vectors via mean vectors. It is clear that $\nabla_\theta\pi_{\theta_M}^\top\nabla_\theta\pi_{\theta_M}$ is positive semi-definite, meaning that, for every non-zero vector $v$, 
    \begin{align*}
        v^\top\left(\nabla_\theta\pi_{\theta_M}^\top\nabla_\theta\pi_{\theta_M}\right) v \geq 0~.
    \end{align*}
    The only way the above can be zero is if, for every $(s,a)$, we have
    \begin{align*}
        v^\top\overline{\psi}_{\theta_M(s,a)}(s,a) = 0~.
    \end{align*}
    But since $\overline{\psi}_{\theta_M(s,a)}(s,a)$ span $\mathbb{R}^{d'}$, then there exist coefficients $\alpha_{s,a}$ such that 
    \begin{align*}
        v = \sum_{s,a}\alpha_{s,a}\overline{\psi}_{\theta_M(s,a)}(s,a)~,
    \end{align*}
    which implies that
    \begin{align*}
        v^\top v = \sum_{s,a}\alpha_{s,a}\overline{\psi}_{\theta_M(s,a)}(s,a)^\top v = 0~,
    \end{align*}
    which is a contradiction, since $v$ is assumed to be non-zero. Thus, the matrix $\nabla_\theta\pi_{\theta_M}^\top\nabla_\theta\pi_{\theta_M}$ is positive definite, for any given $\theta$. This means that
    \begin{align*}
        \sigma_{\min}\left(\nabla_\theta\pi_{\theta_M}\right) = \sqrt{\lambda_{\min}\left(\nabla_\theta\pi_{\theta_M}^\top\nabla_\theta\pi_{\theta_M}\right)} = \eta_{\min}(\theta) \geq \min_\theta \eta_{\min}(\theta) := \eta_{\min} > 0~,
    \end{align*}
    which further implies that
    \begin{align*}
         \norm{\theta-\theta^\dagger} \leq \frac{1}{\eta_{\min}}\norm{\pi_\theta-\pi_{\theta^\dagger}} \leq \frac{1}{\eta_{\min}}\norm{\pi_\theta-\pi_{\theta^\dagger}}_1 \leq \frac{1}{\eta_{\min}}\epsilon \leq \epsilon'~.
    \end{align*}
\end{proof}

We now prove two important lemmas which we use in most of the results of the paper. They provide solutions to the attack subproblems for RLHF and DPO. We start with the RLHF result.

\begin{lemma}\label{lem:attack_subproblem_rlhf}
    Let $\omega^\dagger \in\mathbb{R}^d$ and let $\overline{D}=\{(\tau,\tau',o)\}$ be a given preference dataset of $n$ samples. Consider the problem
    \begin{align*}
        \min_D |D| \;\; \textnormal{such that} \;\; \omega^\dagger = \arg\min_\omega \ell^\omega_\textnormal{RLHF}\left(\overline{D}\cup D\right)~.
    \end{align*}
    Then, the solution to the above problem is the dataset of 
    \begin{align*}
        \left\lceil \frac{\left|\nabla_\omega \ell^\omega_\textnormal{RLHF}(\overline{D})^\top\omega^\dagger\right|}{\xi_{\max}}\right\rceil
    \end{align*}
    identical samples satisfying 
    \begin{align*}
        \phi(\tau)-\phi(\tau') = \xi^{-1}\left( \frac{(\omega^\dagger)^\top\nabla_\omega\ell^\omega_\textnormal{RLHF}(\overline{D})}{\left\lceil \frac{\left|\nabla_\omega \ell^\omega_\textnormal{RLHF}(\overline{D})^\top\omega^\dagger\right|}{\xi_{\max}}\right\rceil}\right)\frac{\omega^\dagger}{\norm{\omega^\dagger}^2}, \;\; o=1~.
    \end{align*}
    Moreover, if $\overline{D}=\emptyset$, the solution of the problem is the set of 
    \begin{align*}
         \left\lceil \frac{\lambda\norm{\omega^\dagger}^2}{\xi_{\max}}\right\rceil
    \end{align*}
    identical samples satisfying 
    \begin{align*}
        \phi(\tau)-\phi(\tau') = \xi^{-1}\left( \frac{\lambda\norm{\omega^\dagger}^2}{ \left\lceil \frac{\lambda\norm{\omega^\dagger}^2}{\xi_{\max}}\right\rceil}\right)\frac{\omega^\dagger}{\norm{\omega^\dagger}^2}, \;\; o=1~.
    \end{align*}
\end{lemma}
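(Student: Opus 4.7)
My plan is to reduce the problem to a first-order optimality condition and then solve it via a scalar fixed-point equation. Because the regularizer $\frac{\lambda}{2}\norm{\omega}^2$ makes $\ell^\omega_\textnormal{RLHF}(\overline{D}\cup D)$ strongly convex in $\omega$, the target parameter $\omega^\dagger$ is its unique minimiser iff the gradient vanishes there. Splitting the gradient into contributions from $\overline{D}$ and the injected $D$ and using $\nabla_\omega[-\log\sigma(o\omega^\top v)] = -o\,v/(1+\exp(o\omega^\top v))$, the condition becomes
\begin{align*}
\sum_{(\tau,\tau',o)\in D}\frac{o\,(\phi(\tau)-\phi(\tau'))}{1+\exp\!\big(o(\omega^\dagger)^\top(\phi(\tau)-\phi(\tau'))\big)} \;=\; \nabla_\omega\ell^{\omega^\dagger}_\textnormal{RLHF}(\overline{D}) \;=:\; G,
\end{align*}
so the attack sub-problem reduces to the purely algebraic task of finding the smallest multiset of triples whose Bradley--Terry gradient contributions sum to $G$.

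For the cardinality lower bound I would take the inner product of this identity with $\omega^\dagger$ and set $x_i=o_i(\omega^\dagger)^\top(\phi(\tau_i)-\phi(\tau'_i))$, so the left-hand side becomes $\sum_i x_i/(1+\exp(x_i))$. Since the scalar map $x\mapsto x/(1+\exp(x))$ is bounded in absolute value by $\xi_{\max}$, the triangle inequality yields $|(\omega^\dagger)^\top G|\le |D|\,\xi_{\max}$, hence the integer bound $|D|\ge\ceil{|(\omega^\dagger)^\top G|/\xi_{\max}}$.

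For achievability I would use $|D|$ identical triples with $o=1$ (and $o=-1$ if $(\omega^\dagger)^\top G<0$) and a common vector $v:=\phi(\tau)-\phi(\tau')$ chosen so that $|D|\,v/(1+\exp((\omega^\dagger)^\top v))=G$; this forces $v\parallel G$. Writing $v=c\,G/((\omega^\dagger)^\top G)$ and $u=(\omega^\dagger)^\top v=c$, the vector equation collapses to the scalar fixed point $u/(1+\exp(u))=(\omega^\dagger)^\top G/|D|$, which admits the solution $u=\xi^{-1}((\omega^\dagger)^\top G/|D|)$ precisely when $(\omega^\dagger)^\top G/|D|\le\xi_{\max}$. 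Picking $|D|=\ceil{|(\omega^\dagger)^\top G|/\xi_{\max}}$ simultaneously matches the lower bound and keeps the argument of $\xi^{-1}$ inside its domain. The $\overline{D}=\emptyset$ special case then drops out by substituting $G=\lambda\omega^\dagger$, which turns $G/((\omega^\dagger)^\top G)$ into $\omega^\dagger/\norm{\omega^\dagger}^2$ and $(\omega^\dagger)^\top G$ into $\lambda\norm{\omega^\dagger}^2$, recovering the second stated formula.

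The delicate step is this domain control: the argument of $\xi^{-1}$ lives in $(-\infty,x^\star]$, so the ceiling must be just large enough to keep $(\omega^\dagger)^\top G/|D|\le\xi_{\max}$ while still matching the Step~2 lower bound; a minor secondary obligation is that the resulting $v$ obeys the trajectory-feature bound $\|\phi(\tau)-\phi(\tau')\|\le 2/(1-\gamma)$, which is handled by the $\gamma$-condition supplied by the calling theorems.
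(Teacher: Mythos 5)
Your proposal takes essentially the same route as the paper: strong convexity of the regularized loss reduces the inner $\arg\min$ to the stationarity condition at $\omega^\dagger$, the injected samples must cancel $G:=\nabla_\omega\ell^{\omega^\dagger}_\textnormal{RLHF}(\overline{D})$, and identical samples collapse the vector equation to the scalar fixed point $u/(1+\exp(u))=(\omega^\dagger)^\top G/|D|$ solved by $\xi^{-1}$. Two deviations are worth recording. First, you align the common feature difference with $G/((\omega^\dagger)^\top G)$ rather than with $\omega^\dagger/\norm{\omega^\dagger}^2$ as in the lemma statement; your choice is the one that actually makes the gradient cancel (the sum of identical contributions is necessarily parallel to $v$, so $v\parallel G$ is forced), and it is also the direction the paper itself uses both in the displayed verification inside this lemma's proof and when the lemma is invoked in Theorem~\ref{thm:warmup-rlhf-augment}. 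The statement's direction $\omega^\dagger/\norm{\omega^\dagger}^2$ coincides with yours only when $G\parallel\omega^\dagger$, e.g.\ when $\overline{D}=\emptyset$ so that $G=\lambda\omega^\dagger$; so on this point your construction is the internally consistent one. Second, you add a minimality argument (inner product of the stationarity identity with $\omega^\dagger$ plus a bound on $x\mapsto x/(1+\exp(x))$), whereas the paper's proof only verifies that the constructed dataset satisfies the first-order condition and never argues that no smaller $D$ works. Your lower bound is valid when $(\omega^\dagger)^\top G\ge 0$, but be careful with the absolute value: $x/(1+\exp(x))$ is bounded \emph{above} by $\xi_{\max}$ and unbounded below, so $\abs{x/(1+\exp(x))}\le\xi_{\max}$ fails for negative arguments. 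Consequently, when $(\omega^\dagger)^\top G<0$ the inner-product argument alone does not yield $|D|\ge\ceil{\abs{(\omega^\dagger)^\top G}/\xi_{\max}}$, and ruling out a smaller dataset requires an additional ingredient such as the feature-norm constraint $\norm{\phi(\tau)-\phi(\tau')}\le 2/(1-\gamma)$, which caps how negative $o\,(\omega^\dagger)^\top(\phi(\tau)-\phi(\tau'))$ can be. This caveat applies equally to the paper's (unproved) optimality claim, so it is a gap in the lemma rather than in your reasoning specifically, but your write-up should state the sign restriction explicitly.
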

\begin{proof}
    First, note that the second case, when $\overline{D}=\emptyset$, is a direct consequence of Theorem \ref{thm:logistic_regression_teaching}. We thus focus on the general case when $\overline{D}\neq\emptyset$. 

    The first-order condition of our problem can be written as
\begin{align*}
     \sum_{(\tau,\tau',o)\in\overline{D}}\frac{-o\left(\phi(\tau)-\phi(\tau')\right)}{1+\exp\left(o(\omega^\dagger)^\top\left(\phi(\tau)-\phi(\tau')\right)\right)} + \sum_{(\tau,\tau',o)\in D}\frac{o\left(\phi(\tau)-\phi(\tau')\right)}{1+\exp\left(o(\omega^\dagger)^\top\left(\phi(\tau)-\phi(\tau')\right)\right)} + \lambda\omega = \mathbf{0}~.
\end{align*}
We can equivalently write the above as 
\begin{align*}
    \sum_{(\tau,\tau',o)\in D}\frac{o(\phi(\tau)-\phi(\tau'))}{1+\exp\left(o(\omega^\dagger)^\top(\phi(\tau)-\phi(\tau'))\right)} + \nabla_\omega\ell^{\omega^\dagger}_\textnormal{RLHF}(\cleandata) = \mathbf{0}~.
\end{align*}
Recall that we have defined
\begin{align*}
    \xi_\textnormal{max}:=\max_t \frac{t}{1+\exp(t)}~.
\end{align*}
Let us denote by $\xi^{-1}(a)$ the solution to $a=x/(1+\exp(x))$, for $a\leq\xi_{\max}$. Such a solution exists and can be written in closed form \citep{DBLP:journals/jmlr/LiuZ16} as
\begin{align*}
    \xi^{-1}(a) = a-W_\textnormal{Lam}(-a\exp(a))~,
\end{align*}
for every $a\leq \xi_{\max}$, where $W_\textnormal{Lam}$ denotes the Lambert W function. Now,
let $\widehat{D}$ be a preference dataset of 
\begin{align*}
    \widehat{n}:=\left\lceil \frac{\left|\nabla_\omega \ell^\omega_\textnormal{RLHF}(\overline{D})^\top\omega^\dagger\right|}{\xi_{\max}}\right\rceil
\end{align*}
identical samples satisfying
\begin{align*}
    \phi(\tau)-\phi(\tau') = \xi^{-1}\left( \frac{(\omega^\dagger)^\top\nabla_\omega\ell^\omega_\textnormal{RLHF}(\overline{D})}{\left\lceil \frac{\left|\nabla_\omega \ell^\omega_\textnormal{RLHF}(\overline{D})^\top\omega^\dagger\right|}{\xi_{\max}}\right\rceil}\right)\frac{\omega^\dagger}{\norm{\omega^\dagger}^2}, \;\; o=1~.
\end{align*}
Note that the first-order condition with respect to $\widehat{D}$ yields
\begin{align*}
    & \sum_{(\tau,\tau',o)\in \widehat{D}} \frac{-o(\phi(\tau)-\phi(\tau'))}{1+\exp\left(o(\omega^\dagger)^\top(\phi(\tau)-\phi(\tau'))\right)} + \nabla_\omega \ell^\omega_\textnormal{RLHF}(\overline{D}) \\ & = -\widehat{n}\frac{\xi^{-1}\left((\omega^\dagger)^\top \nabla_\omega \ell^\omega_\textnormal{RLHF}(\overline{D})\left\lceil\frac{\left|(\omega^\dagger)^\top \nabla_\omega \ell^\omega_\textnormal{RLHF}(\overline{D})\right|}{\xi_{\max}}\right\rceil^{-1}\right)}{1+\exp\left(\xi^{-1}\left((\omega^\dagger)^\top \nabla_\omega \ell^\omega_\textnormal{RLHF}(\overline{D})\left\lceil\frac{\left|(\omega^\dagger)^\top \nabla_\omega \ell^\omega_\textnormal{RLHF}(\overline{D})\right|}{\xi_{\max}}\right\rceil^{-1}\right)\right)}\cdot\frac{\nabla_\omega \ell^\omega_\textnormal{RLHF}(\overline{D})}{(\omega^\dagger)^\top \nabla_\omega \ell^\omega_\textnormal{RLHF}(\overline{D})} + \nabla_\omega \ell^\omega_\textnormal{RLHF}(\overline{D}) \\
        & = -\widehat{n} (\omega^\dagger)^\top \nabla_\omega \ell^\omega_\textnormal{RLHF}(\overline{D}) \left\lceil\frac{\left|(\omega^\dagger)^\top \nabla_\omega \ell^\omega_\textnormal{RLHF}(\overline{D})\right|}{\xi_{\max}}\right\rceil^{-1} \frac{\nabla_\omega \ell^\omega_\textnormal{RLHF}(\overline{D})}{(\omega^\dagger)^\top \nabla_\omega \ell^\omega_\textnormal{RLHF}(\overline{D})} + \nabla_\omega \ell^\omega_\textnormal{RLHF}(\overline{D}) \\
    & = \mathbf{0}~,
\end{align*}
where the penultimate equality is due to the fact that 
\begin{align*}
    \omega^\top \nabla_\omega \ell^\omega_\textnormal{RLHF}(\overline{D}) \left\lceil\frac{(\omega^\dagger)^\top \nabla_\omega \ell^\omega_\textnormal{RLHF}(\overline{D})}{\xi_{\max}}\right\rceil^{-1} \leq \xi_{\max}
\end{align*}
and the property of $\xi^{-1}(\cdot)$; the last equality follows by the definition of $\widehat{n}$. Since the function $\ell^\omega_\textnormal{RLHF}(D)$ is strongly convex, the first-order condition is enough to determine the optimal solution.
\end{proof}
Similarly, we prove an analogous result for DPO. The difference here is that the problem of interest is not a homogeneous logistic regression anymore, due to the presence of $\theta_\mu$. 

\begin{lemma}\label{lem:attack_subproblem_dpo}
    Let $\theta^\dagger, \theta_\mu \in\mathbb{R}^{d'}$ for some reference policy $\mu$ and let $\overline{D}=\{(\tau,\tau',o)\}$ be a given preference dataset of $n$ samples. Consider the problem
    \begin{align*}
        \min_D |D| \;\; \textnormal{such that} \;\; \theta^\dagger = \arg\min_\theta \ell^\theta_\textnormal{DPO}\left(\overline{D}\cup D\right)~.
    \end{align*}
    Then, the solution to the above problem is the dataset of 
    \begin{align*}
        2\ceil{\frac{\abs{(\nabla_\theta\ell^{{\theta^\dagger}}_\textnormal{DPO}(\cleandata))^\top({\theta^\dagger}-\theta_\mu)}}{2\xi_{\max}}}
    \end{align*}
    identical samples satisfying
    \begin{align*}
        \beta\br{\theta^\dagger-\theta_\mu}^\top\br{\psi (s,a)-\psi(s,a')} = o \cdot \xi^{-1}\br{\frac{(\nabla_\theta\ell^{{\theta^\dagger}}_\textnormal{DPO}(\cleandata))^\top\br{{\theta^\dagger}-\theta_\mu}}{2\ceil{\frac{\abs{(\nabla_\theta\ell^{{\theta^\dagger}}_\textnormal{DPO}(\cleandata))^\top({\theta^\dagger}-\theta_\mu)}}{2\xi_{\max}}}}}~,
    \end{align*}
    with $o=1$ for half the samples and $o=-1$ for the remaining samples. Moreover, if $\overline{D}\neq \emptyset$, the solution of the problem is the dataset of 
    \begin{align*}
        2\ceil{\frac{\lambda\norm{\theta^\dagger-\theta_\mu}^2}{2\xi_{\max}}}
    \end{align*}
    identical samples satisfying
    \begin{align*}
        \beta\br{{\theta^\dagger}-\theta_\mu}^\top\br{\psi (s,a)-\psi(s,a')} = o \cdot \xi^{-1}\br{\frac{\lambda\norm{\theta^\dagger-\theta_\mu}^2}{2\ceil{\frac{\lambda\norm{\theta^\dagger-\theta_\mu}^2}{2\xi_{\max}}}}}~,
    \end{align*}
    with $o=1$ for half the samples and $o=-1$ for the remaining samples.
\end{lemma}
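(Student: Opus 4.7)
My approach parallels Lemma \ref{lem:attack_subproblem_rlhf}, specialized to the DPO loss via the loglinear reparametrization. By Lemma \ref{lem:loglinear_dpo_loss}, the DPO objective on any dataset $D$ rewrites as
\[
\ell^\theta_\textnormal{DPO}(D) = \sum_{(s,a,a',o)\in D} \log\!\bigl(1+\exp(-o\beta(\theta-\theta_\mu)^\top(\psi(s,a)-\psi(s,a')))\bigr) + \tfrac{\lambda}{2}\norm{\theta-\theta_\mu}^2,
\]
which is a regularized logistic regression in the shifted variable $\theta-\theta_\mu$. Because $\ell^\theta_\textnormal{DPO}$ is $\lambda$-strongly convex in $\theta$, the first-order condition $\nabla_\theta\ell^{\theta^\dagger}_\textnormal{DPO}(\overline{D}\cup D)=\mathbf{0}$ is necessary and sufficient for $\theta^\dagger$ to be the unique minimizer. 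Splitting the gradient over $\overline{D}$ and $D$ and attributing the regularizer entirely to the $\overline{D}$ piece, the appended samples must satisfy
\[
\sum_{(s,a,a',o)\in D} \frac{-o\beta\,(\psi(s,a)-\psi(s,a'))}{1+\exp\!\bigl(o\beta(\theta^\dagger-\theta_\mu)^\top(\psi(s,a)-\psi(s,a'))\bigr)} = -g, \quad g:=\nabla_\theta\ell^{\theta^\dagger}_\textnormal{DPO}(\overline{D}).
\]

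I would then propose the balanced construction. Pick a canonical feature difference $v_+ := \psi(s,a)-\psi(s,a')$ parallel to $g$ and fill $D$ with $N/2$ copies of $(s,a,a',+1)$ together with $N/2$ copies of the swapped pair $(s,a',a,-1)$, so that $v_-=-v_+$. A direct computation shows that both halves contribute the identical per-sample gradient $-\beta v_+/\bigl(1+\exp(\beta(\theta^\dagger-\theta_\mu)^\top v_+)\bigr)$, so the total equals $N$ times this vector. Setting this sum equal to $-g$ and projecting onto $(\theta^\dagger-\theta_\mu)$ reduces the vector equation to the scalar identity $y/(1+\exp(y)) = (\theta^\dagger-\theta_\mu)^\top g/N$ with $y:=\beta(\theta^\dagger-\theta_\mu)^\top v_+$; since $v_+\parallel g$, the scalar equation also pins down the vector equation. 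Inversion via $\xi^{-1}$ requires $|(\theta^\dagger-\theta_\mu)^\top g|/N\leq\xi_{\max}$, and the even-parity constraint on $N$ then forces $N = 2\lceil|(\theta^\dagger-\theta_\mu)^\top g|/(2\xi_{\max})\rceil$. Back-substituting $y=\xi^{-1}\bigl((\theta^\dagger-\theta_\mu)^\top g/N\bigr)$ into $\beta(\theta^\dagger-\theta_\mu)^\top v_\pm = \pm y$ recovers precisely the displayed equation for $\psi(s,a)-\psi(s,a')$, and realizability of such $v_+$ and its swap is guaranteed by the attacker synthesis assumption from Section~\ref{sec:pref-poisoning-attack}.

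The empty-data specialization is immediate: for $\overline{D}=\emptyset$, $g=\lambda(\theta^\dagger-\theta_\mu)$, hence $(\theta^\dagger-\theta_\mu)^\top g=\lambda\norm{\theta^\dagger-\theta_\mu}^2$, and the displayed counts and sample equation collapse to the claimed forms. The main obstacle I anticipate is the optimality direction --- certifying that no smaller $|D|$ succeeds even if one abandons the symmetric pairing. I would argue as follows: projecting the required vector first-order condition onto $(\theta^\dagger-\theta_\mu)$ shows that each sample contributes at most $\xi_{\max}$ in absolute value to the scalar budget $|(\theta^\dagger-\theta_\mu)^\top g|$, since $\max_x|x/(1+\exp(x))|=\xi_{\max}$; additionally, matching $-g$ as a \emph{vector} (not just in projection) while keeping $\xi^{-1}$ confined to its monotone branch $(-\infty,x^\star]$ forces the samples to come in opposite-sign pairs, which justifies the factor-of-two rounding. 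Conceptually, this is where DPO's dependence on the direction $\theta^\dagger-\theta_\mu$, rather than on the single variable $\omega^\dagger$ as in RLHF, introduces the extra symmetry requirement absent from Lemma~\ref{lem:attack_subproblem_rlhf}.
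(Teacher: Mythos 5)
Your proposal follows essentially the same route as the paper: reduce to a shifted regularized logistic regression via Lemma~\ref{lem:loglinear_dpo_loss}, use strong convexity to make the first-order condition sufficient, and construct a balanced set of samples whose per-sample gradients cancel $\nabla_\theta\ell^{\theta^\dagger}_\textnormal{DPO}(\cleandata)$. Your specific construction (feature differences parallel to $g:=\nabla_\theta\ell^{\theta^\dagger}_\textnormal{DPO}(\cleandata)$ with $v_-=-v_+$ realized by swapping the response pair and flipping the label) is exactly the special case of the paper's construction in which $\psi_+$ is chosen parallel to $g$; your observation that the two halves are gradient-identical is correct and makes the cancellation computation cleaner than the paper's, which uses a generic $\psi_+$ and defines $\psi_-$ by subtracting a multiple of $g$.

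One caveat on your minimality sketch (which the paper does not attempt at all in this lemma, relying instead on the cited teaching result for the empty-data case): the claim $\max_x\abs{x/(1+\exp(x))}=\xi_{\max}$ is false, since $x/(1+\exp(x))\to-\infty$ as $x\to-\infty$; only the one-sided bound $x/(1+\exp(x))\leq\xi_{\max}$ holds. Consequently the per-sample budget argument lower-bounds $\abs{D}$ only when $(\theta^\dagger-\theta_\mu)^\top g>0$ (which does hold in the empty-data case, where $g=\lambda(\theta^\dagger-\theta_\mu)$), and the claim that the vector-matching constraint forces opposite-sign pairs is not justified --- as your own computation shows, the swapped-and-relabeled samples are loss-identical to the originals, so an all-$o=1$ dataset of the same size would satisfy the first-order condition equally well; the pairing and the factor-of-two rounding are a convention, not a necessity.
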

\begin{proof}
    We prove the general case. The second case follows directly from the fact that, if $\overline{D}=\emptyset$, we have
    \begin{align*}
        \nabla_\theta\ell^{{\theta^\dagger}}_\textnormal{DPO}(\cleandata) = \lambda\norm{\theta^\dagger-\theta_\mu}^2~.
    \end{align*}
    First, note that, for loglinear $\mu$ with parameter $\theta_\mu$, the optimization problem of interest becomes
    \begin{align*}
        \min_{D}\; & |D|\\
        \textnormal{s.t}\; & {\theta^\dagger} = \arg\min_{\theta} \sum_{(s,a,a',o)\in D\cup \cleandata}\log\left( 1 + \exp\left( -o\cdot \beta (\theta-\theta_\mu)^\top\left( \psi(s,a)-\psi(s,a')\right)\right)\right) + \frac{\lambda}{2}\norm{\theta-\theta_\mu}^2~.
    \end{align*}
    The first-order condition of the above can be written as
    \begin{align*}
        \sum_{(s,a,a',o)\in D}\frac{-\beta\left( \psi(s,a)-\psi(s,a')\right)}{ 1 + \exp\left( o\cdot \beta (\theta^\dagger-\theta_\mu)^\top\left( \psi(s,a)-\psi(s,a')\right)\right)} + \nabla_\theta\ell^{\theta^\dagger}_\textnormal{DPO}(\overline{D}) = \mathbf{0}~. 
    \end{align*}
    Now let us consider the following construction. Let
    \begin{align*}
        \widehat{n} = 2\left\lceil\frac{\left|\nabla_\theta\ell^{\theta^\dagger}_\textnormal{DPO}(\overline{D})^\top\left(\theta^\dagger-\theta_\mu\right)\right|}{2\xi_{\max}}\right\rceil~.
    \end{align*}
    For every $1\leq i\leq \widehat{n}/2$, let $\psi_+$ be such that
    \begin{align*}
        \beta\left(\theta^\dagger-\theta_\mu\right)^\top\psi_+ = z~,
    \end{align*}
    where
    \begin{align*}
        z = \xi^{-1}\left(\ \frac{\nabla_\theta\ell^{\theta^\dagger}_\textnormal{DPO}(\overline{D})^\top\left(\theta^\dagger-\theta_\mu\right)}{\widehat{n}}\right)~,
    \end{align*}
    and for every $\widehat{n}/2 + 1\leq j\leq \widehat{n}$, let
    \begin{align*}
        \psi_- = \psi_+ -\frac{2 z}{\beta\nabla_\theta\ell^{\theta^\dagger}_\textnormal{DPO}(\overline{D})^\top\left(\theta^\dagger-\theta_\mu\right)}\nabla_\theta\ell^{\theta^\dagger}_\textnormal{DPO}(\overline{D})~.
    \end{align*}
    Note that
    \begin{align*}
        \beta\left(\theta^\dagger -\theta_\mu\right)^\top\psi_- = -z~.
    \end{align*}
    Using dataset $\widehat{D}=\left\{ (s_i,a_i,a'_i,o_i)\right\}^{\widehat{n}}_{i=1}$ such that $\psi(s_i,a_i)-\psi(s_i,a'_i)=\psi_+$ and $o_i=1$, for all $1\leq i\leq \widehat{n}/2$, and $\psi(s_i,a_i)-\psi(a_i,s'_i)=\psi_-$ and $o_i=-1$, for all $\widehat{n}/2+1\leq j\leq \widehat{n}$, we consider the first-order condition of our problem:
    \begin{align*}
        & \frac{-\beta \widehat{n}}{2}\cdot \frac{1}{1+\exp\left(\beta\left(\theta^\dagger-\theta_\mu\right)^\top\psi_+\right)}\psi_+ + \frac{\beta \widehat{n}}{2}\cdot\frac{1}{1+\exp\left(-\beta\left(\theta^\dagger-\theta_\mu\right)^\top\psi_-\right)}\psi_- + \nabla_\theta\ell^{\theta^\dagger}_\textnormal{DPO}(\overline{D}) \\
            & = \frac{-\beta \widehat{n}}{2}\cdot\frac{1}{1+\exp\left( z\right)}\psi_+ + \frac{\beta \widehat{n}}{2}\cdot\frac{1}{1+\exp\left(z\right)}\left(\psi_+ - \frac{2 z}{\beta\nabla_\theta\ell^{\theta^\dagger}_\textnormal{DPO}(\overline{D})^\top\left(\theta^\dagger-\theta_\mu\right)}\nabla_\theta\ell^{\theta^\dagger}_\textnormal{DPO}(\overline{D})\right) + \nabla_\theta\ell^{\theta^\dagger}_\textnormal{DPO}(\overline{D})\\
        & = -\widehat{n}\cdot\frac{z}{1+\exp(z)}\left( \frac{1}{\nabla_\theta\ell^{\theta^\dagger}_\textnormal{DPO}(\overline{D})^\top\left(\theta^\dagger-\theta_\mu\right)}\nabla_\theta\ell^{\theta^\dagger}_\textnormal{DPO}(\overline{D})\right) + \nabla_\theta\ell^{\theta^\dagger}_\textnormal{DPO}(\overline{D})\\
            & = -\widehat{n}\cdot\frac{ \xi^{-1}\left(\ \frac{\nabla_\theta\ell^{\theta^\dagger}_\textnormal{DPO}(\overline{D})^\top\left(\theta^\dagger-\theta_\mu\right)}{\widehat{n}}\right)}{1+\exp\left( \xi^{-1}\left(\ \frac{\nabla_\theta\ell^{\theta^\dagger}_\textnormal{DPO}(\overline{D})^\top\left(\theta^\dagger-\theta_\mu\right)}{\widehat{n}}\right)\right)}\left( \frac{1}{\nabla_\theta\ell^{\theta^\dagger}_\textnormal{DPO}(\overline{D})^\top\left(\theta^\dagger-\theta_\mu\right)}\nabla_\theta\ell^{\theta^\dagger}_\textnormal{DPO}(\overline{D})\right) + \nabla_\theta\ell^{\theta^\dagger}_\textnormal{DPO}(\overline{D})\\
        & = -\widehat{n}\cdot  \ \frac{\nabla_\theta\ell^{\theta^\dagger}_\textnormal{DPO}(\overline{D})^\top\left(\theta^\dagger-\theta_\mu\right)}{\widehat{n}}\left( \frac{1}{\nabla_\theta\ell^{\theta^\dagger}_\textnormal{DPO}(\overline{D})^\top\left(\theta^\dagger-\theta_\mu\right)}\nabla_\theta\ell^{\theta^\dagger}_\textnormal{DPO}(\overline{D})\right) + \nabla_\theta\ell^{\theta^\dagger}_\textnormal{DPO}(\overline{D})\\
            & = \mathbf{0}~.
    \end{align*}
    Strong convexity of $\ell^\theta_\textnormal{DPO}(D)$ (see Lemma \ref{lem:logistic_regression_properties}) implies that the first-order condition is enough to guarantee optimality. 
\end{proof}

Next, we derive the DPO loss for loglinear policies and reference policy.

\begin{lemma}\label{lem:loglinear_dpo_loss}
    The DPO loss for loglinear policy parametrization and loglinear reference policy $\mu$ can be written as
    \begin{align*}
        \ell^\theta_\textnormal{DPO}(D) = \sum_{(s,a,a',o)\in D} \log\left(1+\exp\left( -o\beta \left(\theta-\theta_\mu\right)^\top\left(\psi(s,a)-\psi(s,a')\right)\right)\right) + \frac{\lambda}{2}\norm{\theta-\theta_\mu}^2~.
    \end{align*}
\end{lemma}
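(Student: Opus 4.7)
The plan is to perform a direct substitution using the loglinear form of both $\pi_\theta$ and $\mu$, and then exploit the cancellation of the partition functions. Starting from the definition of the DPO loss in Problem \ref{eq:dpo_loss}, the only term that needs rewriting is the log-ratio $\log(\pi_\theta(a|s)/\mu(a|s)) - \log(\pi_\theta(a'|s)/\mu(a'|s))$, since the regularization term $\tfrac{\lambda}{2}\|\theta-\theta_\mu\|^2$ already has the claimed form.

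For the first step, I would write out $\log \pi_\theta(a|s) = \psi(s,a)^\top \theta - \log Z_\theta(s)$, where $Z_\theta(s) = \sum_{a''}\exp(\psi(s,a'')^\top \theta)$ is the partition function of the loglinear family, and analogously $\log \mu(a|s) = \psi(s,a)^\top \theta_\mu - \log Z_{\theta_\mu}(s)$ since $\mu \in \Pi^\textnormal{log}$ with parameter $\theta_\mu$. Subtracting gives $\log(\pi_\theta(a|s)/\mu(a|s)) = \psi(s,a)^\top(\theta - \theta_\mu) + \log Z_{\theta_\mu}(s) - \log Z_\theta(s)$.

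The key observation is that the partition terms $\log Z_\theta(s) - \log Z_{\theta_\mu}(s)$ depend only on the context $s$, not on the action, so when we form the difference over the two actions $a$ and $a'$ associated with the same context $s$ in the DPO argument, they cancel exactly. This yields
\begin{equation*}
\log\frac{\pi_\theta(a|s)}{\mu(a|s)} - \log\frac{\pi_\theta(a'|s)}{\mu(a'|s)} = (\theta - \theta_\mu)^\top(\psi(s,a) - \psi(s,a')).
\end{equation*}
Finally, I would apply the elementary identity $-\log \sigma(x) = \log(1 + \exp(-x))$ to each term of the sum, with $x = o \cdot \beta (\theta-\theta_\mu)^\top (\psi(s,a)-\psi(s,a'))$, to arrive at the claimed expression. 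There is no real obstacle here: the whole argument rests on the cancellation of the state-dependent partition functions, which is exactly the reason the contextual bandit formulation of DPO reduces cleanly to a logistic-regression-style loss in the $(\theta - \theta_\mu)$ parameter.
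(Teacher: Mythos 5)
Your proposal is correct and matches the paper's proof: both expand $\log\pi_\theta$ and $\log\mu$ using the loglinear form, observe that the state-dependent partition functions cancel in the difference over the two actions sharing the same context, and conclude via $-\log\sigma(x)=\log(1+\exp(-x))$. No gaps.
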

\begin{proof}
    Note that 
    \begin{align*}
        & \ell^\theta_\textnormal{DPO}\left( D\right) =- \sum_{(\tau,\tau',o)\in D} \log \sigma \br{o\cdot\br{\beta \log \frac{\pi_\theta(a|s)}{\mu(a|s)} - \beta\log\frac{\pi_\theta(a'|s)}{\mu(a'|s)}}} + \frac{\lambda}{2}\norm{\theta-\theta_\mu}^2\\
            & = \sum_{(\tau,\tau',o)\in D} \log \left(1+\exp\br{-o\cdot\br{\beta \log \frac{\pi_\theta(a|s)}{\mu(a|s)} - \beta\log\frac{\pi_\theta(a'|s)}{\mu(a'|s)}}} \right) + \frac{\lambda}{2}\norm{\theta-\theta_\mu}^2\\
        & = \sum_{(s,a,a',o)\in D} \log \left(1+\exp\br{-o\beta \log \frac{\pi_\theta(a|s)\mu(a'|s)}{\mu(a|s)\pi_\theta(a'|s)}} \right) + \frac{\lambda}{2}\norm{\theta-\theta_\mu}^2\\
            & = \sum_{(s,a,a',o)\in D} \log \Bigg(1+\exp\Bigg(-o\beta  \\ &   \log \frac{\exp(\theta^\top\psi(s,a))\exp(\theta_\mu^\top\psi(s,a'))\sum_{a''}\exp(\theta_\mu^\top\psi(s,a''))\sum_{a''}\exp(\theta^\top\psi(s,a''))}{\exp(\theta^\top_\mu\psi(s,a'))\exp(\theta^\top\psi(s,a'))\sum_{a''}\exp(\theta_\mu^\top\psi(s,a''))\sum_{a''}\exp(\theta^\top\psi(s,a''))} \Bigg)\Bigg) + \frac{\lambda}{2}\norm{\theta-\theta_\mu}^2\\
        & = \sum_{(s,a,a',o)\in D} \log\left(1+\exp\left( -o\beta \left(\theta-\theta_\mu\right)^\top\left(\psi(s,a)-\psi(s,a')\right)\right)\right) + \frac{\lambda}{2}\norm{\theta-\theta_\mu}^2~.
    \end{align*}
\end{proof}

Next, we prove that the KL constraints are convex for loglinear policies.
\begin{lemma}\label{lem:convex-kl-constraint}
    Let $\pi_\theta\in\Pi^\textnormal{log}$ be a loglinear policy with respect to feature mapping $\psi$. Then, $D_\textnormal{KL}(\pi^\dagger||\pi_\theta)$ is convex.
\end{lemma}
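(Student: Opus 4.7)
The plan is to expand the KL divergence, isolate the $\theta$-dependent part, and identify it as a non-negative weighted combination of log-sum-exp functions (plus a linear term), each of which is convex in $\theta$.

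More precisely, I would first write
\begin{align*}
    D_\textnormal{KL}(\pi^\dagger||\pi_\theta) = \sum_{s,a} \rho(s)\pi^\dagger(a|s)\log\pi^\dagger(a|s) - \sum_{s,a}\rho(s)\pi^\dagger(a|s)\log\pi_\theta(a|s)~.
\end{align*}
The first term does not depend on $\theta$, so it suffices to show that $\theta \mapsto -\log\pi_\theta(a|s)$ is convex for every fixed $(s,a)$, since a non-negative weighted sum (weights $\rho(s)\pi^\dagger(a|s) \geq 0$) of convex functions is convex.

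Next, plugging in the loglinear form from Definition~\ref{def:loglinear_policies}, I would write
\begin{align*}
    -\log\pi_\theta(a|s) = -\psi(s,a)^\top\theta + \log\sum_{a'}\exp\bigl(\psi(s,a')^\top\theta\bigr)~.
\end{align*}
The first summand is linear (hence convex) in $\theta$. The second summand is the composition of the log-sum-exp function with a linear map in $\theta$, so convexity follows from the standard fact that log-sum-exp is convex. The one step that bears an explicit check is this standard fact: defining $f(\theta) = \log\sum_{a'}\exp(\psi(s,a')^\top\theta)$ and $p_\theta(a'|s) = \exp(\psi(s,a')^\top\theta)/\sum_{a''}\exp(\psi(s,a'')^\top\theta)$, a direct calculation gives
\begin{align*}
    \nabla^2_\theta f(\theta) = \sum_{a'} p_\theta(a'|s)\psi(s,a')\psi(s,a')^\top - \Bigl(\sum_{a'}p_\theta(a'|s)\psi(s,a')\Bigr)\Bigl(\sum_{a'}p_\theta(a'|s)\psi(s,a')\Bigr)^\top~,
\end{align*}
which is the covariance matrix of $\psi(s,\cdot)$ under the distribution $p_\theta(\cdot|s)$ and is therefore positive semidefinite.

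There is no real obstacle here; the only subtlety is bookkeeping the sign when passing from $\log\pi_\theta$ (concave in $\theta$) to $-\log\pi_\theta$ (convex in $\theta$), and recognizing that the $\theta$-independent entropy term of $\pi^\dagger$ drops out of any convexity argument. Combining the three observations---linear plus log-sum-exp is convex, non-negative weighted sums preserve convexity, and additive constants preserve convexity---yields convexity of $\theta \mapsto D_\textnormal{KL}(\pi^\dagger\|\pi_\theta)$, completing the proof.
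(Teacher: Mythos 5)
Your proof is correct and follows essentially the same route as the paper: both arguments reduce to showing that the Hessian of the log-partition term $\sum_s\rho(s)\log\sum_{a'}\exp(\psi(s,a')^\top\theta)$ is the ($\rho$-weighted) covariance matrix of the features under $\pi_\theta(\cdot|s)$, hence positive semidefinite. The only difference is organizational — you decompose termwise and invoke convexity of log-sum-exp composed with a linear map, while the paper computes the gradient and Hessian of the full KL directly — and the two computations coincide.
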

\begin{proof}
    Using the gradient of $\pi_\theta$ from the proof of Lemma \ref{lem:gradient_rlhf_mdp}, we have
    \begin{align*}
        \nabla_\theta D_\textnormal{KL}(\pi^\dagger||\pi_\theta) & = \nabla_\theta\sum_{s,a}\rho(s)\pi^\dagger(a|s)\left(\log\pi^\dagger(a|s)-\log\pi_\theta(a|s)\right)\\
            & = \nabla_\theta\left(\sum_s\rho(s)\log\sum_a\exp\left(\theta^\top\psi(s,a)\right) - \sum_{s,a}\rho(s)\pi^\dagger(a|s)\theta^\top\psi(s,a)\right)\\
        & = \sum_{s,a}\rho(s)\pi_\theta(a|s)\psi(s,a) - \sum_{s,a}\rho(s)\pi^\dagger(a|s)\psi(s,a)~.
    \end{align*}
    Furthermore, note that for the Hessian we have
    \begin{align*}
        & \nabla^2_\theta D_\textnormal{KL}(\pi^\dagger||\pi_\theta) = \nabla_\theta\left(\sum_{s,a}\rho(s)\pi_\theta(a|s)\psi(s,a) - \sum_{s,a}\rho(s)\pi^\dagger(a|s)\psi(s,a)\right)\\
            & = \sum_{s,a}\rho(s)\pi_\theta(a|s)\left(\psi(s,a)-\sum_{a'}\pi_\theta(a'|s)\psi(s,a')\right)\psi(s,a)^\top\\
        & = \sum_{s}\rho(s)\left(\sum_a \pi_\theta(a|s)\psi(s,a)\psi(s,a)^\top -\left(\sum_a \pi_\theta(a|s)\psi(s,a)\right)\left(\sum_a \pi_\theta(a|s)\psi(s,a)\right)^\top\right)\\
            & = \sum_s\rho(s) \expp_{a\sim\pi_\theta(\cdot|s)}\left[\left(\psi(s,a)-\expp_{a'\sim\pi_\theta(\cdot|s)}\left[\psi(s,a')\right]\right)\left(\psi(s,a)-\expp_{a'\sim\pi_\theta(\cdot|s)}\left[\psi(s,a')\right]\right)^\top\right]\\
        & \succeq 0~.
    \end{align*}
\end{proof}

Next, we provide the necessary condition for the constructed dataset to satisfy the feature conditions.

\begin{lemma}\label{lem:bounded_feature_norm} 
    Let $\omega_0$ be a given parameter and $f$ be an arbitrary function of $\omega_0$. Assume that $\gamma$ is such that 
    \begin{align*}
        1-\gamma \leq \frac{2\norm{\omega_0}}{\xi_{\max}+1}~.
    \end{align*}
    Then, if there exist trajectory pairs $(\tau,\tau')$, for which
    \begin{align*}
        \phi(\tau)-\phi(\tau') = \xi^{-1}\left(f(\omega_0)\ceil{\frac{f(\omega_0)}{\xi_{\max}}}^{-1}\right)\frac{\omega_0}{\norm{\omega_0}}~,
    \end{align*}
    they are feasible in the feature space, in the sense that they satisfy $\norm{\phi(\tau)}_2,\norm{\phi(\tau')}_2\leq 1$.
\end{lemma}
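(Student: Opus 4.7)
The plan is to reduce feasibility to a scalar bound on $\xi^{-1}$ and then use the hypothesis on $\gamma$ to match this bound against the geometry of trajectory features. Writing $a := f(\omega_0)\,\lceil |f(\omega_0)|/\xi_{\max}\rceil^{-1}$, the definition of the ceiling guarantees $|a|\leq\xi_{\max}$, so the task reduces to bounding $|\xi^{-1}(a)|$ uniformly on $[-\xi_{\max},\xi_{\max}]$ and comparing the resulting bound on the norm of the prescribed feature-difference vector to the maximum realisable such norm, namely $2/(1-\gamma)$ (obtained from $\norm{\phi(\tau)}\leq 1/(1-\gamma)$ via the geometric series $\sum_t \gamma^t \norm{\phi(s_t,a_t)}\leq 1/(1-\gamma)$ under the assumption $\max_{s,a}\norm{\phi(s,a)}\leq 1$).

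The key technical step is the sharp estimate $|\xi^{-1}(a)|\leq \xi_{\max}+1$ on $[-\xi_{\max},\xi_{\max}]$. To obtain it I would first identify the maximizer $x^*$ of $\xi$: the stationarity condition $\xi'(x^*)=0$ is equivalent to $\exp(x^*)(x^*-1)=1$, and substituting this back into $\xi_{\max}=x^*/(1+\exp(x^*))$ collapses to the clean identity $x^*=\xi_{\max}+1$. Since $\xi$ is strictly increasing on $(-\infty,x^*]$, for $a\in[0,\xi_{\max}]$ we get directly $0\leq \xi^{-1}(a)\leq x^*=\xi_{\max}+1$. For $a\in[-\xi_{\max},0]$, the elementary inequality $1<1+\exp(x)<2$ for $x<0$ yields $\xi(x)<x/2$ (the direction flips because $x<0$), so $\xi^{-1}(a)>2a$ and hence $|\xi^{-1}(a)|<2|a|\leq 2\xi_{\max}$; since $\xi_{\max}<1$ this is again below $\xi_{\max}+1$.

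Putting the pieces together, the norm of the prescribed difference $\xi^{-1}(a)\,\omega_0/\norm{\omega_0}^2$ (the form actually used in the attack constructions of Lemmas~\ref{lem:attack_subproblem_rlhf} and~\ref{lem:attack_subproblem_dpo}; the displayed $\omega_0/\norm{\omega_0}$ in the statement should be read consistently with those formulas) is $|\xi^{-1}(a)|/\norm{\omega_0}\leq (\xi_{\max}+1)/\norm{\omega_0}$, and the hypothesis $1-\gamma\leq 2\norm{\omega_0}/(\xi_{\max}+1)$ rearranges exactly to $(\xi_{\max}+1)/\norm{\omega_0}\leq 2/(1-\gamma)$. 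Thus the prescribed difference has norm at most $2/(1-\gamma)$, which lies within the reachable set of trajectory-feature differences, so by the synthesis assumption a valid pair $(\tau,\tau')$ realising it exists with each $\phi(\tau),\phi(\tau')$ in the natural feature ball. The only mildly delicate obstacle is the constant $\xi_{\max}+1$: it comes from the stationarity identity $x^*=\xi_{\max}+1$ and is exactly what the hypothesis on $\gamma$ is calibrated against, which is why the assumption takes that otherwise opaque form.
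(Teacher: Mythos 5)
Your proof is correct and reaches the same conclusion by the same overall plan (reduce to a uniform bound on $\abs{\xi^{-1}(a)}$ for $\abs{a}\leq\xi_{\max}$, then compare $\abs{\xi^{-1}(a)}/\norm{\omega_0}$ against the maximal realisable difference norm $2/(1-\gamma)$), but your key technical step is genuinely different from the paper's and, in my view, cleaner. The paper establishes $\abs{\xi^{-1}(a)}\leq\xi_{\max}+1$ by writing $\xi^{-1}(a)=a-W_\textnormal{Lam}(-ae^{a})$ in closed form and invoking an external inequality for the Lambert $W$ function (Theorem 2.3 of Hoorfar--Hassani), treating the cases $a\geq 0$ and $a\leq 0$ separately and taking the worse of the two resulting constants, $\xi_{\max}+\abs{\log(1-\xi_{\max}e^{\xi_{\max}})}$ and $\xi_{\max}+1$; the constant $\xi_{\max}+1$ thus emerges somewhat opaquely as a maximum of two bounds. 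You instead observe that the stationarity condition $e^{x^*}(x^*-1)=1$ for the maximizer of $\xi$ collapses to the exact identity $x^*=\xi_{\max}+1$, so that on $[0,\xi_{\max}]$ monotonicity of $\xi$ on $(-\infty,x^*]$ gives $0\leq\xi^{-1}(a)\leq\xi_{\max}+1$ \emph{with equality attained at} $a=\xi_{\max}$, and your elementary estimate $\abs{\xi^{-1}(a)}<2\abs{a}\leq 2\xi_{\max}<\xi_{\max}+1$ handles the negative range. This is self-contained, avoids the Lambert $W$ machinery entirely, shows the constant in the hypothesis on $\gamma$ is tight rather than an artifact of slack bounds, and explains why the threshold takes the form $2\norm{\omega_0}/(\xi_{\max}+1)$. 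You also correctly flag that the displayed $\omega_0/\norm{\omega_0}$ in the statement must be read as $\omega_0/\norm{\omega_0}^2$ for consistency with the attack constructions (the paper's own proof silently makes the same correction when it divides by $\norm{\omega_0}$), and that the substantive conclusion is consistency with $\norm{\phi(\tau)}\leq 1/(1-\gamma)$ for trajectory features rather than the literal "$\leq 1$" in the statement. No gaps.
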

\begin{proof}
    First, we will consider the case when the range of $f$ is non-negative, i.e., $f(\omega_0)\geq 0$. For this case, note that 
    \begin{align*}
        0 \leq f(\omega_0)\left\lceil\frac{f(\omega_0)}{\xi_{\max}}\right\rceil^{-1} \leq \xi_{\max} = W_\textnormal{Lam}(1/e) < 0.3~,
    \end{align*}
    where we recall that 
    \begin{align*}
        \xi_{\max} = \max_t \frac{t}{1+\exp(t)}~.
    \end{align*}
    Now, note that we can write
    \begin{align}
        \xi^{-1}(a) = a - W_\textnormal{Lam}(-a\exp(a))~,\label{eq:lambert_01}
    \end{align}
    for any $a\geq -1/e$, since, if we let $t^*=a - W_\textnormal{Lam}(-a\exp(a))$, we obtain
    \begin{align*}
        \frac{t^*}{1+\exp(t^*)} = \frac{a - W_\textnormal{Lam}(-a\exp(a))}{1+\exp(a - W_\textnormal{Lam}(-a\exp(a)))} = \frac{a + a\exp(a)/\exp(a-W_\textnormal{Lam}(-a\exp(a)))}{1+\exp(a - W_\textnormal{Lam}(-a\exp(a)))} = a~.
    \end{align*}
    Our aim is to show that we can find pairs $(\tau,\tau')$ that satisfy
    \begin{align*}
        \phi(\tau)-\phi(\tau') = \xi^{-1}\left(f(\omega_0)\ceil{\frac{f(\omega_0)}{\xi_{\max}}}^{-1}\right)\frac{\omega_0}{\norm{\omega_0}}~.
    \end{align*}
    This amounts to showing the right-hand side of the above equation satisfies the feature boundedness, i.e., that it is consistent with the assumption that $\norm{\phi(\tau)}\leq 1/(1-\gamma)$, for all $\tau$. 
    
    To that end, we will derive upper bounds on the norm of the quantity on the right hand side of the equation above. First, we consider the $\xi^{-1}(\cdot)$ term. We have
    \begin{align}
        & \abs{\xi^{-1}\left(f(\omega_0)\ceil{\frac{f(\omega_0)}{\xi_{\max}}}^{-1}\right)} \nonumber\\ & = \abs{f(\omega_0)\ceil{\frac{f(\omega_0)}{\xi_{\max}}}^{-1} - W_\textnormal{Lam}\left(-f(\omega_0)\ceil{\frac{f(\omega_0)}{\xi_{\max}}}^{-1} \exp\left(f(\omega_0)\ceil{\frac{f(\omega_0)}{\xi_{\max}}}^{-1}\right)\right)}\label{eq:lambert_001} \\
            & \leq \abs{f(\omega_0)\ceil{\frac{f(\omega_0)}{\xi_{\max}}}^{-1}} + \abs{W_\textnormal{Lam}\left(-f(\omega_0)\ceil{\frac{f(\omega_0)}{\xi_{\max}}}^{-1} \exp\left(f(\omega_0)\ceil{\frac{f(\omega_0)}{\xi_{\max}}}^{-1}\right)\right)} \label{eq:lambert_002}\\
        & \leq \xi_{\max} + \abs{\log\left(1 - f(\omega_0)\ceil{\frac{f(\omega_0)}{\xi_{\max}}}^{-1} \exp\left(f(\omega_0)\ceil{\frac{f(\omega_0)}{\xi_{\max}}}^{-1}\right)\right)} \label{eq:lambert_003}\\
            & \leq \xi_{\max} + \abs{\log\left(1-\xi_{\max}\exp\left(\xi_{\max}\right)\right)}\label{eq:lambert_004}~,
    \end{align}
    where \eqref{eq:lambert_001} follows from \eqref{eq:lambert_01}; \eqref{eq:lambert_002} follows from the triangle inequality; \eqref{eq:lambert_003} follows from Theorem 2.3 of \citep{hoorfar2008inequalities}, where we let $y=1$; \eqref{eq:lambert_004} follows from the fact that the function $\log(1-xe^x)$ is negative and decreasing in the range $(0,\xi_{\max})$. 

    Since we should have 
    \begin{align*}
        \norm{\phi(\tau)-\phi(\tau')} \leq \norm{\phi(\tau)} + \norm{\phi(\tau')} \leq \frac{2}{1-\gamma}~,
    \end{align*}
    for the construction to be feasible in this case, we need
    \begin{align*}
        \norm{ \xi^{-1}\left(f(\omega_0)\left\lceil\frac{f(\omega_0)}{\xi_{\max}}\right\rceil^{-1}\right)\frac{\omega_0}{\norm{\omega_0}}} \leq \frac{\xi_{\max} + \abs{\log\left(1-\xi_{\max}\exp\left(\xi_{\max}\right)\right)}}{\norm{\omega_0}} \leq \frac{2}{1-\gamma}~.
    \end{align*}
    Next, let us now consider the case when $f(\omega_0) \leq 0$. In this case, the dataset construction uses the term
    \begin{align*}
        f(\omega_0)\ceil{\frac{\abs{f(\omega_0)}}{\xi_{\max}}}^{-1}~.
    \end{align*}
    Note that, since $f(\omega_0)\leq 0$, we have
    \begin{align*}
        -\xi_{\max} \leq f(\omega_0)\ceil{\frac{\abs{f(\omega_0)}}{\xi_{\max}}}^{-1} \leq -\frac{\abs{f(\omega_0)}\xi_{\max}}{\abs{f(\omega_0)}+\xi_{\max}} \leq 0~.
    \end{align*}
    In this case, we obtain
    \begin{align}
        \abs{\xi^{-1}\left(f(\omega_0)\ceil{\frac{f(\omega_0)}{\xi_{\max}}}^{-1}\right)}  & \leq \xi_{\max} + \abs{\log\left(1 - f(\omega_0)\ceil{\frac{f(\omega_0)}{\xi_{\max}}}^{-1} \exp\left(f(\omega_0)\ceil{\frac{f(\omega_0)}{\xi_{\max}}}^{-1}\right)\right)} \label{eq:lambert_005}\\
            & \leq \xi_{\max} + 1~,\label{eq:lambert_006}
    \end{align}
    where \eqref{eq:lambert_005} follows directly from \eqref{eq:lambert_003}, while \eqref{eq:lambert_006} follows from the fact that the maximum value of $\log(1-xe^x)$ in the interval $(-\infty,0)$ is upper bounded by $1$. 
    
    Thus, combining both cases, we obtain the condition
    \begin{align*}
        \max\left\{ \frac{\xi_{\max} + \abs{\log\left(1-\xi_{\max}\exp\left(\xi_{\max}\right)\right)}}{\norm{\omega_0}}, \frac{\xi_{\max}+1}{\norm{\omega_0}}\right\} = \frac{\xi_{\max}+1}{\norm{\omega_0}}\leq \frac{2}{1-\gamma}~.
    \end{align*}
    This finally implies that we should pick $\gamma$ such that 
    \begin{align*}
        1-\gamma \leq \frac{2\norm{\omega_0}}{\xi_{\max}+1}~.
    \end{align*}
    
\end{proof}

Next, we provide results characterizing the spectra of matrices relevant to our setting.
\begin{lemma}\label{lem:spectral_bounds}
    We have
    \begin{align*}
        \sigma_{\max}\left(M_{\pi^\dagger}\right) \leq \sqrt{SA}
    \end{align*}
    and
    \begin{align*}
        \sigma_{\max}\left(\Sigma_D\right) \leq \frac{1}{1-\gamma}~.
    \end{align*}
\end{lemma}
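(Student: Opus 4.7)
The plan for both inequalities is to reduce the spectral norm to a sum of squared norms of rank-one constituents, and then to bound each constituent via the feature bound $\max_{s,a}\|\phi(s,a)\|_2 \leq 1$ together with the fact that the occupancy-style measures $d^\pi_\rho$ behave like (sub-)probability distributions over $\mathcal{S}$.

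For the first bound, I would invoke the Frobenius-norm majorization
$\sigma_{\max}^2(M_{\pi^\dagger}) \leq \|M_{\pi^\dagger}\|_F^2 = \operatorname{Tr}(M_{\pi^\dagger} M_{\pi^\dagger}^\top) = \sum_{(s,a):\, a\neq \pi^\dagger(s)} \|v_{s,a}\|_2^2$,
where
\[
v_{s,a} \;=\; \sum_{s'} d^{\pi^\dagger}_\rho(s')\,\phi\bigl(s',\pi^\dagger(s')\bigr) \;-\; \sum_{s'} d^{\pi^\dagger\{s,a\}}_\rho(s')\,\phi\bigl(s',\pi^\dagger\{s,a\}(s')\bigr).
\]
By the triangle inequality, $\|v_{s,a}\|_2$ is bounded by the sum of the norms of its two constituents. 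Each constituent is a convex combination of unit-norm feature vectors (weighted by a sub-probability distribution), hence has norm at most $1$. Crucially, the two policies $\pi^\dagger$ and $\pi^\dagger\{s,a\}$ agree on all but one state, so after cancellation only a constant number of terms survive per column, which is what I would use to tighten the naive $\|v_{s,a}\|_2 \leq 2$ down to the constant that yields the advertised $\sqrt{SA}$. Summing $\|v_{s,a}\|_2^2$ over the $S(A-1) \leq SA$ columns and taking square roots gives the claim.

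For the second bound, I would use the variational form
\[
\sigma_{\max}(\Sigma_D) \;=\; \max_{\|x\|_2 = 1}\, x^\top \Sigma_D\, x,
\]
and the telescoped estimate $\|\phi(\tau)\|_2 \leq \sum_{t\geq 0} \gamma^t \|\phi(s_t,a_t)\|_2 \leq \tfrac{1}{1-\gamma}$, which follows from the triangle inequality and the feature bound. For a covariance of the form $\Sigma_D = \tfrac{1}{n}\sum_i z_i z_i^\top$ with $\|z_i\|_2 \leq 1/(1-\gamma)$ (by triangle inequality on $z_i = \phi(\tau_i) - \phi(\tau'_i)$ combined with the same discounted-sum estimate), one has $x^\top \Sigma_D x \leq \max_i \|z_i\|_2^2 \|x\|_2^2$, so Cauchy--Schwarz inside the quadratic form together with the geometric-series bound yields the stated estimate.

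I do not anticipate a serious obstacle here; both inequalities are essentially accounting exercises. The only mildly delicate point is the first one, where to reach the exact constant $\sqrt{SA}$ rather than $2\sqrt{SA}$ one must exploit the one-state difference between $\pi^\dagger$ and its neighbor $\pi^\dagger\{s,a\}$ so that the per-column squared norm is at most $O(1)$ rather than $4$. Everything else is a direct application of the triangle inequality and the boundedness assumptions already in force.
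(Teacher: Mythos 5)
Your overall strategy coincides with the paper's: for $M_{\pi^\dagger}$ the paper also passes through $\sigma_{\max}^2(M_{\pi^\dagger})\leq \Tr(M_{\pi^\dagger}M_{\pi^\dagger}^\top)=\sum_{s,a}\norm{v_{s,a}}^2$ and bounds each column norm via the triangle inequality, and for $\Sigma_D$ it also decomposes into rank-one terms and bounds $\sigma_{\max}(z z^\top)\leq \Tr(zz^\top)=\norm{z}^2$. So there is no methodological divergence. However, two specific steps in your plan do not go through as described.

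First, the cancellation you invoke to tighten $\norm{v_{s,a}}\leq 2$ down to a smaller constant is not available. The policies $\pi^\dagger$ and $\pi^\dagger\{s,a\}$ agree on all but one state, but the column of $M_{\pi^\dagger}$ is a difference of the corresponding \emph{occupancy-weighted} feature averages, and $d^{\pi^\dagger}_\rho$ and $d^{\pi^\dagger\{s,a\}}_\rho$ can differ on every state once $\gamma>0$, since changing the action at one state perturbs the entire downstream visitation distribution. Hence the terms do not telescope, and the honest per-column bound from this route is $\norm{v_{s,a}}\leq 2$, giving $\sigma_{\max}(M_{\pi^\dagger})\leq 2\sqrt{S(A-1)}\leq 2\sqrt{SA}$. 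You should not expect to recover the bare constant $\sqrt{SA}$ this way; note that the paper's own displayed proof of this lemma only reaches $\sqrt{2\min\{SA,d\}}$, so the discrepancy with the stated constant is inherited from the source, not something your argument is missing.

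Second, for $\Sigma_D$ your estimate $\norm{z_i}=\norm{\phi(\tau_i)-\phi(\tau'_i)}\leq 1/(1-\gamma)$ is off by a factor of two (the triangle inequality gives $2/(1-\gamma)$), and, more importantly, the quantity your argument actually controls is $\sigma_{\max}(\Sigma_D)\leq \max_i\norm{z_i}^2$, which is of order $1/(1-\gamma)^2$, not $1/(1-\gamma)$. So the route you describe, carried out correctly, yields $4/(1-\gamma)^2$ rather than the stated bound; there is no way to land on a quantity linear in $1/(1-\gamma)$ from $x^\top \Sigma_D x\leq \max_i\norm{z_i}^2\norm{x}^2$. (The paper's proof of this half silently replaces $\norm{z}^2$ by $\norm{z}$ in the last step and concludes $2/(1-\gamma)$, so again the stated constant is not actually certified by the argument in the source either.) If you carry this lemma forward into the downstream bounds, you should propagate $2\sqrt{SA}$ and $4/(1-\gamma)^2$, or flag the constants as loose.
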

\begin{proof}
    Note that
    \begin{align*}
        \norm{M_{\pi^\dagger}} & = \sqrt{\norm{M_{\pi^\dagger}^\top M_{\pi^\dagger}}} = \sqrt{\norm{M_{\pi^\dagger}M_{\pi^\dagger}^\top}} \\ &  \leq \sqrt{2\min\{ SA, d\}\max_{s,a}\norm{\phi(s,a)}} \\ & \leq \sqrt{2\min\{ SA, d\}}~. 
\end{align*} 
    Moreover, we have that
    \begin{align*}
        \sigma_{\max}\left(\Sigma_D\right) & = \sigma_{\max}\left(\frac{1}{n}\sum_{(\tau,\tau')\in D}\left(\phi(\tau)-\phi(\tau')\right)\left(\phi(\tau)-\phi(\tau')\right)^\top\right)\\
            & \leq \frac{1}{n}\sum_{(\tau,\tau')\in D} \sigma_{\max}\left( \left(\phi(\tau)-\phi(\tau')\right)\left(\phi(\tau)-\phi(\tau')\right)^\top\right)\\
        & \leq \frac{1}{n}\sum_{(\tau,\tau')\in D} Tr\left( \left(\phi(\tau)-\phi(\tau')\right)\left(\phi(\tau)-\phi(\tau')\right)^\top\right)\\
            & = \frac{1}{n}\sum_{(\tau,\tau')\in D} \left(\phi(\tau)-\phi(\tau')\right)^\top\left(\phi(\tau)-\phi(\tau')\right)^\top\\
        & \leq \frac{1}{n}\sum_{(\tau,\tau')\in D}\left(\norm{\phi(\tau)} + \norm{\phi(\tau')}\right)\\
            & \leq \frac{2}{1-\gamma}~,
    \end{align*}
    where $Tr(M)$ denotes the trace of matrix $M$.
\end{proof}

Next, we provide a characterization of the spectral properties of a matrix of interest for the RLHF setting.

\begin{lemma}\label{lem:spectrum_of_X_and_Y}
    Given parameter $\omega$, let $Y^\omega_{\cleandata}$ be defined as in \eqref{eq:Y_matrix_definition}. Then, the following inequalities hold:
    \begin{align}\label{eq:rlhf-warmup-augm-covmatrix-spectrum}
    \overline{n}\widetilde{C}^\phi_{\omega,\cleandata}C^\phi_{\cleandata} + 2\lambda  \leq \sigma_{\min}\left(Y^\omega_{\cleandata}\right)\leq \sigma_{\max}\left(Y^\omega_{\cleandata}\right)\leq \overline{n}\sigma_{\max}\left(\Sigma^\phi_{\cleandata}\right) + 2\lambda\leq \frac{\overline{n}}{1-\gamma}+2\lambda~,
\end{align}
and
\begin{align*}
    \frac{1-\gamma}{\overline{n} + 2(1-\gamma)\lambda} \leq \sigma_{\min}\left((Y^\omega_{\cleandata})^{-1}\right) \leq \sigma_{\max}((Y^\omega_{\cleandata})^{-1}) \leq \frac{1}{\overline{n}\widetilde{C}^\phi_{\omega,\cleandata}C^\phi_{\cleandata} +2\lambda}~,
\end{align*}
where 
\begin{align*}
    \widetilde{C}^\phi_{\omega,\cleandata} = \min_{(\tau,\tau',o)\in \cleandata} \frac{\exp(o\omega^\top\left(\phi(\tau)-\phi(\tau')\right)}{\left(1+\exp(o\omega^\top\left(\phi(\tau)-\phi(\tau')\right)\right)^2}~.
\end{align*}
\end{lemma}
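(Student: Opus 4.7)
The plan is to exploit the fact that $Y^\omega_{\cleandata}$ is a weighted sum of rank-one PSD matrices plus a multiple of the identity, and then read off spectral bounds from PSD orderings.

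First I would set $v_i = \phi(\tau_i)-\phi(\tau'_i)$ and $w_i^\omega = \exp(o_i\omega^\top v_i)/(1+\exp(o_i\omega^\top v_i))^2$, so that
\begin{align*}
Y^\omega_{\cleandata} = \sum_{i=1}^{\overline{n}} w_i^\omega\, v_i v_i^\top + 2\lambda I.
\end{align*}
The key elementary observation is that $w_i^\omega \in (0,1/4] \subseteq (0,1]$, and that $w_i^\omega \geq \widetilde C^\phi_{\omega,\cleandata}$ by the definition of the latter as the minimum of these weights. Since the matrices $v_i v_i^\top$ are PSD, scaling each by a non-negative weight preserves the Loewner order, giving the sandwich
\begin{align*}
\widetilde C^\phi_{\omega,\cleandata}\sum_{i=1}^{\overline{n}} v_i v_i^\top \;\preceq\; \sum_{i=1}^{\overline{n}} w_i^\omega v_i v_i^\top \;\preceq\; \sum_{i=1}^{\overline{n}} v_i v_i^\top.
\end{align*}
By definition $\sum_i v_i v_i^\top = \overline{n}\,\Sigma^\phi_{\cleandata}$, so adding $2\lambda I$ throughout yields
\begin{align*}
\overline{n}\,\widetilde C^\phi_{\omega,\cleandata}\,\Sigma^\phi_{\cleandata} + 2\lambda I \;\preceq\; Y^\omega_{\cleandata} \;\preceq\; \overline{n}\,\Sigma^\phi_{\cleandata} + 2\lambda I.
\end{align*}

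Next I would extract the eigenvalue bounds. Because both sides share eigenvectors with $\Sigma^\phi_{\cleandata}$ up to the shift by $2\lambda I$, monotonicity of eigenvalues under the PSD order gives
\begin{align*}
\sigma_{\min}(Y^\omega_{\cleandata}) &\geq \overline{n}\,\widetilde C^\phi_{\omega,\cleandata}\, \sigma_{\min}(\Sigma^\phi_{\cleandata}) + 2\lambda = \overline{n}\,\widetilde C^\phi_{\omega,\cleandata}\,C^\phi_{\cleandata} + 2\lambda, \\
\sigma_{\max}(Y^\omega_{\cleandata}) &\leq \overline{n}\,\sigma_{\max}(\Sigma^\phi_{\cleandata}) + 2\lambda.
\end{align*}
The final inequality $\overline{n}\,\sigma_{\max}(\Sigma^\phi_{\cleandata}) + 2\lambda \leq \overline{n}/(1-\gamma) + 2\lambda$ is just Lemma \ref{lem:spectral_bounds} (or the simple trace bound $\sigma_{\max}(\Sigma^\phi_{\cleandata})\leq \tfrac{1}{\overline{n}}\sum_i \|v_i\|^2 \leq 1/(1-\gamma)$ after using $\|\phi(\tau)\|\leq 1/(1-\gamma)$).

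Finally, since $Y^\omega_{\cleandata}$ is symmetric positive definite (both summands are PSD and the identity term is strictly PD for $\lambda>0$), it is invertible, and the eigenvalues of $(Y^\omega_{\cleandata})^{-1}$ are the reciprocals of those of $Y^\omega_{\cleandata}$. Therefore
\begin{align*}
\sigma_{\min}\bigl((Y^\omega_{\cleandata})^{-1}\bigr) = \frac{1}{\sigma_{\max}(Y^\omega_{\cleandata})} \geq \frac{1}{\overline{n}/(1-\gamma)+2\lambda} = \frac{1-\gamma}{\overline{n}+2(1-\gamma)\lambda},
\end{align*}
and analogously $\sigma_{\max}\bigl((Y^\omega_{\cleandata})^{-1}\bigr) \leq 1/(\overline{n}\widetilde C^\phi_{\omega,\cleandata}C^\phi_{\cleandata}+2\lambda)$, completing the proof. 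The whole argument is essentially routine once the PSD ordering is in place; the only mildly subtle point is ensuring each $w_i^\omega$ is bounded away from zero by $\widetilde C^\phi_{\omega,\cleandata}$, which is immediate from the definition of $\widetilde C^\phi_{\omega,\cleandata}$ as the minimum over the dataset. I do not anticipate a genuine obstacle here; the result follows directly from basic matrix analysis.
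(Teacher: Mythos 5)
Your proposal is correct and follows essentially the same route as the paper: lower-bound the weighted Gram sum by pulling out the minimum weight $\widetilde C^\phi_{\omega,\cleandata}$ and the minimum eigenvalue $C^\phi_{\cleandata}$ of $\Sigma^\phi_{\cleandata}$, upper-bound it via the trace/feature-norm bound, and invert the resulting positive-definite matrix. The only caveat is the last step $\sigma_{\max}(\Sigma^\phi_{\cleandata})\leq 1/(1-\gamma)$, where your justification $\|v_i\|^2\leq 1/(1-\gamma)$ does not follow from $\|\phi(\tau)\|\leq 1/(1-\gamma)$ (one only gets $\|v_i\|^2\leq 4/(1-\gamma)^2$); this is the same loose constant already present in the paper's Lemma~\ref{lem:spectral_bounds}, so it does not distinguish your argument from theirs.
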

\begin{proof}
The first inequality in  \eqref{eq:rlhf-warmup-augm-covmatrix-spectrum} follows from
\begin{align}
    \sigma_{\min}(Y^\omega_{\cleandata}) & \geq \sigma_{\min}\left(  \sum_{(\tau,\tau',o)\in \cleandata} \frac{\exp(o\omega^\top\left(\phi(\tau)-\phi(\tau')\right)}{\left(1+\exp(o\omega^\top\left(\phi(\tau)-\phi(\tau')\right)\right)^2}\left(\phi(\tau)-\phi(\tau')\right)\left(\phi(\tau)-\phi(\tau')\right)^\top + 2\lambda I\right) \nonumber\\
        & \geq \widetilde{C}^\phi_{\omega,\cleandata} \sigma_{\min}\left(\sum_{(\tau,\tau',o)\in \cleandata}\left(\phi(\tau)-\phi(\tau')\right)\left(\phi(\tau)-\phi(\tau')\right)^\top\right) + 2\lambda \label{eq:rlhf-warmup-augm-Yeq01}\\
    & \geq \overline{n}\widetilde{C}^\phi_{\omega,\cleandata}C^\phi_{\cleandata} + 2\lambda\label{eq:rlhf-warmup-augm-Yeq02}~,
\end{align}
where  \eqref{eq:rlhf-warmup-augm-Yeq01} uses the definition of $\widetilde{C}^\phi_{\omega,\cleandata}$
and  \eqref{eq:rlhf-warmup-augm-Yeq02} follows by the assumption on $\Sigma^\phi_D$.  The upper bound in  \eqref{eq:rlhf-warmup-augm-covmatrix-spectrum} follows from
\begin{align}
    \norm{Y^\omega_{\cleandata}} & = \sigma_{\max}\left(Y^\omega_{\cleandata}\right) \nonumber\\ & = \sigma_{\max}\left( \sum_{(\tau,\tau',o)\in \cleandata} \frac{\exp(o\omega^\top\left(\phi(\tau)-\phi(\tau')\right)}{\left(1+\exp(o\omega^\top\left(\phi(\tau)-\phi(\tau')\right)\right)^2}\left(\phi(\tau)-\phi(\tau')\right)\left(\phi(\tau)-\phi(\tau')\right)^\top + 2\lambda I\right) \nonumber\\
        & \leq \sum_{(\tau,\tau',o)\in \cleandata} \frac{\exp(o\omega^\top\left(\phi(\tau)-\phi(\tau')\right)}{\left(1+\exp(o\omega^\top\left(\phi(\tau)-\phi(\tau')\right)\right)^2}\sigma_{\max}\left(\left(\phi(\tau)-\phi(\tau')\right)\left(\phi(\tau)-\phi(\tau')\right)^\top\right) + 2\lambda \nonumber\\
    & \leq \sum_{(\tau,\tau',o)\in \cleandata} \frac{1}{2} Tr\left(\left(\phi(\tau)-\phi(\tau')\right)\left(\phi(\tau)-\phi(\tau')\right)^\top\right) + 2\lambda \label{eq:rlhf-warmup-augm-Yeq03}\\
        & = \sum_{(\tau,\tau',o)\in \cleandata} \frac{1}{2} \left(\phi(\tau)-\phi(\tau')\right)^\top\left(\phi(\tau)-\phi(\tau')\right) + 2\lambda \nonumber\\
    & \leq \frac{\overline{n}}{1-\gamma} + 2\lambda\label{eq:rlhf-warmup-augm-Yeq04}~,
\end{align}
where  \eqref{eq:rlhf-warmup-augm-Yeq03} follows from the fact that
\begin{align*}
    \sigma_{\max}(vv^\top) \leq Tr\left(vv^\top\right) = v^\top v~,
\end{align*}
for any nonzero vector $v$, and the fact that
\begin{align*}
    \frac{x}{(1+x)^2} \leq \frac{1}{2}
\end{align*}
for all positive $x$;  \eqref{eq:rlhf-warmup-augm-Yeq04} follows from 
\begin{align*}
    \left(\phi(\tau)-\phi(\tau')\right)^\top\left(\phi(\tau)-\phi(\tau')\right) \leq 2\max_\tau\norm{\phi(\tau)} = \frac{2}{1-\gamma}~.
\end{align*}
This concludes the first part of our result.

For the characterization of the eigenspectrum of the inverse of $Y^\omega_{\cleandata}$, we make use of the above derivations and immediately observe that
\begin{align}\label{eq:rlhf-warmup-augm-Ymatrix_lower}
    \sigma_{\min}\left((Y^\omega_{\cleandata})^{-1}\right) \geq \frac{1}{\sigma_{\max}(Y^\omega_{\cleandata})} \geq \frac{1-\gamma}{\overline{n} + 2(1-\gamma)\lambda}~,
\end{align}
and 
\begin{align}
    \sigma_{\max}((Y^\omega_{\cleandata})^{-1}) \leq \frac{1}{\overline{n}\widetilde{C}^\phi_{\omega,\cleandata}C^\phi_{\cleandata} +2\lambda}~.\label{eq:rlhf-warmup-augm-Ymatrix_upper}
\end{align}
\end{proof}

Next, we show that the KL divergence of two loglinear policies is Lipshcitz with respect to their respective parameters.
\begin{lemma}\label{lem:kl-lipschitz}
    Let $\pi_\theta$ and $\pi_{\theta'}$ be two loglinear policies with respect to feature mapping $\psi$. Then, 
    \begin{align*}
        D_\textnormal{KL}\left(\pi_\theta||\pi_{\theta'}\right) \leq 2\norm{\theta-\theta'}~.
    \end{align*}
\end{lemma}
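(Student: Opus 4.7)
The plan is to view $D_\textnormal{KL}(\pi_\theta \| \pi_{\theta'})$ as a function of $\theta'$ (with $\theta$ held fixed), show that it vanishes at $\theta' = \theta$, and then bound it by integrating a uniformly bounded gradient along the segment from $\theta$ to $\theta'$. Since $D_\textnormal{KL}(\pi_\theta \| \pi_\theta) = 0$ and the KL divergence is smooth in the loglinear parameter, the fundamental theorem of calculus gives
\begin{align*}
D_\textnormal{KL}(\pi_\theta \| \pi_{\theta'}) = \int_0^1 \bigl\langle \nabla_{\theta'} D_\textnormal{KL}(\pi_\theta \| \pi_{\theta_t}),\, \theta'-\theta \bigr\rangle\, dt,
\end{align*}
where $\theta_t = \theta + t(\theta'-\theta)$, and hence it suffices to show $\|\nabla_{\theta'} D_\textnormal{KL}(\pi_\theta \| \pi_{\theta'})\| \leq 2$ uniformly.

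For the gradient computation, I would use the loglinear form $\log \pi_{\theta'}(a|s) = \psi(s,a)^\top\theta' - \log\sum_{a''}\exp(\psi(s,a'')^\top\theta')$. Differentiating, only the term $-\sum_{s,a}\rho(s)\pi_\theta(a|s)\log \pi_{\theta'}(a|s)$ depends on $\theta'$, so a direct calculation (identical to the one producing $\nabla_\theta \pi_\theta$ in the proof of Lemma~\ref{lem:surrogate_connection_dpo}) yields
\begin{align*}
\nabla_{\theta'} D_\textnormal{KL}(\pi_\theta \| \pi_{\theta'}) = -\sum_{s,a}\rho(s)\pi_\theta(a|s)\Bigl(\psi(s,a) - \sum_{a'}\pi_{\theta'}(a'|s)\psi(s,a')\Bigr).
\end{align*}

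To bound the gradient norm, I would apply the triangle inequality together with $\max_{s,a}\|\psi(s,a)\| \leq 1$ (from Definition~\ref{def:loglinear_policies}), giving $\|\psi(s,a) - \sum_{a'}\pi_{\theta'}(a'|s)\psi(s,a')\| \leq 2$ for every $(s,a)$. Taking the expectation with respect to $\rho(s)\pi_\theta(a|s)$ preserves this bound, so $\|\nabla_{\theta'} D_\textnormal{KL}(\pi_\theta \| \pi_{\theta'})\| \leq 2$. Substituting into the integral representation and applying Cauchy--Schwarz then gives $D_\textnormal{KL}(\pi_\theta \| \pi_{\theta'}) \leq 2\|\theta-\theta'\|$. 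There is no real obstacle here; the only point requiring a small amount of care is verifying that the sign and the expectation structure produce exactly the advantage-like difference $\psi(s,a) - \mathbb{E}_{a'\sim \pi_{\theta'}}[\psi(s,a')]$, since it is this centering that lets each term in the outer sum be bounded by $2$ independently of the policy.
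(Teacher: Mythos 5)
Your proof is correct. The gradient formula $\nabla_{\theta'} D_\textnormal{KL}(\pi_\theta\|\pi_{\theta'}) = -\sum_{s,a}\rho(s)\pi_\theta(a|s)\bigl(\psi(s,a)-\sum_{a'}\pi_{\theta'}(a'|s)\psi(s,a')\bigr)$ is right, each centered feature has norm at most $2$ since $\max_{s,a}\norm{\psi(s,a)}\leq 1$, the weights $\rho(s)\pi_\theta(a|s)$ form a probability distribution so the bound survives the averaging, and the integral representation starting from $D_\textnormal{KL}(\pi_\theta\|\pi_\theta)=0$ then gives the claim via Cauchy--Schwarz.

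The route is genuinely different from the paper's, though both are elementary and both rest on $\norm{\psi}\leq 1$. The paper expands $\log\pi_\theta(a|s)-\log\pi_{\theta'}(a|s)$ explicitly into the linear term $(\theta-\theta')^\top\psi(s,a)$ plus the difference of log-partition functions, and bounds each of the two pieces by $\norm{\theta-\theta'}$ separately (the second via $1$-Lipschitzness of log-sum-exp), so the constant $2$ arises as $1+1$ across two terms. You instead treat the KL as a function of $\theta'$ alone, exploit that it vanishes at $\theta'=\theta$, and push the entire bound through a single uniform gradient estimate, where the $2$ comes from the diameter of the centered feature $\psi(s,a)-\expp_{a'\sim\pi_{\theta'}}[\psi(s,a')]$. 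Your version has the small added benefit of showing that $\theta'\mapsto D_\textnormal{KL}(\pi_\theta\|\pi_{\theta'})$ is globally $2$-Lipschitz, not merely bounded relative to the base point; the paper's version avoids any appeal to differentiability or the fundamental theorem of calculus. Either argument suffices for how the lemma is used.
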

\begin{proof}
    Note that 
    \begin{align*}
        & D_\textnormal{KL}\left(\pi_\theta||\pi_{\theta'}\right) = \sum_{s,a}\pi_\theta(a|s)\left(\log\pi_\theta(a|s)-\log\pi_{\theta'}(a|s)\right)\\
            & = \sum_{s,a}\pi_\theta(a|s)\left(\theta-\theta'\right)^\top\psi(s,a) + \sum_{s}\rho(s)\left(\log\sum_{a'}\exp\left((\theta')^\top\psi(s,a')\right) - \log\sum_{a'}\exp\left(\theta^\top\psi(s,a')\right)\right)\\
        & \leq \sum_{s,a}\rho(s)\pi_\theta(a|s)\norm{\theta-\theta'}\norm{\psi(s,a)} + \sum_s\rho(s)\norm{\theta-\theta'}\norm{\psi(s,a)}\\
            & \leq 2\norm{\theta-\theta'}~,
    \end{align*}
    where the third inequality uses Cauchy-Schwarz and the fact that the log-sum-exp function is Lipshcitz with parameter $1$, since 
    \begin{align*}
        \norm{\nabla_\theta \sum_s\rho(s)\log\sum_{a}\exp\left(\theta^\top\psi(s,a)\right)} = \norm{\sum_{s,a}\rho(s)\pi_\theta(a|s)\psi(s,a)} \leq \max_{s,a}\norm{\psi(s,a)} \leq 1~.
    \end{align*}
\end{proof}

Our next result states some nice properties of the regularized logistic regression loss. These implications are easy to prove. Nevertheless, we provide the full proofs for completion.

\begin{lemma}\label{lem:logistic_regression_properties}
    Given dataset $D=\{(x_i,y_i)\}^n_{i=1}$, let 
    \begin{align*}
        \ell^\upsilon(D) = \sum_{(x,y)\in D}\log\left(1+\exp\left(\beta\cdot y \upsilon^\top x + b\right)\right) + \frac{\lambda}{2}\norm{\upsilon - \zeta}^2
    \end{align*}
    denote a regularized logistic regression loss with respect to $\upsilon\in\mathbb{R}^d$, for some $d\in \mathbb{N}$, where $b\in\mathbb{R}$ , $\zeta\in\mathbb{R}^{d}$, and $\lambda > 0$. Moreover, let
    \begin{align*}
        \Sigma_D = \frac{1}{n}\sum_{x \in D}xx^\top
    \end{align*}
    be the data covariance matrix with minimum eigenvalue $\sigma$, and let $\upsilon^*$ denote an optimal point for $\ell^\upsilon(D)$. 
    Then, the following hold:
    \begin{enumerate}
        \item The function $\ell^\upsilon(D)$ is strongly convex with parameter $n\beta C_\upsilon\sigma + \lambda$.
        \item We have $\norm{\nabla_\upsilon\ell_\upsilon(D)}\geq 2(n\beta C_\upsilon \sigma + \lambda)\norm{\upsilon-\upsilon^*}$.
        \item The function
        \begin{align*}
            X^\upsilon_D = \sum_{(x,y)\in D}\frac{\beta\cdot y}{1+\exp\left(\beta\cdot y \upsilon^\top x + b\right)}x
        \end{align*}
        is Lipschitz with parameter $n$.
        \item  We have $\norm{\nabla_\upsilon \ell^\upsilon(D)}\leq (n\beta+\lambda)\norm{\upsilon-\upsilon^*}$.
    \end{enumerate}
\end{lemma}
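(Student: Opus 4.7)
All four claims are standard consequences of computing one or two derivatives of $\ell^\upsilon(D)$ and bounding the resulting scalar weights, so my plan is to organize the calculation once and then read off each statement.

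\textit{Step 1: Derivatives.} I would first compute
\[
\nabla_\upsilon \ell^\upsilon(D) = \sum_{(x,y)\in D} \frac{\beta y\,\exp(\beta y\upsilon^\top x+b)}{1+\exp(\beta y\upsilon^\top x + b)}\,x + \lambda(\upsilon-\zeta) = -X^\upsilon_D + \text{const.} + \lambda(\upsilon-\zeta),
\]
and, using $y^2=1$,
\[
\nabla^2_\upsilon \ell^\upsilon(D) = \sum_{(x,y)\in D} \frac{\beta^2\,\exp(\beta y\upsilon^\top x+b)}{(1+\exp(\beta y\upsilon^\top x+b))^2}\,xx^\top + \lambda I.
\]
The Jacobian of $X^\upsilon_D$ is, up to sign, the first (unregularized) summand of this Hessian. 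This one calculation serves as the backbone for all four items.

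\textit{Step 2: Items (1) and (2).} For (1), I lower-bound each scalar coefficient in the Hessian by the data-dependent quantity $C_\upsilon$ (the minimum over $(x,y)\in D$ of $\beta\exp(\beta y\upsilon^\top x+b)/(1+\exp(\beta y\upsilon^\top x+b))^2$), which yields $\nabla^2\ell^\upsilon(D) \succeq \beta C_\upsilon \sum_{(x,y)} xx^\top + \lambda I \succeq (n\beta C_\upsilon \sigma + \lambda)I$ by definition of $\Sigma_D$ and $\sigma$; this is the stated strong-convexity modulus $\mu$. For (2), standard strong convexity combined with $\nabla \ell^\upsilon(D)(\upsilon^*)=0$ and the fundamental theorem of calculus along the segment from $\upsilon^*$ to $\upsilon$ gives $\|\nabla \ell^\upsilon(D)\| \geq \mu \|\upsilon-\upsilon^*\|$; the factor of $2$ in the statement should drop out of the integral representation $\nabla \ell(\upsilon) = \int_0^1 \nabla^2\ell(\upsilon^*+t(\upsilon-\upsilon^*))(\upsilon-\upsilon^*)\,dt$ when one retains both end-point contributions.

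\textit{Step 3: Items (3) and (4).} For (3), I would bound the spectral norm of the Jacobian of $X^\upsilon_D$ by $\sum_{(x,y)} \frac{\beta^2 \exp(\cdot)}{(1+\exp(\cdot))^2}\|x\|^2 \leq \frac{\beta^2 n}{4}\max_x \|x\|^2$; together with the feature-norm bound inherited from the paper's standing assumption $\max\|x\|\leq 2/(1-\gamma)$ in the RLHF instantiation (or $\|\psi\|\le 1$ in the DPO instantiation with $\beta$ absorbed), this collapses to $n$, yielding Lipschitzness of $X^\upsilon_D$. For (4), I upper-bound the full Hessian's spectral norm by the same sigmoid-derivative argument plus $\lambda$ from the regularizer, obtaining smoothness constant $L=n\beta+\lambda$; then by $L$-smoothness and $\nabla\ell(\upsilon^*)=0$, $\|\nabla\ell^\upsilon(D)\|=\|\nabla\ell^\upsilon(D)-\nabla\ell^{\upsilon^*}(D)\|\leq L\|\upsilon-\upsilon^*\|$.

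\textit{Main obstacle.} The calculus is elementary; the only subtle point is matching the exact numerical constants stated in the lemma ($n$, $n\beta$, $n\beta C_\upsilon\sigma$). These rely on implicit context-dependent normalizations of the features $x$ and of the temperature $\beta$ that are absorbed into $C_\upsilon$ or into the standing feature-norm bound. I would make those normalizations explicit at the outset and then the four inequalities follow directly from the Hessian/Jacobian expression above.
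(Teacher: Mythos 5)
Your proposal follows essentially the same route as the paper: compute the gradient and Hessian once, lower-bound the Hessian by $(n\beta C_\upsilon\sigma+\lambda)I$ for item (1), invoke strong convexity for item (2), and bound the sigmoid-weighted sums for items (3) and (4); you are in fact slightly more careful than the paper on item (3), where the paper only bounds $\norm{X^\upsilon_D}$ itself rather than its Jacobian. The one place your argument does not actually close is the factor of $2$ in item (2): the integral representation $\nabla\ell(\upsilon)=\int_0^1\nabla^2\ell(\upsilon^*+t(\upsilon-\upsilon^*))(\upsilon-\upsilon^*)\,dt$ combined with $\nabla^2\ell\succeq\mu I$ yields only $\norm{\nabla\ell(\upsilon)}\geq\mu\norm{\upsilon-\upsilon^*}$, not $2\mu\norm{\upsilon-\upsilon^*}$, so "retaining both end-point contributions" does not produce the extra factor; the paper's own proof merely asserts item (2) as a "direct implication" of strong convexity without deriving the $2$ either, so this appears to be a constant-level issue in the lemma statement rather than a flaw specific to your approach.
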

\begin{proof}
    First, note that the Hessian of $\ell^\upsilon(D)$ can be written as
    \begin{align*}
        \nabla^2_\upsilon\ell^\upsilon(D) & = \sum_{(x,y)\in D}\frac{\beta\exp(\beta\cdot y\upsilon^\top x+b)}{\left(1+\exp(\beta\cdot y\upsilon^\top x+b)\right)^2}xx^\top + \lambda I\\
            & \succeq \beta nC_\upsilon \frac{1}{n}\sum_{(x,y)\in D}xx^\top + \lambda I\\
        & \succeq (n\beta C_\upsilon \sigma + \lambda) I~,
    \end{align*}
    where we have only used the assumptions of the statement. This means that $\ell^\upsilon(D)$ is strongly convex with parameter $n\beta C_\upsilon\sigma + \lambda$. The second statement is a direct implication of this. 

    For the third statement, first note that we have
    \begin{align*}
        \norm{X^\upsilon_D} & = \norm{\sum_{(x,y)\in D}\frac{-\beta x}{1+\exp\left(\beta\cdot y\upsilon^\top x +b\right)}}\\
            & \leq \beta n.
    \end{align*}
    Then, we can write
    \begin{align*}
        \norm{\nabla_\upsilon\ell^\upsilon(D)} & = \norm{\nabla_\upsilon\ell^\upsilon(D)-\nabla^\upsilon\ell^{\upsilon^*}(D)}\\
            & = \norm{X^\upsilon_D+\lambda(\upsilon-\zeta) - X^{\upsilon^*}_D + \lambda(\upsilon^*-\zeta)} \\
        & \leq n\beta \norm{\upsilon-\upsilon^*} + \lambda\norm{\upsilon-\upsilon^*}\\
            & \leq (n\beta+\lambda)\norm{\upsilon-\upsilon^*}~,
    \end{align*}
    where the penultimate inequality uses Lipschitzness of $X^\upsilon_D$.
\end{proof}

The next result is used for the solution of the logistic regression subproblems. 
    \begin{theorem}[Proposition 3 of \citep{DBLP:journals/jmlr/LiuZ16}]\label{thm:logistic_regression_teaching}
        Given any target model ${\omega^\dagger}\neq \mathbf{0}$, the following is a teaching set for the logistic regression problem
        \begin{align*}
            \min_{D'}\; &  |D'|\;\; 
            \textnormal{such that}\;\;  {\omega^\dagger} \in \arg\min_\omega \sum_{(x_i,y_i)\in D'}\log\left(1+\exp\left(-y_ix_i^\top\omega\right)\right) + \frac{\lambda}{2}\norm{\omega}^2~.
        \end{align*}
        There are $\widehat{n} = \left\lceil \frac{\lambda\norm{{\omega^\dagger}}^2}{\xi_{\max}}\right\rceil$
        identical training samples, each taking the form
        \begin{align*}
            x_i = \xi^{-1}\left(\frac{\lambda\norm{{\omega^\dagger}}^2}{\left\lceil\frac{\lambda\norm{\omega^\dagger}^2}{\xi_{\max}}\right\rceil}\right)\frac{\omega^\dagger}{\norm{{\omega^\dagger}}^2},\;\;\; y_i = 1~.
        \end{align*}
    \end{theorem}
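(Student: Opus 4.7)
The plan is to verify the statement in two parts: first, that the proposed dataset of $\widehat{n}$ identical samples indeed makes $\omega^\dagger$ a minimizer (feasibility of the construction); second, that no dataset of fewer samples can achieve this (optimality of $\widehat{n}$). Since the regularized logistic loss is strongly convex in $\omega$, checking the first-order condition $\nabla_\omega \ell(\omega^\dagger) = \mathbf{0}$ will be both necessary and sufficient for $\omega^\dagger$ to be the unique minimizer.

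For the feasibility part, I would plug the proposed samples into the gradient. With $y_i = 1$ and $x_i = z \cdot \omega^\dagger / \norm{\omega^\dagger}^2$ where $z := \xi^{-1}\!\left(\lambda \norm{\omega^\dagger}^2 / \widehat{n}\right)$, every sample has $x_i^\top \omega^\dagger = z$. Hence
\begin{align*}
\nabla_\omega \ell(\omega^\dagger) = -\widehat{n} \cdot \frac{z}{1+\exp(z)} \cdot \frac{\omega^\dagger}{\norm{\omega^\dagger}^2} + \lambda \omega^\dagger.
\end{align*}
By the defining property of $\xi^{-1}$, we have $z/(1+\exp(z)) = \lambda \norm{\omega^\dagger}^2 / \widehat{n}$, so the two terms cancel and the gradient vanishes. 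A small subtlety to check is that $\xi^{-1}$ is well-defined at the given argument, i.e., $\lambda \norm{\omega^\dagger}^2 / \widehat{n} \leq \xi_{\max}$; this is exactly ensured by the ceiling in the definition $\widehat{n} = \lceil \lambda \norm{\omega^\dagger}^2 / \xi_{\max} \rceil$.

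For the lower bound on $|D'|$, I would take any candidate dataset $D' = \{(x_i,y_i)\}_{i=1}^{n}$ for which $\omega^\dagger$ is optimal. The first-order condition reads
\begin{align*}
-\sum_{i=1}^{n} \frac{y_i x_i}{1+\exp(y_i x_i^\top \omega^\dagger)} + \lambda \omega^\dagger = \mathbf{0}.
\end{align*}
Taking the inner product of both sides with $\omega^\dagger$ yields
\begin{align*}
\lambda \norm{\omega^\dagger}^2 = \sum_{i=1}^{n} \frac{y_i x_i^\top \omega^\dagger}{1+\exp(y_i x_i^\top \omega^\dagger)} \leq n \cdot \xi_{\max},
\end{align*}
where each summand is bounded by $\xi_{\max} = \max_t t/(1+\exp(t))$. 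Rearranging gives $n \geq \lambda \norm{\omega^\dagger}^2 / \xi_{\max}$, and since $n$ is an integer, $n \geq \widehat{n}$, matching the construction.

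The main obstacle, if any, is minor: ensuring that the construction is actually realizable (that $\xi^{-1}$ is applied within its domain) and that strong convexity of the regularized loss rigorously reduces the optimality verification to checking $\nabla_\omega \ell(\omega^\dagger) = 0$. Both points are handled by the ceiling in $\widehat{n}$ and by noting that $\lambda > 0$ makes $\ell$ strictly convex, so the first-order stationary point is the unique global minimizer. No further machinery is needed beyond the inner-product trick used in the lower bound.
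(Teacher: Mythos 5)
Your proof is correct and follows essentially the standard argument for this result (which the paper imports without proof from Liu and Zhu, and mirrors in its own Lemmas~\ref{lem:attack_subproblem_rlhf} and~\ref{lem:attack_subproblem_dpo}): verify the first-order condition for the constructed samples and invoke strong convexity for sufficiency, then obtain the matching lower bound by taking the inner product of the stationarity condition with $\omega^\dagger$ and bounding each summand by $\xi_{\max}$. Nothing further is needed.
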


\end{document}